\newcommand{\projectname}{\textrm{Constrained Posterior Sampling }}
\newcommand{\projectabbr}{\textrm{CPS}}
\newcommand{\timeweavercsdi}{\textsc{Time Weaver-CSDI}}
\newcommand{\rebuttal}[1]{{\color{black}#1}}
\newcommand{\sampledomain}{\mathbb{R}^{K \times L}}
\newcommand{\scalerdomain}{\mathbb{R}}
\newcommand{\numchannels}{K}
\newcommand{\horizon}{L}
\newcommand{\dataset}{\mathcal{D}}
\newcommand{\constraintset}{\mathcal{C}}
\newcommand{\denoiser}{\epsilon_\theta}
\newcommand{\optdenoiser}{\epsilon^*}
\newcommand{\nsamples}{N_{\mathrm{D}}}
\newcommand{\nconstraints}{N_{\mathrm{C}}}
\newcommand{\truedistr}{P_\mathrm{data}}
\newcommand{\truedensity}{p_\mathrm{data}}
\newcommand{\gen}{x^{\mathrm{gen}}}
\newcommand{\standardnormal}{\mathcal{N}(\mathbf{0}, \mathbf{I})}
\newcommand{\proofdomain}{\mathbb{R}^n}
\newcommand{\sampledomaini}{\mathbf{I}}
\newcommand{\proofdomaini}{\mathbf{I}_n}
\newcommand{\proofstandardnormal}{\mathcal{N}(\mathbf{0}_n, \mathbf{I}_n)}
\newcommand{\alphabar}{\bar{\alpha}_t}
\newcommand{\oneminusalphabar}{(1-\bar{\alpha}_t)}
\newcommand{\sqrtalphabar}{\sqrt{\bar{\alpha}_t}}
\newcommand{\sqrtoneminusalphabar}{\sqrt{1-\bar{\alpha}_t}}
\newcommand{\diffusioncoefficients}{\bar{\alpha}_0, \dots, \bar{\alpha}_T}
\newcommand{\alphabarprev}{\bar{\alpha}_{t-1}}
\newcommand{\oneminusalphabarprev}{(1-\bar{\alpha}_{t-1})}
\newcommand{\sqrtalphabarprev}{\sqrt{\bar{\alpha}_{t-1}}}
\newcommand{\sqrtoneminusalphabarprev}{\sqrt{1-\bar{\alpha}_{t-1}}}
\newcommand{\sqrtoneminusalphabarprevcontrol}{\sqrt{1 - \bar{\alpha}_{t-1} - \sigma_t^2}}
\newcommand{\control}{\sigma_t}
\newcommand{\zohat}{\hat{z}_0}
\newcommand{\zohatzt}{\hat{z}_0(z_t; \denoiser)}
\newcommand{\zohatzone}{\hat{z}_0(z_1; \denoiser)}
\newcommand{\zohatprojzt}{\hat{z}_{0, \mathrm{pr}}(z_t; \denoiser)}
\newcommand{\zohatprojzone}{\hat{z}_{0, \mathrm{pr}}(z_1; \denoiser)}
\newcommand{\zohatprojztopt}{\hat{z}_{0, \mathrm{pr}}(z_t;\epsilon^*)}
\newcommand{\gtsoln}{x^*}
\newcommand{\inversemat}{\left[\proofdomaini + \gamma(t)A^TA\right]^{-1}}
\newcommand{\mat}{\left[\proofdomaini + \gamma(t)A^TA\right]}
\newcommand{\Kt}{\left[\sqrtalphabarprev \sqrtalphabar \inversemat + \sqrtoneminusalphabarprev \sqrtoneminusalphabar \proofdomaini\right]}
\newcommand{\spectralnorm}{\sigma_{\mathrm{max}}}
\newcommand{\lambdamin}{\lambda_{\mathrm{min}}}
\newcommand{\lambdamax}{\lambda_{\mathrm{max}}}
\newcommand{\equalconstraint}{f_{\mathcal{C}}}
\newcommand{\nproj}{N_{\mathrm{pr}}}
\newcommand{\stepsize}{\eta}
\newcommand{\optstepinit}{\leftidx^0{\hat{z}_0(z_t;\denoiser)}}
\newcommand{\optstepend}{\leftindex^{N_{\mathrm{pr}}}{\hat{z}_0(z_t;\denoiser)}}
\newcommand{\optstepn}{\leftidx^{n}{\hat{z}_0(z_t;\epsilon_\theta)}}
\newcommand{\optstepnminusone}{\leftindex^{n-1}{\hat{z}_0(z_t;\denoiser)}}
\newcommand{\penaltyproof}{\| f_{\mathcal{C}}(z) \|_2^2}
\newcommand{\constraintfnoutputdomain}{\mathbb{R}}
\newcommand{\prooftruedistr}{\mathcal{N}(\mathbf{\mu}, \mathbf{I}_n)}
\newcommand{\initialsamplingdistrabbr}{p_{\theta,\mathrm{init}}(z_0 \mid \zohatprojzone)}
\newcommand{\initialsamplingdistrabbrunperturbed}{p_{\theta,\mathrm{init}}(z_0 \mid \zohatzone)}
\newcommand{\terminalerror}{\| \gen - \gtsoln \|_2}
\newcommand{\spectralnormubnotation}{\lambda_k}
\newcommand{\specificgamma}{\frac{2k(T-t+1)}{\lambdamin(A^TA)}}
\newcommand{\specificgammaone}{\frac{2kT}{\lambdamin(A^TA)}}
\newcommand{\inversematone}{[\proofdomaini + \gamma(1)A^TA]^{-1}}
\newcommand{\matone}{[\proofdomaini + \gamma(1)A^TA]}
\def\Figref#1{Figure~\ref{#1}}
\def\eqref#1{equation~\ref{#1}}
\def\1{\bm{1}}
\DeclareMathAlphabet{\mathsfit}{\encodingdefault}{\sfdefault}{m}{sl}
\SetMathAlphabet{\mathsfit}{bold}{\encodingdefault}{\sfdefault}{bx}{n}
\DeclareMathOperator*{\argmin}{arg\,min}
\theoremstyle{plain}
\newtheorem{theorem}{Theorem}[section]
\newtheorem{lemma}[theorem]{Lemma}
\theoremstyle{definition}
\newtheorem{assumption}[theorem]{Assumption}
\theoremstyle{remark}
\title{Constrained Posterior Sampling: Time Series Generation with Hard Constraints}
\author{%
  Sai Shankar Narasimhan \\
  \And
  Shubhankar Agarwal \\
  \And 
  Litu Rout
  \\
  \AND
  Sanjay Shakkottai
  \\
  \And 
  Sandeep Chinchali
  \\ \And
  \normalfont Chandra Family Department of Electrical and Computer Engineering,
  \\
  \normalfont The University of Texas at Austin, Austin, TX, 78712.
  \\
\texttt{\{nsaishankar,somi.agarwal,litu.rout,sanjay.shakkottai,sandeepc\}@utexas.edu}
}
\begin{document}

\maketitle

\begin{abstract}
Generating realistic time series samples is crucial for stress-testing models and protecting user privacy by using synthetic data. In engineering and safety-critical applications, these samples must meet certain hard constraints that are domain-specific or naturally imposed by physics or nature. Consider, for example, generating electricity demand patterns with constraints on peak demand times. This can be used to stress-test the functioning of power grids during adverse weather conditions. Existing approaches for generating constrained time series are either not scalable or degrade sample quality. To address these challenges, we introduce Constrained Posterior Sampling (CPS), a diffusion-based sampling algorithm that aims to project the posterior mean estimate into the constraint set after each denoising update. Notably, CPS scales to a large number of constraints ($\sim100$) without requiring additional training. We provide theoretical justifications highlighting the impact of our projection step on sampling. Empirically, CPS outperforms state-of-the-art methods in sample quality and similarity to real time series by around 70\% and 22\%, respectively, on real-world stocks, traffic, and air quality datasets.
\end{abstract}

\section{Introduction}
\label{sec:introduction}

\begin{wrapfigure}{r}{0.55\textwidth}
  \vskip -0.2in
   \includegraphics[width=0.5\textwidth]{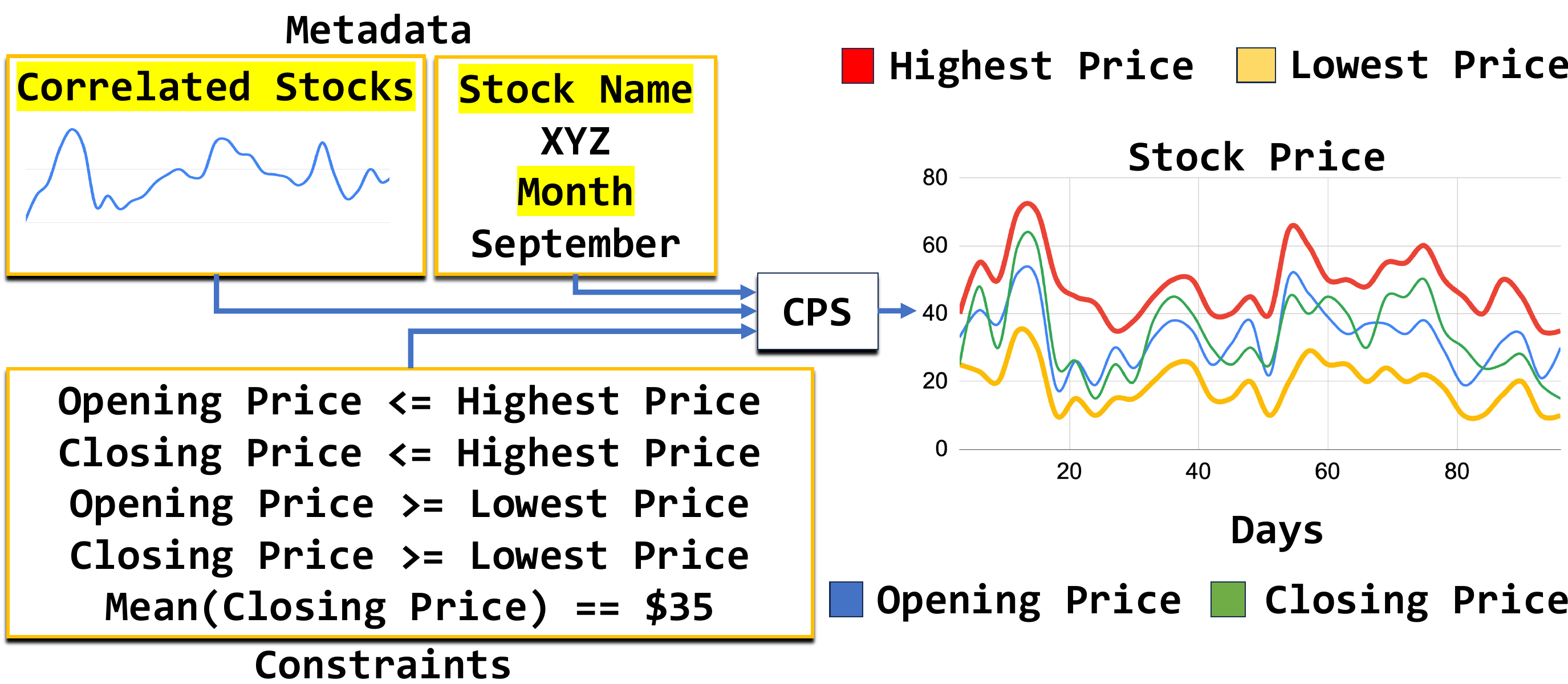}
  \vskip -0.1in
  \caption{{\textbf{Our Proposed \projectname(\projectabbr).} \projectabbr\ is a novel diffusion-based sampling approach to generate time series samples that satisfy hard constraints. Here, we show an example where \projectabbr\ generates a daily stock price time series with natural constraints such as the bounds on the opening and closing prices of the stock.}}
  \vskip -0.1in
  \label{fig:motivation}
\end{wrapfigure}
 Realistic time series generation plays a crucial role in tasks such as ``what-if'' scenario analysis \cite{narasimhan2024time}, stress-testing machine learning models \cite{rizza2022stress, gowal2021improving}, and anonymizing private user data \cite{yoon2020anonym}. Current approaches leverage state-of-the-art generative models, including Diffusion Models (DMs) \citep{tashiro2021csdi, alcaraz2023diffusion, narasimhan2024time} and Generative Adversarial Networks (GANs) \citep{yoon2019time, donahue2018adversarial}, to sample from time series distributions. Ensuring realism and high fidelity through constraint satisfaction remains a key challenge in domain-specific time series generation. For example, generating daily Open-High-Low-Close (OHLC) chart for stock prices of an S\&P 500 company requires the high and low values to bound the opening and closing prices (Figure \ref{fig:motivation}). Similarly, generating stock price time series with user-specified volatility to stress-test trading strategies necessitates adherence to volatility constraints, as inaccuracies in the generated samples lead to inaccurate stress-testing.
\vspace{1em}\par
\begin{wrapfigure}{r}{0.55\textwidth}
\vskip -0.1in
   \includegraphics[width=0.55\textwidth]{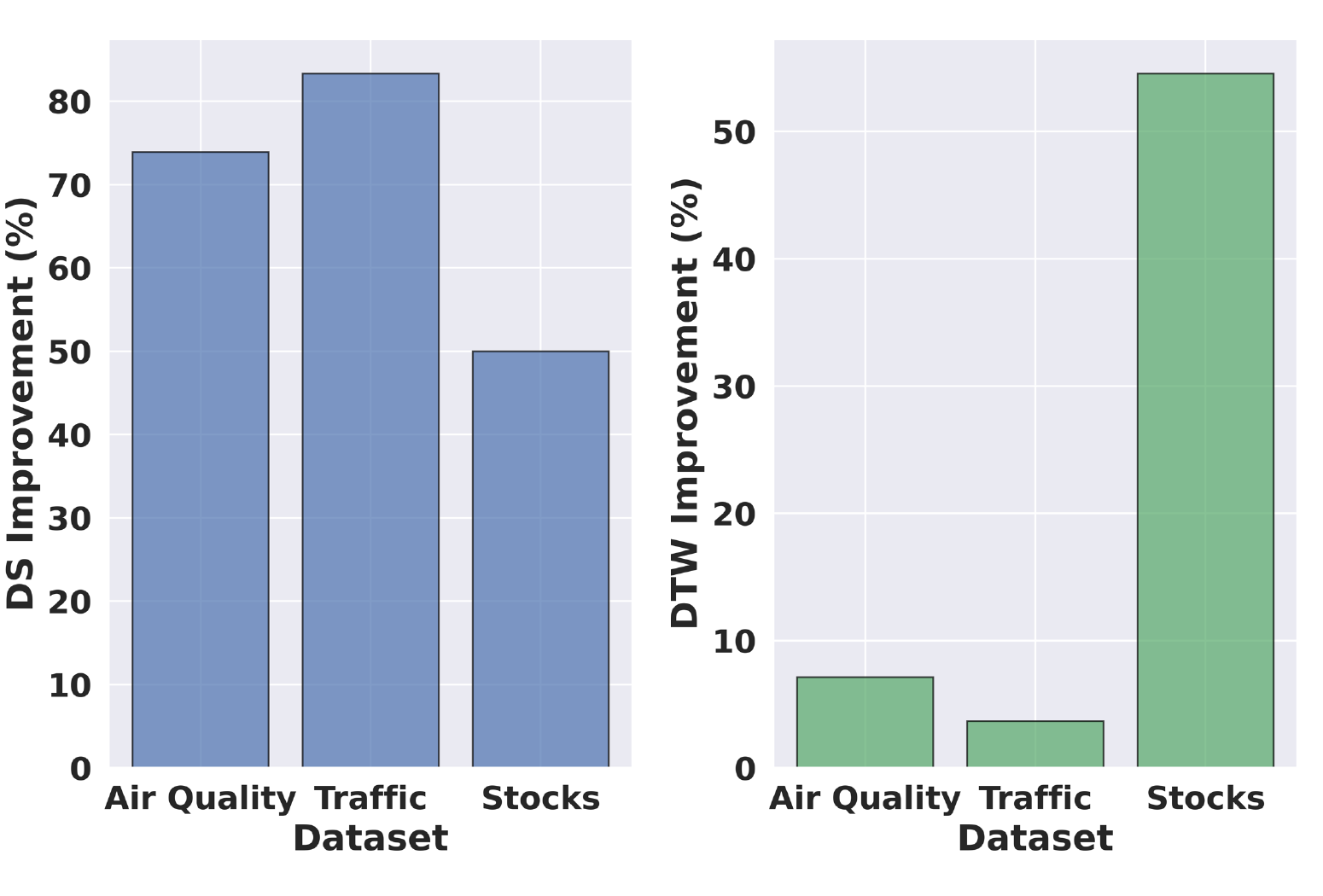}
  \vskip -0.1in
  \caption{\textbf{CPS outperforms the best-performing baselines} on the Dynamic Time Warping (DTW) distance that measures the similarity between the real and the generated samples by 22\% and the Discriminative Score (DS), which is the classification error of a model trained to differentiate real and synthetic data, by 70\%.}
  \label{fig:selling_2}
  \vskip -0.1in
\end{wrapfigure}

Large-scale foundation models have achieved remarkable progress in generative modeling tasks for language and vision \citep{bubeck2023sparksartificialgeneralintelligence,podell2023sdxlimprovinglatentdiffusion}. This generative power has sparked significant interest in addressing constraint satisfaction problems \citep{christopher2024constrained, coletta2024constrained, liu2023mirror, fishman2023diffusion, fishman2023metropolis}. However, constraints are often challenging to define and verify in language and vision domains. For instance, verifying the correctness of an image (e.g., a hand with 6 fingers) is inherently difficult due to prediction errors in perception models. In contrast, time series samples can be described through statistical features computed using well-defined functions, and these features can be imposed as constraints and verified accurately. This clarity in defining and verifying constraints makes the time series domain a promising area for advancing research on constraint satisfaction problems. A desired constrained time series generator exhibits the following properties:
\setlist{nolistsep}
\begin{itemize}[leftmargin=*,itemsep=0pt]
    \item \textbf{Training-free approach to include multiple constraints:} Training a generative model for specific constraints, as in Loss-DiffTime \citep{coletta2024constrained}, lacks scalability. For example, a model trained for mean constraints cannot easily adapt to argmax constraints.
    \item \textbf{Independence from external realism enforcers:}
    Prior works often involve projecting samples onto a feasible set defined by constraints, which can degrade sample quality. To mitigate this, previous approaches \citep{coletta2024constrained} employ external models alongside the generative model to enforce realism, necessitating additional training and complex sampling procedures.
    \item \textbf{Hyperparameter-free approach to constrained generation:} The choice of guidance weights in guidance-based approaches significantly affects the sample quality. However, optimizing these weights becomes computationally intractable when dealing with hundreds of constraints.
\end{itemize}

Given these requirements, we propose \projectname (\projectabbr), a novel algorithm for diffusion-based time series generation with desired constraints, as illustrated in \Figref{fig:motivation}. \projectabbr\ projects the posterior mean estimate onto the constraint set after each denoising update, leveraging off-the-shelf optimization algorithms and eliminating the need for training or hyperparameter tuning to incorporate multiple constraints. \projectabbr\ does not require external models to enforce realism, as subsequent denoising steps naturally mitigate the adverse effects of projection. \textbf{Our main contributions are as follows:}
\setlist{nolistsep} 
\begin{enumerate}[leftmargin=*,itemsep=0pt]
    \item 
    We present \projectname (\projectabbr), a scalable diffusion posterior sampling algorithm to generate time series samples that satisfy desired constraints with high sample quality. \projectabbr\ efficiently handles a large number of constraints ($\sim$ 100) without additional training.
    \item
    We provide rigorous theoretical justification demonstrating the impact of \projectabbr\ on traditional diffusion sampling. We also provide convergence analysis for well-studied settings (linear constraint sets with Gaussian prior data distribution), offering valuable insights for practice.
    \item
    Through extensive experiments on six diverse real-world and simulated datasets covering finance, traffic, and environmental monitoring, we show that \projectabbr\ outperforms state-of-the-art methods in sample quality, similarity, and constraint adherence metrics, as shown in \Figref{fig:selling_2}.
\end{enumerate}

\section{Preliminaries}
\label{sec:preliminaries}
\textbf{Notations:} We denote a time series sample by $x \in \sampledomain$. Here, $\numchannels$ and $\horizon$ refer to the number of channels and the horizon, respectively. A dataset is defined as $\dataset = \{x^1, \dots, x^{\nsamples} \}$, where the superscript $i \in [1, \dots, \nsamples]$ refers to the sample number, and $\nsamples$ is the total number of samples in the dataset. $\truedistr$ denotes the real time series data distribution. $x^i$ is the realization of the random vector $X^i$, where $X^1, \dots X^{\nsamples} \sim \truedistr$. The Probability Density Function (PDF) associated with $\truedistr$ is represented by $\truedensity: \sampledomain \rightarrow \scalerdomain$, where $\int\truedensity(x) dx = 1$. Here, $\int$ refers to the integration operator over $\sampledomain$. The notation $\mathcal{N}(\mu, \Sigma)$ refers to the Gaussian distribution with mean $\mu$ and covariance matrix $\Sigma$. Similarly, $\mathcal{U}(a,b)$ indicates the uniform distribution with non-zero density from $a$ to $b$. $\|\cdot\|_2$ 
indicates the $l_2$ norm in the case of a vector and the spectral norm in the case of a matrix. We denote the constraint set $\constraintset$ as $\constraintset = \mathcal{C}_1  \bigcap \mathcal{C}_2, \dots, \bigcap \mathcal{C}_{\nconstraints}$, where $\nconstraints$ is the total number of constraints and $\bigcap$ denotes intersection. Here, $\mathcal{C}_i = \{ x \mid f_{c_i}(x) \leq 0 \}$ with $f_{c_i}: \sampledomain \rightarrow \scalerdomain \; \forall \ c_i \in [1, \dots, \nconstraints]$. $\lambdamax(M)$ and $\lambdamin(M)$ refer to the largest and the smallest eigen values of a square matrix $M$ with rank denoted by $rank(M)$.

\textbf{Example I:} The stocks dataset has $6$ channels ($K = 6$) with $96$ timestamps in each channel ($L = 96$). The first $4$ channels represent the opening price ($o$), the highest price ($h$), the lowest price ($l$), and the closing price ($c$), and each timestamp represents a day. The OHLC constraint is given by $o - h \leq 0$, $c - h \leq 0$, $l - o \leq 0$, and $l - c \leq 0$, \emph{i.e.,} the opening and closing prices must lie between the highest and the lowest prices. The mean equality constraint on the closing price is expressed as $\frac{1}{L}\left(\sum_{u=1}^{L}c_u\right)-\mu_c \leq 0$ and $\mu_c - \frac{1}{L}\left(\sum_{u=1}^{L}c_u\right) \leq 0$, where $\mu_c$ is the required mean and $c_u$ refers to the value of the closing price for the timestamp $u$.
\subsection{Diffusion Models} 

\label{sec:background_dm}
Diffusion models \citep{ho2020denoising, dhariwal2021diffusion} consist of two stochastic processes: (a) forward noising process and (b) reverse denoising process.
The \textbf{forward process} is modeled by a Markov process where a sample $z_0$ from the data distribution (e.g., a clean image) is transformed into a sample $z_T$ from the standard Gaussian $\standardnormal$ in $T$ steps. Let $\mathbf{0}$ and $\mathbf{I}$ denote zero mean and identity covariance. The Markov process uses a Gaussian transition kernel governed by the diffusion coefficients $\diffusioncoefficients$, where $\bar{\alpha}_t \in [0,1]$ and $\bar{\alpha}_{t-1} > \bar{\alpha}_t \ \forall \ t \in [1, T]$, with boundary conditions given by $\bar{\alpha}_0 = 1, \bar{\alpha}_T = 0$. 
Formally, $q_t(z_t \mid z_0)$ is the PDF of the conditional Gaussian distribution at time $t$ with mean $\sqrtalphabar z_0$ and covariance matrix \rebuttal{$\oneminusalphabar \sampledomaini$}. 
The PDF for the marginal at $t = 0$ is given by $q_0 = \truedensity$. 

The \textbf{reverse process} transforms a noise sample $z_T\sim \standardnormal$ into a clean image from $\truedensity$. 
A neural network is trained to align the marginals of the reverse process $p_{\theta,t}(z_{t-1} \mid z_t)$ with that of the forward process for all $t \in [1,T]$. 
The neural network is usually trained to approximate the mean of $p_{\theta,t}(z_{t-1} \mid z_t)$ for a fixed variance schedule. 
The resulting training objective becomes:
\begin{equation}
     \mathbb{E}_{z_0 \sim \truedistr, \epsilon \sim \standardnormal,t \sim \mathcal{U}(1,T)} \left[ \| \epsilon - \denoiser(z_t,t) \right \|^2_2], \label{eq:dm_objective}
\end{equation}
where $\denoiser(z_t,t)$ is trained to estimate the noise $\epsilon$ from $z_t$, and $z_t = \sqrtalphabar z_0 + \sqrtoneminusalphabar \epsilon$, with $t\sim \mathcal{U}\left[1,T \right]$. Interestingly, the training objective (Equation \ref{eq:dm_objective}) is equivalent to that of a deterministic sampling algorithm,
DDIM \cite{song2022denoisingdiffusionimplicitmodels}, where the forward noising process is modeled by a non-Markovian process, and the corresponding reverse update becomes: 
\begin{equation}
     z_{t-1} = \sqrtalphabarprev \zohatzt + \sqrtoneminusalphabarprevcontrol \denoiser(z_t,t) + \control\bar{\epsilon}. \label{eq:ddim_sampling}
\end{equation}
Here, $\zohatzt = \frac{z_t - \sqrt{1 - \bar{\alpha}_t}\denoiser(z_t, t)}{\sqrtalphabar}$ is the posterior mean $\mathbb{E}\left[z_0|z_t\right]$, and $\control$ controls the sampling process stochasticity induced by the noise term $\bar{\epsilon} \sim \standardnormal$. From \citep{song2022denoisingdiffusionimplicitmodels}, we note that Equation \ref{eq:ddim_sampling} corresponds to the following reverse process:
\begin{equation}
 p_{\theta,t}(z_{t-1} \mid z_t) = \left\{\begin{matrix}
\initialsamplingdistrabbrunperturbed & \mathrm{if} \ t = 1, \\
 q_{\sigma,t}(z_{t-1} \mid z_t, \zohatzt) & \mathrm{otherwise}, \\
\end{matrix}\right. 
\label{eq:ddim_distribution}
\end{equation}

where $q_{\sigma,t}(z_{t-1} \mid z_t, \zohatzt)$ represents the PDF of the Gaussian distribution with mean $\sqrtalphabarprev \zohatzt + \sqrtoneminusalphabarprevcontrol\denoiser(z_t,t)$ and covariance matrix \rebuttal{$\control^2 \sampledomaini$}. Similarly, $\initialsamplingdistrabbrunperturbed$ is the PDF of the Gaussian distribution with mean $\zohatzone$ and covariance matrix \rebuttal{$\sigma_1^2 \sampledomaini$}. This reverse sampling process can be viewed as obtaining the posterior mean estimate $\zohatzt$ and transforming it to the noise level for step $t-1$. \projectabbr\ builds on Equation \ref{eq:ddim_sampling}. 

\subsection{Related Work}
Sampling from distributions constrained to a feasible set is critical in various engineering domains, and prior training‑based methods in material science and robotics have addressed this problem.

\subsubsection{Training-based Constrained Generation}
\label{sec:background_training_based_constrained_dm}
Most training-based solutions \citep{frerix2020homogeneous, liu2023mirror, fishman2023diffusion, fishman2023metropolis} constrain the sample domain to specific manifolds. This is typically done in prior works with additional trainable layers in Variational Auto Encoders (VAEs) \citep{frerix2020homogeneous} to enforce linear inequality constraints or through constraint-specific training modifications \citep{fishman2023diffusion, fishman2023metropolis}, such as clipping the score function for DMs to zero at the constraint boundaries. Recently, Mirror DMs \citep{liu2023mirror} proposed a modified training scheme for convex constraint sets, where standard denoising DMs are trained in the dual or the mirror space of the constraint set. Similarly, recent works \citep{fishman2023diffusion, fishman2023metropolis} proposed a modified forward noising process to restrict the intermediate noisy samples to the constraint set. In the time series domain, Loss-DiffTime \citep{coletta2024constrained} proposed a training-based approach where constraint-specific samples are generated by conditioning the generator on the constraints. These approaches lack generalization to new and arbitrary constraint sets, as given in \textbf{Example I}.

\subsubsection{Training-free Constrained Generation}
\label{sec:background_training_free_constrained_dm}
To overcome the limitation of training-based approaches, training-free approaches modify the reverse process either through \textit{projection} or \textit{guidance}.
Projected Diffusion Model (PDM) \citep{christopher2024constrained} and PRODIGY \citep{sharma2024diffuse} project the intermediate noisy latents of the reverse process ($z_T, \dots, z_0$) into the constraint set.  
Constrained Optimization Problem (COP) \citep{coletta2024constrained} projects a seed sample to the constraint set while using the critic function from Wasserstein GAN \citep{arjovsky2017wassersteingan} as a realism enforcer. Although training-free, these approaches adversely impact sample quality and diversity, as shown in Section \ref{sec:experiments}. 

Guidance-based approaches \citep{yuan2024diffusionts,coletta2024constrained} push the generated samples toward the constraint set without constraint satisfaction guarantees, even for convex constraints. Similarly, conditional and controlled generation approaches have been extensively studied in the image domain to address tasks such as solving inverse problems \citep{daras2024surveydiffusionmodelsinverse, rout2023solving, stsl, chung2024improvingdiffusionmodelsinverse}, image personalization \citep{rout2024rbmodulationtrainingfreepersonalizationdiffusion, ruiz2022dreambooth}, text-to-image generation \citep{rombach2022high, ramesh2022hierarchical}, and text-based image editing \citep{kawar2023imagictextbasedrealimage, choi2023customedittextguidedimageediting, yu2023freedom, rf-inversion}. These methods often involve inversion or proximal gradient updates to guide the reverse process.


Our approach, \projectabbr, belongs to the training-free category, offering improved \textit{efficiency} and sample \textit{quality} with \textit{provable guarantees}. \projectabbr\ is a proximal gradient descent approach in the same spirit as RB-modulation \citep{rout2024rbmodulationtrainingfreepersonalizationdiffusion} but with theoretical guarantees for satisfying linear constraints, such as OHLC constraints in stock price generation. We focus on the time series domain, where constraints are defined as statistical features computed using analytical functions, enabling accurate verification. 
\begin{center}
\fbox{%
  \parbox{0.95\linewidth}{%
    \textbf{Problem Setup.} Consider a dataset $\mathcal{D} = \{x^i\}_{i=1}^{\nsamples}$, where $\nsamples$ denotes the number of samples and $x^i \in \mathbb{R}^{K \times L}$ is the realization of $X^i \sim \truedistr$ with the density function $\truedensity$. The goal is to generate a typical sample $\gen \in \sampledomain$ in the support of $\truedensity$ such that $\gen$ belongs to the constraint set $\constraintset = \mathcal{C}_1  \bigcap \mathcal{C}_2, \dots, \bigcap \mathcal{C}_{\nconstraints}$, where $\nconstraints$ denotes the number of constraints. Here, $\mathcal{C}_i = \{ x \in \sampledomain \mid f_{c_i}(x) \leq 0 \}$ with $f_{c_i}: \sampledomain \rightarrow \scalerdomain \; \forall \ c_i \in [1, \dots, \nconstraints]$.
  }%
}
\end{center}

\section{Method: Constrained Posterior Sampling}
\label{sec:approach}

\begin{wrapfigure}{r}{0.45\textwidth}
  \vskip -0.1in
   \includegraphics[width=0.45\textwidth]{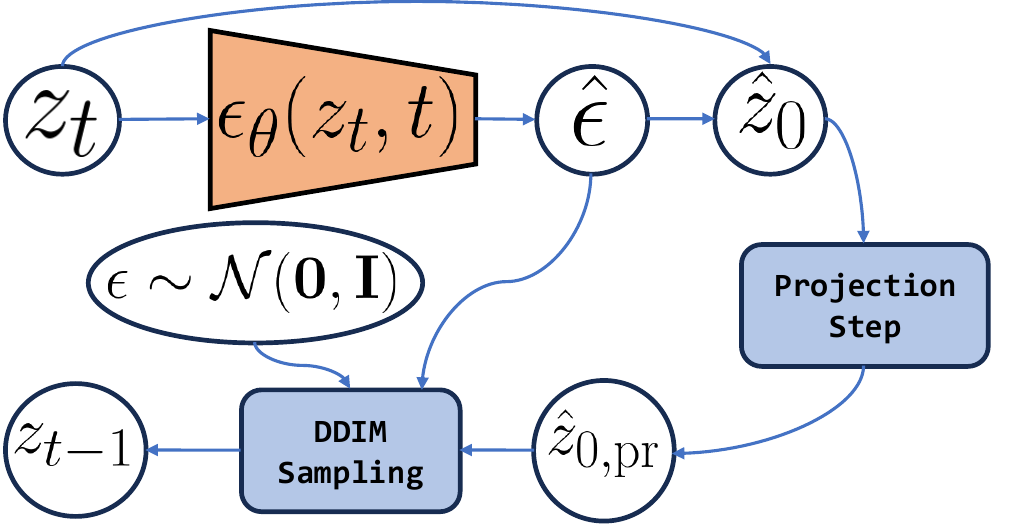}
  \vskip -0.1in
  \caption{\textbf{\projectname-} We show the graphical model for one step of denoising in \projectabbr: refer to \textbf{Algorithm \ref{alg: constrained synthesis}}.}
  \vskip -0.1in
  \label{fig:algo_description}
\end{wrapfigure}
In this section, we describe \textbf{Algorithm \ref{alg: constrained synthesis}} for satisfying the desired constraints while maximizing the likelihood of the generated samples.
The procedure alternates between: (1) maximizing likelihood using a pre-trained diffusion model, and (2) enforcing constraints by projecting the expected time series onto the constraint set. In other words, to generate a typical sample in the support of $p_{\mathrm{data}}$, \projectabbr\ uses a diffusion model that was trained to maximize the likelihood of observing samples from the given dataset. However, to ensure constraint satisfaction, in \projectabbr, we perturb the denoised sample after every denoising update, using a projection step towards the constraint set. Starting with a sample from $\standardnormal$ (line 1), we perform sequential denoising using the standard DDIM reverse process (lines 2 to 10). Line 3 refers to the forward pass through the denoiser to obtain the noise estimate $\denoiser(z_t,t)$. 
After every denoising step, we obtain the posterior mean estimate $\zohatzt$ (line 4) and project it towards the constraint set $\constraintset$ to obtain the projected posterior mean estimate $\zohatprojzt$ (line 5). 
Finally, we perform a DDIM reverse sampling step with $\zohatprojzt$ and $\denoiser(z_t,t)$ to obtain $z_{t-1}$ (lines 7-9). 

\Figref{fig:algo_description} illustrates one step of the overall pipeline. \projectabbr\ solves an unconstrained optimization problem in line 5 with the objective function $\frac{1}{2}(\| z - \zohatzt \|_2^2$ + $\gamma(t) \Pi(z))$.
The first term of the objective ensures that $\zohatprojzt$ is close to $\zohatzt$, ensuring that $z_{t-1}$ remains close to the domain of the denoising network.
We define the constraint violation function $\Pi:\sampledomain \rightarrow \scalerdomain$ as $\Pi(z) = \sum_{i=1}^{\nconstraints}\mathbf{max}(0, f_{c_i}(z))$, such that $\Pi(z) = 0$ if $z \in \mathcal{C}$ and $\Pi(z) > 0$ otherwise. 
For the denoising step $t$, the constraint violation function is scaled by a time-varying penalty $\gamma(t)$. 

Our key insight is to design $\gamma(t)$ as a strictly decreasing function of $t$, taking small values for the initial denoising steps and large values for the final denoising steps. 
This ensures that the constraint satisfaction is not heavily enforced during the initial steps when the signal-to-noise ratio in $z_t$ is very low.
We choose $\gamma(t) = e^{1/(1-\alphabarprev)}$ such that $\gamma(t)$ is close to 0 for the initial steps ($\gamma(T) \approx e$) and $\gamma(t) \rightarrow \infty$ for $t=1$. 
Here, note that $\gamma(t)$ strictly decreases with $t$ since $\bar{\alpha}_t$ strictly decreases with $t$.

We do not add noise after the final denoising step ($\sigma_1=0$), ensuring that the final projection toward constraint satisfaction is not disrupted. For convex constraint sets, the projection step corresponds to an unconstrained minimization of a convex function, with the optimal constraint violation approaching 0 as $\gamma(1) \rightarrow \infty$.  With an appropriate choice of solvers \citep{10.5555/2946645.3007036}, the optimal solution can be achieved, guaranteeing constraint satisfaction  ($\Pi(\zohatprojzone) = 0$) at the end of the reverse process. 



\begin{wrapfigure}{R}{0.58\textwidth}
\begin{minipage}{0.58\textwidth}
\vspace{-0.8cm}
\begin{algorithm}[H]
   \caption{\projectname}
   \label{alg: constrained synthesis}
   {\bfseries Input:} Diffusion model $\denoiser$ with $T$ denoising steps, Noise coefficients $\left \{ \bar{\alpha}_0, \dots, \bar{\alpha}_T\right \}$, DDIM control parameters $\left \{ \sigma_1, \dots, \sigma_T\right \}$, Constraint violation function $\Pi$, Penalty coefficients  $\left \{ \gamma(1), \dots, \gamma(T)\right \}$.
   
   \begin{algorithmic}[1]
   \State Initialize $z_T \sim \standardnormal$.
   \For{$t=T$ {\bfseries to} 1}
   \State Obtain $\hat{\epsilon} = \denoiser(z_t, t)$
   \State $\zohatzt = \frac{z_t - \sqrtoneminusalphabar \hat{\epsilon}}{\sqrtalphabar}$ 
   \State $\zohatprojzt = \argmin_z \frac{1}{2}\begin{Bmatrix}
                \| z - \zohatzt \|_2^2 \\ +\gamma(t) \Pi(z)
            \end{Bmatrix}$ \\  
    \State $z_{t-1} = \sqrtalphabarprev \zohatprojzt + \sqrtoneminusalphabarprevcontrol \hat{\epsilon}$
    \State $\epsilon \sim \standardnormal$
    \State $z_{t-1} = z_{t-1} + \control\epsilon$ 
    \EndFor
    \State $\gen = z_0$
    \State \textbf{return} $\gen$
\end{algorithmic}
\end{algorithm}
\vspace{-0.8cm}
\end{minipage}
\end{wrapfigure}


\projectabbr\ meets the key requirements for a constrained generation approach. 
It handles multiple constraints without additional training or critics to enforce realism, as successive denoising steps naturally correct projection artifacts. \projectabbr\ introduces no extra hyperparameters, as off-the-shelf solvers \citep{nocedal1999numerical} handle the unconstrained optimization step (line 5 in \textbf{Algorithm~\ref{alg: constrained synthesis}}). 
As tuning guidance weights is challenging \citep{coletta2024constrained}, the tuning-free nature of \projectabbr\ provides a significant advantage for real-world applications.

\projectabbr\ is closely related to other training-free constrained generation approaches - PDM \citep{christopher2024constrained} and PRODIGY \citep{sharma2024diffuse}. Unlike \projectabbr\ which projects $\zohatzt$, these approaches project the intermediate latents ($z_T, \dots, z_0$) onto the constraint set. While ensuring constraint satisfaction for convex sets, these approaches provide poor \textit{sample quality} and \textit{diversity} in comparison to \projectabbr\ for the following reasons:
\setlist{nolistsep}
\begin{itemize}[leftmargin=*,itemsep=0pt]
    \item \textbf{The constraint set is defined for the clean samples.} However, PDM projects noisy intermediate latents towards the constraint set. This approach eliminates most sample paths ($z_T, \dots, z_0$) where $z_0$ alone eventually satisfies the constraints, thereby affecting sample diversity. \projectabbr\ eliminates this problem by projecting $\zohatzt$, which has a similar noise level as the constraint set. 
    \item \textbf{Projecting $z_{t-1}$ pushes it off the noise manifold for the diffusion step $t-1$.} Consequently, a pre-trained denoiser struggles to accurately denoise the projected $z_{t-1}$ as it would be out of the training domain of the denoiser. This effect is significantly reduced in \projectabbr\ because our approach generates $z_{t-1}$ by adding an appropriate amount of noise to the projected clean sample.
\end{itemize}    

To address these issues, PRODIGY adds a scaled version of the projected noisy latent to $z_{t-1}$, such that the projected latent can be given more preference at the final denoising steps. However, due to the direct modification of $z_{t-1}$, PRODIGY still provides lower sample quality and diversity when compared against \projectabbr, as shown in Section \ref{sec:experiments}.

The original DDIM update step results in sampling from an unconditional distribution, whereas updating with $\zohatprojzt$ can be viewed as sampling conditioned on the specified constraints. \cite{rout2024rbmodulationtrainingfreepersonalizationdiffusion} has employed similar algorithms for image personalization. Note that \projectabbr\ breaks consistency as $z_t$ is no longer equal to $\sqrtalphabar \zohatprojzt + \sqrtoneminusalphabar \denoiser(z_t,t)$ due to the projection step. To understand the effect of this inconsistency, we implemented a variant of \projectabbr\ called \textbf{\projectabbr\ with noise correction}, where we update the noise estimate to ensure consistency. The adverse effects of the consistency update are shown through poor sample quality metrics in Table \ref{tab:consistency} (check Appendix \ref{sec:consistency_analysis}). We hypothesize that updating both $\zohatzt$ through projection, and $\hat{\epsilon}$ for consistency, can push $z_{t-1}$ off the noise manifold for the step $t-1$, resulting in poor denoising, and thereby affecting the sample quality.

\section{Theoretical Justification}
\label{sec:justification}
Now, we provide a detailed analysis of the effect of modifying the DDIM sampling process with \projectabbr. For ease of explanation, we consider $z \in \proofdomain$. Let $\proofdomaini$ denote the identity matrix in $\mathbb{R}^{n \times n}$. First, we describe the distribution from which the samples are generated under the following assumption.

\begin{assumption}
Let the constraint set be $\constraintset = \{ z \mid f_{\mathcal{C}}(z) = 0 \}$, where $\equalconstraint : \proofdomain \rightarrow \constraintfnoutputdomain$ and the penalty function $\Pi(z) = \|\equalconstraint(z)\|_2^2$ has $L$-Lipschitz continuous gradients, i.e., $\|\nabla \Pi(u)-\nabla \Pi(v)\|_2 \leq L\| u-v\|_2 \ \forall \ u,v \in \proofdomain$. 
\label{assumption:1}
\end{assumption}
\textbf{Assumption \ref{assumption:1}} follows prior works \cite{song2023pseudoinverseguided, rout2023firstordertweediesolvinginverse} on posterior sampling for inverse problems in the image domain. \cite{song2023pseudoinverseguided} uses linear inverse problems without noise to obtain the reverse sampling step, and replaces the pseudoinverse of the measurement operator with non-linear and non-differentiable operations for reconstructing images compressed using JPEG encoding. \cite{rout2023firstordertweediesolvinginverse} provides theoretical guarantees for the proposed sampling algorithms using a linear setting and extends to complex image editing tasks.
\begin{theorem}
Suppose \textbf{Assumption \ref{assumption:1}} holds. Given a denoiser $\denoiser : \proofdomain \rightarrow \proofdomain$ for a diffusion process with noise coefficients $\diffusioncoefficients$, if $\gamma(t) > 0 \ \forall \ t \in [1,T]$, the denoising step in \textbf{Algorithm \ref{alg: constrained synthesis}} is equivalent to sampling from the following conditional distribution:
\begin{equation*}
 p_{\theta,t}(z_{t-1} \mid z_t) = \left\{\begin{matrix}
\initialsamplingdistrabbr & \mathrm{if} \ t = 1, \\
 q_{\sigma,t}(z_{t-1} \mid z_t, \zohatprojzt ) & \mathrm{otherwise}.
\end{matrix}\right.
\end{equation*}
Here, the PDF of $\mathcal{N}\left(\zohatprojzone, \sigma_1^2 \proofdomaini\right)$ is represented by $\initialsamplingdistrabbr$, and the PDF of {\small $  \mathcal{N}\left(\sqrtalphabarprev \zohatprojzt + \sqrtoneminusalphabarprevcontrol\denoiser(z_t,t), \control^2\proofdomaini\right)$} by $q_{\sigma,t}(z_{t-1} \mid z_t, \zohatprojzt)$. $\sigma_1, \dots, \sigma_T$ are the DDIM control parameters, and $\gamma(t)$ is the penalty coefficient for the step $t$ in \textbf{Algorithm \ref{alg: constrained synthesis}}.
\label{th:distribution}
\end{theorem}
\textbf{Implications.} Intuitively, \textbf{Algorithm \ref{alg: constrained synthesis}} can be viewed as replacing $\zohatzt$ with $\zohatprojzt$ in the DDIM sampler. 
Therefore, the marginal PDFs of the reverse process are obtained by replacing $\zohatzt$ with $\zohatprojzt$ in Equation \ref{eq:ddim_distribution}. Under the \textbf{Assumption \ref{assumption:1}}, the projection step (line 5) becomes a sequence of gradient updates transforming $\zohatzt$ to $\zohatprojzt$. Having Lipschitz continuous gradients for $\Pi$ allows for fixed step sizes, which guarantees a reduction in the value of the objective function $\frac{1}{2}(\| z - \zohatzt \|_2^2$ + $\gamma(t) \Pi(z))$ with each gradient update. We refer the readers to Appendix \ref{sec:app_proof1} for the detailed proof.

In general, the theorem draws equivalence between the projection step and sampling from a Dirac delta distribution centered at $\zohatprojzt$. Additionally, \textbf{Theorem \ref{th:distribution}} can be extended to non-smooth penalty functions as in \textbf{Algorithm \ref{alg: constrained synthesis}} because the projection operation (line 5) can be written as a series of updates from $\zohatzt$ to $\zohatprojzt$. Therefore, \textbf{Theorem \ref{th:distribution}} still provides the updated $p_{\theta,t}(z_{t-1} \mid z_t)$ for \textbf{Algorithm \ref{alg: constrained synthesis}}. Now, we investigate the convergence properties for \textbf{Algorithm \ref{alg: constrained synthesis}} under the following assumption.
\begin{assumption}
    Let the data distribution be $\prooftruedistr$, where $\mu \in \mathbb{R}^n$, and the constraint set be defined as $\mathcal{C} = \{z \in \mathbb{R}^n \mid Az=y\}$ with $A \in \mathbb{R}^{m \times n}$ such that $rank(A) = n \leq m$. 
    Suppose there exists a unique solution $\gtsoln \in \mathbb{R}^n$ that satisfies the desired constraints in $\mathcal{C}$.
    \label{assumption:2}
\end{assumption}
\textbf{Assumption~\ref{assumption:2}} ensures the existence of a unique solution to the linear problem $Ax =y$. 
While there exist many methods to solve such problems under this assumption, our focus, as in \cite{rout2023theoretical,rout2023solving,chen2023score}, is to use this problem as a framework to analyze the convergence properties of \textbf{Algorithm \ref{alg: constrained synthesis}}, providing valuable insights for better practical performance. We note that similar assumptions (unique solution) are made in the theoretical analysis of algorithms designed for sample recovery and image inpainting \cite{chen2023score, rout2023theoretical} using diffusion models.

\begin{theorem}
Suppose \textbf{Assumption \ref{assumption:2}} holds.
Let the noise coefficients of a diffusion process be given by $\diffusioncoefficients$, where $\bar{\alpha}_0 = 1$, $\bar{\alpha}_T = 0$ and $\alphabar \in [0,1] \ \forall \ t \in [1,T]$.
If $\alphabar < \alphabarprev$ and $\gamma(t) = \frac{2k(T-t+1)}{\lambdamin(A^TA)}$, for any $k > 1$, deterministic sampling with \textbf{Algorithm \ref{alg: constrained synthesis}} converges at the rate $\mathcal O(1/T)$. 
Furthermore, there exists a constant $k > \mathrm{max}\left(1, \frac{\sqrt{\bar\alpha_1}}{\delta}\,\bigl(\|x^\star\|_2 + \|\mu\|_2\bigr)\right)$ such that $\terminalerror \leq \delta$ as $T\to\infty$. Here, $\delta$ is the tolerance limit.
\label{th:linear_case}
\end{theorem}

\textbf{Implications.} Interestingly, for very large values of $T$, with a suitable choice of $k$, CPS converges without the expensive constrained optimization steps for all denoising steps as required by prior works \citep{christopher2024constrained,sharma2024diffuse}. Following this insight and the proof in Appendix \ref{sec:app_remark1}, we choose the number of denoising steps and the final penalty coefficient to be large enough to ensure convergence in practice. Indeed, it is sufficient to partially enforce the constraints with relatively cheap unconstrained optimization routines for most denoising steps, except for the final step. Intuitively, enforcing constraints partially on the posterior mean estimate, $\zohatzt$, noising it back, and repeating this process for a large number of denoising steps removes the adverse effects of projection while gradually nudging the sample generation towards the constraint set. 

Overall, \textbf{Theorem \ref{th:linear_case}} provides a penalty schedule that ensures convergence for \textbf{Algorithm \ref{alg: constrained synthesis}}. Observe that the penalty coefficient linearly increases to a very large value as we denoise. However, since not all real-world constraints satisfy \textbf{Assumption \ref{assumption:2}}, we designed the penalty coefficients in \textbf{Algorithm \ref{alg: constrained synthesis}} to exponentially increase to a very large value as we denoise, such that they are similar in essence to the penalty coefficients used in \textbf{Theorem \ref{th:linear_case}}. We conduct extensive experiments in Section~\ref{sec:experiments}, verifying the significance of these insights. Specifically, we experimented with linearly and quadratically increasing penalties in \textbf{Algorithm \ref{alg: constrained synthesis}}, along with exponentially increasing penalty coefficients (check Table \ref{tab:gamma}). Note that the linearly increasing penalty coefficients are proportional to the penalty coefficients obtained from \textbf{Theorem \ref{th:linear_case}} at all diffusion steps.

\begin{figure*}[t]
    \begin{center}
    \includegraphics[width=\textwidth]{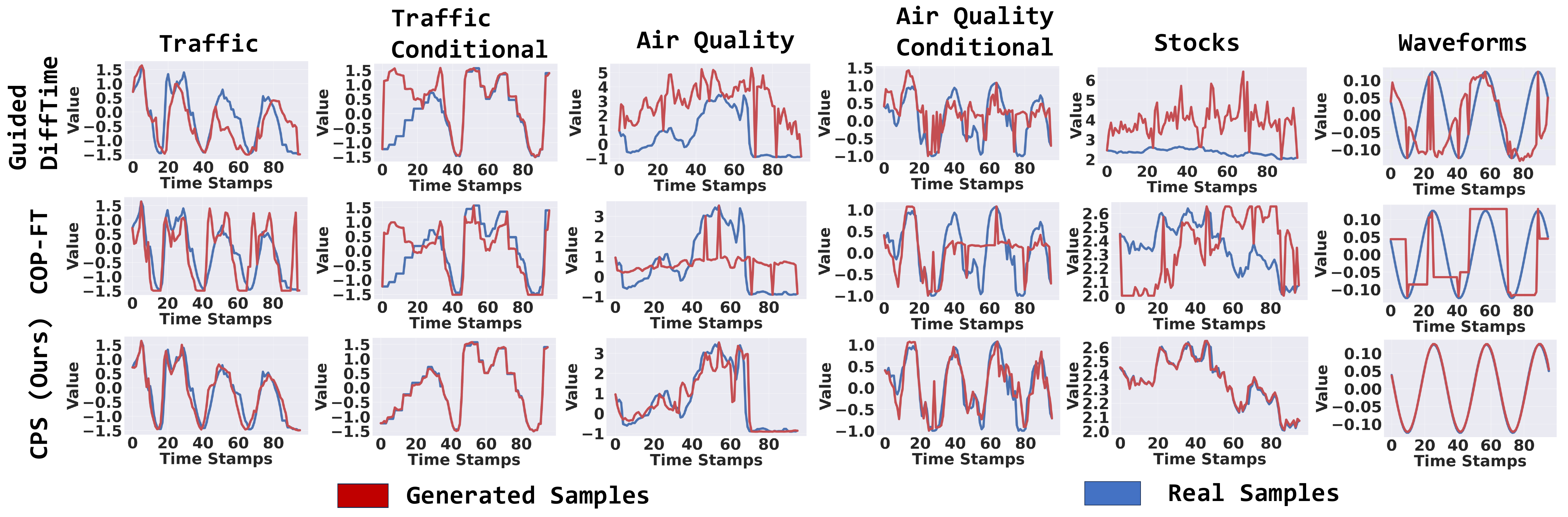}
    \vskip -0.1in
    \caption{\textbf{\projectabbr\ provides high-fidelity synthetic time series samples that match real time series data.} The real test samples from which the constraints are extracted are shown in blue. The samples generated using the extracted constraints are shown in red. Across all datasets, the baselines suffer from the adversarial effects of the projection step, whereas \projectabbr\ generates high-quality samples. Here, we provide the comparison against baselines \citep{coletta2024constrained} designed for constrained time series generation. We refer the readers to Appendix \ref{sec:app_qualitative}, where we included similar qualitative comparisons against other baselines (Diffusion-TS and PDM) that were adapted to perform constrained generation.}
    
    \label{fig:qualitative}
\end{center}
\end{figure*}

\section{Experiments}
\label{sec:experiments}
In this section, we describe the experiments, datasets, baselines, and metrics used to evaluate \projectabbr.

\textbf{Datasets:} \rebuttal{We use real-world datasets from the Stocks \citep{yoon2019time}, Air Quality \citep{misc_beijing_multi-site_air-quality_data_501}, and Traffic \citep{misc_metro_interstate_traffic_volume_492} domains.} We evaluate \projectabbr\ on both conditional and unconditional variants of these datasets. The conditional variant refers to imposing constraints on conditional generative models that take contextual metadata as input, as mentioned in \citep{narasimhan2024time}, to generate time series samples. Consequently, in \textbf{Algorithm \ref{alg: constrained synthesis}}, we use $\epsilon_\theta(z_t, t, m)$ instead of $\epsilon_\theta(z_t, t)$ for the conditional setting, where $m$ is the metadata input. We also evaluate \projectabbr\ on simulated sinusoidal waveforms with specified amplitudes, phases, and frequencies.

\textbf{Baselines:} We compare against two post-processing methods - COP \citep{coletta2024constrained}, which projects a random training sample to the constraint set, and its fine-tuning variant, COP-FT, that projects a generated sample. We also compare against guidance-based methods - \textbf{Guided DiffTime} \citep{coletta2024constrained} and \textbf{Diffusion-TS} \citep{yuan2024diffusionts}. Additionally, we compare against constrained generation approaches from other domains - \textbf{Projected Diffusion Models (PDM)} \citep{christopher2024constrained} and \textbf{PRODIGY} \citep{sharma2024diffuse}. We use the \textsc{Time Weaver-CSDI} denoiser \citep{narasimhan2024time} for all baselines to make a fair comparison. 
\begin{table*}[!t]
\begin{center}
\scalebox{0.7}{
\begin{small}
\begin{sc}
\begin{tabular}{lllllllll}
\toprule
\textbf{Metric}                                                                                                                      & \textbf{Approach}                                                                          & \textbf{Air Quality}                            & \begin{tabular}[c]{@{}l@{}}\textbf{Air Quality}\\ \textbf{(Conditional)}\end{tabular} & \textbf{Traffic}                                & \begin{tabular}[c]{@{}l@{}}\textbf{Traffic}\\ \textbf{(Conditional)}\end{tabular} & \textbf{Stocks}                                 & \textbf{Waveforms}                              \\ \midrule
\rowcolor[HTML]{FFFFFF} 
\cellcolor[HTML]{FFFFFF}{\color[HTML]{000000} }                                                                             & {\color[HTML]{000000} Guided DiffTime}  & {\color[HTML]{000000} 0.7457}          & {\color[HTML]{000000} 3.1883}                                       & {\color[HTML]{000000} 0.5351}          & {\color[HTML]{000000} 0.5638}                                   & {\color[HTML]{000000} 1.2575}           & {\color[HTML]{000000} 0.3108}          \\ 
\rowcolor[HTML]{FFFFFF} 
\cellcolor[HTML]{FFFFFF}{\color[HTML]{000000} }                                                                             & {\color[HTML]{000000} Diffusion-TS}                                                        & {\color[HTML]{000000} 0.0473}          & {\color[HTML]{000000} 2.9771}                                       & {\color[HTML]{000000} 0.4918}          & {\color[HTML]{000000} 0.3561}                                   & {\color[HTML]{000000} 1.1268}          & {\color[HTML]{000000} 0.0039} \\
\rowcolor[HTML]{FFFFFF} 
\cellcolor[HTML]{FFFFFF}{\color[HTML]{000000} }                                                                             & {\color[HTML]{000000} COP-FT}                                                        & {\color[HTML]{000000} 0.3821}          & {\color[HTML]{000000} 0.9919}                                       & {\color[HTML]{000000} 0.8239}          & {\color[HTML]{000000} 0.7836}                                   & {\color[HTML]{000000} 0.0727}          & {\color[HTML]{000000} 1.8099}          \\ 
\rowcolor[HTML]{FFFFFF} 
\cellcolor[HTML]{FFFFFF}{\color[HTML]{000000} }                                                                             & {\color[HTML]{000000} \begin{tabular}[c]{@{}l@{}}COP \end{tabular}}       & {\color[HTML]{000000} 0.2206} & {\color[HTML]{000000} 28.1572}                              & {\color[HTML]{000000} 0.8566} & {\color[HTML]{000000} 43.1499}                         & {\color[HTML]{000000} 0.0711} & {\color[HTML]{000000} 1.6653} \\
\rowcolor[HTML]{FFFFFF} 
\multirow{-4}{*}{\cellcolor[HTML]{FFFFFF}{\color[HTML]{000000} \begin{tabular}[c]{@{}l@{}}Frechet\\ Distance \\ ($\downarrow$)\end{tabular}}} & {\color[HTML]{000000} \begin{tabular}[c]{@{}l@{}}\cellcolor[HTML]{EFEFEF}\projectabbr\ (Ours)\end{tabular}}\cellcolor[HTML]{EFEFEF}      & {\cellcolor[HTML]{EFEFEF}\color[HTML]{000000} {\textbf{0.0234}}}    & {\cellcolor[HTML]{EFEFEF}\color[HTML]{000000} {\textbf{0.6039}}}                                 & {\cellcolor[HTML]{EFEFEF}\color[HTML]{000000} {\textbf{0.2077}}}    & {\cellcolor[HTML]{EFEFEF}\color[HTML]{000000} {\textbf{0.2812}}}                             & {\cellcolor[HTML]{EFEFEF}\color[HTML]{000000} {\textbf{0.0023}}}    & {\cellcolor[HTML]{EFEFEF}\color[HTML]{000000} {\textbf{0.0029}}}    \\ \midrule
\rowcolor[HTML]{FFFFFF} 
\cellcolor[HTML]{FFFFFF}{\color[HTML]{000000} }                                                                             & {\color[HTML]{000000} Guided DiffTime} & {\color[HTML]{000000} 0.29$\pm$0.015}          & {\color[HTML]{000000} 0.25$\pm$0.003}                                       & {\color[HTML]{000000} 0.30$\pm$0.01}          & \textbf{{\color[HTML]{000000} 0.28$\pm$0.01}}                                   & {\color[HTML]{000000} 0.05$\pm$0.001}          & {\color[HTML]{000000} \textbf{0.005$\pm$0.001}}          \\ 
\rowcolor[HTML]{FFFFFF} 
\cellcolor[HTML]{FFFFFF}{\color[HTML]{000000} }                                                                             & {\color[HTML]{000000} Diffusion-TS}                                                        & {\color[HTML]{000000} \textbf{0.19$\pm$0.004}}          & {\color[HTML]{000000} \textbf{0.19$\pm$0.005}}                                       & {\color[HTML]{000000} 0.31$\pm$0.01}          & {\color[HTML]{000000} \textbf{0.28$\pm$0.01}}                                   & {\color[HTML]{000000} 0.046$\pm$0.001}          & {\color[HTML]{000000} \textbf{0.005$\pm$0.001}}          \\ 
\rowcolor[HTML]{FFFFFF} 
\cellcolor[HTML]{FFFFFF}{\color[HTML]{000000} }                                                                             & {\color[HTML]{000000} COP-FT}                                                        & {\color[HTML]{000000} 0.23$\pm$0.005}          & {\color[HTML]{000000} \textbf{0.19$\pm$0.005}}                                       & {\color[HTML]{000000} 0.32$\pm$0.01}          & {\color[HTML]{000000} \textbf{0.28$\pm$0.01}}                                   & {\color[HTML]{000000} 0.049$\pm$0.001}          & {\color[HTML]{000000} 0.024$\pm$0.001} \\
\rowcolor[HTML]{FFFFFF} 
\cellcolor[HTML]{FFFFFF}{\color[HTML]{000000} }                                                                             & {\color[HTML]{000000} \begin{tabular}[c]{@{}l@{}}COP \end{tabular}}       & {\color[HTML]{000000} {\color[HTML]{000000} 0.25$\pm$0.003}}    &{\color[HTML]{000000} 0.25$\pm$0.003}                                 & {\color[HTML]{000000} 0.33$\pm$0.02}    & {\color[HTML]{000000} 0.32$\pm$0.01}                          & {\color[HTML]{000000} {\color[HTML]{000000} 0.048$\pm$0.002}}    & {\color[HTML]{000000} 0.024$\pm$0.002}    \\ 
\rowcolor[HTML]{FFFFFF} 
\multirow{-4}{*}{\cellcolor[HTML]{FFFFFF}{\color[HTML]{000000} \begin{tabular}[c]{@{}l@{}}TSTR \\ ($\downarrow$)\end{tabular}}} & {\color[HTML]{000000} \begin{tabular}[c]{@{}l@{}}\cellcolor[HTML]{EFEFEF}\projectabbr\ (Ours)\end{tabular}}\cellcolor[HTML]{EFEFEF}      & {\cellcolor[HTML]{EFEFEF}\color[HTML]{000000} {\textbf{0.19$\pm$0.003}}}    & {\cellcolor[HTML]{EFEFEF}\color[HTML]{000000} {\textbf{0.19$\pm$0.003}}}                                 & {\cellcolor[HTML]{EFEFEF}\color[HTML]{000000} {\textbf{0.29$\pm$0.01}}}    & {\cellcolor[HTML]{EFEFEF}\color[HTML]{000000} {\textbf{0.28$\pm$0.01}}}                             & {\cellcolor[HTML]{EFEFEF}\color[HTML]{000000} {\textbf{0.041$\pm$0.001}}}    & {\cellcolor[HTML]{EFEFEF}\color[HTML]{000000} {\textbf{0.005$\pm$0.001}}}    \\ \midrule
\rowcolor[HTML]{FFFFFF} 
\cellcolor[HTML]{FFFFFF}                                                                                                    & {\color[HTML]{000000} Guided DiffTime}                        & 0.33$\pm$0.02                             & 0.22$\pm$0.02                                                          & 0.29$\pm$0.05                             & 0.03$\pm$0.02                                                       & 0.38$\pm$0.01                             & 0.43$\pm$0.02                             \\  
\rowcolor[HTML]{FFFFFF} 
\cellcolor[HTML]{FFFFFF}                                                                                                    & Diffusion-TS                                                                              & \textbf{0.06$\pm$0.03}                             & 0.03$\pm$0.01                                                          & 0.11$\pm$0.05                             & 0.02$\pm$0.01                                                      & 0.21$\pm$0.03                             & \textbf{0.001$\pm$0.001}
\\
\rowcolor[HTML]{FFFFFF} 
\cellcolor[HTML]{FFFFFF}                                                                                                    & COP-FT                                                                              & 0.34$\pm$0.03                             & 0.03$\pm$0.01                                                          & 0.35$\pm$0.09                             & \textbf{0.01$\pm$0.01}                                                      & 0.13$\pm$0.04                             & 0.44$\pm$0.02                             \\
\rowcolor[HTML]{FFFFFF} 
\cellcolor[HTML]{FFFFFF}                                                                                                    & \begin{tabular}[c]{@{}l@{}}COP \end{tabular}                              & 0.29$\pm$0.02                       & 0.30$\pm$0.03                                                 & 0.41$\pm$0.06                    & 0.43$\pm$0.01                                               & 0.09$\pm$0.04                    & 0.42$\pm$0.03                    \\  
\rowcolor[HTML]{FFFFFF} 
\multirow{-4}{*}{\cellcolor[HTML]{FFFFFF}{\color[HTML]{000000} \begin{tabular}[c]{@{}l@{}}DS  \\ ($\downarrow$)\end{tabular}}} & {\color[HTML]{000000} \begin{tabular}[c]{@{}l@{}}\cellcolor[HTML]{EFEFEF}\projectabbr\ (Ours)\end{tabular}}\cellcolor[HTML]{EFEFEF}      & {\cellcolor[HTML]{EFEFEF}\color[HTML]{000000} {\textbf{0.06$\pm$0.01}}}    & {\cellcolor[HTML]{EFEFEF}\color[HTML]{000000} {\textbf{0.01$\pm$0.005}}}                                 & {\cellcolor[HTML]{EFEFEF}\color[HTML]{000000} {\textbf{0.02$\pm$0.01}}}    & {\cellcolor[HTML]{EFEFEF}\color[HTML]{000000} {\textbf{0.01$\pm$0.004}}}                             & {\cellcolor[HTML]{EFEFEF}\color[HTML]{000000} {\textbf{0.006$\pm$0.004}}}    & {\cellcolor[HTML]{EFEFEF}\color[HTML]{000000}{0.002$\pm$0.001}}    \\ \midrule
\rowcolor[HTML]{FFFFFF} 
\cellcolor[HTML]{FFFFFF}                                                                                                    & {\color[HTML]{000000} Guided DiffTime}                        & 6.74$\pm$8.18                             & 4.28$\pm$5.66                                                          & 4.38$\pm$1.25                             & 1.31$\pm$1.01                                                       & 7.84$\pm$7.24                             & 1.67$\pm$1.15                             \\  
\rowcolor[HTML]{FFFFFF} 
\cellcolor[HTML]{FFFFFF}                                                                                                    & Diffusion-TS                                                                              & 2.53$\pm$1.96                             & 2.30$\pm$1.91                                                          & 3.82$\pm$1.57                             & 0.96$\pm$0.52                                                      & 7.44$\pm$6.65                             & \textbf{0.16$\pm$0.28}                             \\
\rowcolor[HTML]{FFFFFF} 
\cellcolor[HTML]{FFFFFF}                                                                                                    & COP-FT                                                                              & 3.52$\pm$2.08                             & 2.01$\pm$1.24                                                          & 4.61$\pm$1.08                             & 1.26$\pm$0.88                                                      & 0.90$\pm$1.41                             & 1.19$\pm$0.64                             \\
\rowcolor[HTML]{FFFFFF} 
\cellcolor[HTML]{FFFFFF}                                                                                                    & \begin{tabular}[c]{@{}l@{}}COP \end{tabular}                              & 4.01$\pm$4.71                   & 4.13$\pm$5.94                                                    & 5.17$\pm$1.41                    & 4.94$\pm$1.08                                             & 0.88$\pm$1.41                       & 1.16 $\pm$0.65                    \\
\rowcolor[HTML]{FFFFFF} 
\multirow{-4}{*}{\cellcolor[HTML]{FFFFFF}{\color[HTML]{000000} \begin{tabular}[c]{@{}l@{}}DTW \\ ($\downarrow$)\end{tabular}}} & {\color[HTML]{000000} \begin{tabular}[c]{@{}l@{}}\cellcolor[HTML]{EFEFEF}\projectabbr\ (Ours)\end{tabular}}\cellcolor[HTML]{EFEFEF}      & {\cellcolor[HTML]{EFEFEF}\color[HTML]{000000} {\textbf{2.35$\pm$1.48}}}    & {\cellcolor[HTML]{EFEFEF}\color[HTML]{000000} {\textbf{1.83$\pm$1.16}}}                                 & {\cellcolor[HTML]{EFEFEF}\color[HTML]{000000} {\textbf{3.41$\pm$1.47}}}    & {\cellcolor[HTML]{EFEFEF}\color[HTML]{000000} {\cellcolor[HTML]{EFEFEF}\textbf{0.84$\pm$0.62}}}                             & {\cellcolor[HTML]{EFEFEF}\color[HTML]{000000} {\textbf{0.20$\pm$0.71}}}    & {\cellcolor[HTML]{EFEFEF}\color[HTML]{000000} {0.23$\pm$0.17}}    \\ \midrule
\rowcolor[HTML]{FFFFFF} 
\cellcolor[HTML]{FFFFFF}                                                                                                    & {\color[HTML]{000000} Guided DiffTime}                       & 23.21                                 & 16.35                                                              & 0.50                                  & 0.15                                                           & 1128.22                                & 5.23                                   \\  
\rowcolor[HTML]{FFFFFF} 
\cellcolor[HTML]{FFFFFF}                                                                                                    & Diffusion-TS                                                                               & 5.613                           & 3.92                                                        & 0.9743                           & 0.45                                                    & 40.51                           & 0.36                           \\  
\rowcolor[HTML]{FFFFFF} 
\cellcolor[HTML]{FFFFFF}                                                                                                    & COP-FT                                                                               & \textbf{0.0}                           & \textbf{0.0}                                                        & \textbf{0.0}                           & \textbf{0.0}                                                    & \textbf{0.0}                           & 0.0002 \\
\rowcolor[HTML]{FFFFFF} 
\cellcolor[HTML]{FFFFFF}                                                                                                    & \begin{tabular}[c]{@{}l@{}}COP \end{tabular}                              & \textbf{0.0}                           & \textbf{0.0}                                                        & 0.0001                           & \textbf{0.0}                                                    & \textbf{0.0}                           & 0.0003                           \\  
\rowcolor[HTML]{FFFFFF}  
\multirow{-4}{*}{\begin{tabular}[c]{@{}l@{}}Constraint\\ Violation\\ Magnitude\\ ($\downarrow$)\end{tabular}}        & \begin{tabular}[c]{@{}l@{}} \cellcolor[HTML]{EFEFEF}\projectabbr\ (Ours)\end{tabular}\cellcolor[HTML]{EFEFEF}                             & \cellcolor[HTML]{EFEFEF}\textbf{0.0}                           & \cellcolor[HTML]{EFEFEF}\textbf{0.0}                                                        &       \cellcolor[HTML]{EFEFEF}\textbf{0.0}             & \cellcolor[HTML]{EFEFEF}\textbf{0.0}                                                    & \cellcolor[HTML]{EFEFEF}\textbf{0.0}                           & \cellcolor[HTML]{EFEFEF}\textbf{0.0}                          \\ \bottomrule
\end{tabular}
\end{sc}
\end{small}
}
\end{center}
\caption{\textbf{\projectabbr\ outperforms existing baselines on sample quality and similarity metrics.} The best approach is shown in bold for each metric. Though the guidance-based approaches (Guided DiffTime and Diffusion-TS) provide comparable sample quality metrics, they struggle to provide constraint satisfaction (very high constraint violation magnitudes). Overall, \projectabbr\ maintains high sample quality (very low  FD, TSTR, and DS values) and the highest similarity with real time series samples (lowest DTW). Our key intuition is that the adverse effects of projection are nullified by the subsequent denoising steps. As all constraints are linear, the COP variants and \projectabbr\ can achieve very low values for constraint violation. The TSTR and DS results are obtained for 3 and 5 seeds, respectively.} 
\label{tab:main_results}
\end{table*}

\textbf{Evaluation Procedure:} Our evaluation procedure captures the difference between the real test dataset and the generated dataset with constrained samples, both on distributional and per-sample levels. To achieve this, first, from every real sample in the test dataset, we extract a diverse set of features for constraints - \emph{mean}, \emph{mean consecutive change}, \emph{argmax}, \emph{argmin}, \emph{value at argmax}, \emph{value at argmin}, \emph{values at timestamps 1, 24, 48, 72, \& 96}. We then generate a sample for each set of these features using constrained generation approaches. For an ideal constrained generation approach, these two steps combined should be equivalent to sampling from the test data distribution. Additionally, note that the features are extracted from the test samples and, therefore, our evaluation procedure focuses on constraint sets that were never seen during training. Apart from these constraints, we additionally impose the OHLC constraint for the stocks dataset. For waveforms, we extract the peaks, valleys, and the trend from a peak to its adjacent valley. All constraints can be written in the form $Ax \leq 0$, and projection to such constraint sets is easy. The tolerance limit $\delta$ is set to $0.01$.

\textbf{Metrics:} To compare the samples generated with constraints to the real test samples on a distribution level, we use the Frechet Time Series Distance (\textbf{FTSD}) metric \citep{narasimhan2024time, paul2022psaganprogressiveselfattention} for the unconditional setting (also referred to as \textbf{Context-FID} \citep{paul2022psaganprogressiveselfattention}) and the Joint Frechet Time Series Distance (\textbf{J-FTSD}) metric \citep{narasimhan2024time} for the conditional setting. We indicate both these metrics by the Frechet Distance (FD). We also report the Train on Synthetic and Test on Real (\textbf{TSTR}) metric for random imputation with 75\% masking and the Discriminative Score (\textbf{DS}) \citep{yoon2019time} for sample quality and diversity. We also compute sample-wise comparison metrics, like the Dynamic Time Warping (\textbf{DTW}) \citep{muller2007dynamic} metric between a real sample and the sample generated with corresponding constraints. This metric is useful when the number of constraints is large, where any desired constrained generation approach should generate a sample that is similar to the test sample from which the constraints are extracted. For constraint violation, we report the average constraint violation magnitude. A detailed discussion on the baselines and metrics is provided in Appendix \ref{sec:baseline} and Appendix \ref{sec:metrics}, respectively. Tables \ref{tab:main_results} and \ref{tab:aux_main_table} contain comparisons of these metrics between \projectabbr\ and other baselines on multiple datasets. As such, we make the following observations about the generated sample quality.



\textbf{Post-processing through projection is influenced by the choice of the initial seed or the generated sample}. For conditional settings, in COP-FT, conditional generation provides samples closer to the constraint set for post-processing (projection), whereas COP uses a randomly picked training sample. Therefore, the sample quality degradation due to the projection step is low in COP-FT, when compared to COP, resulting in higher FD values (check Table \ref{tab:main_results}). \projectabbr\ eliminates these effects through iterative projection and denoising, where the adverse effects of the projection step are nullified by the subsequent denoising steps (check Figure \ref{fig:qualitative}). Notably, in the Stocks dataset, \projectabbr\ provides $30 \times$ and $15 \times$ reduction in the FD and DS values, respectively (Table \ref{tab:main_results}), over the COP variants.

\textbf{Projecting the noisy intermediate latents has adversarial effects on sample quality.} As explained in Section \ref{sec:approach}, projecting the noisy intermediate latents (PDM and PRODIGY) pushes them off the noise manifold, resulting in inaccurate denoising. This affects the generated sample quality as shown through higher FD and TSTR values in Table \ref{tab:aux_main_table}. \projectabbr\ circumvents this issue by projecting the posterior mean estimate and adding an appropriate amount of noise to the projected posterior mean estimate to obtain the noisy latents (Figure \ref{fig:algo_description}). This approach provides noisy latents that are closer to the training domain of the denoiser, eventually resulting in high sample quality, leading to $6 \times$ and $10 \times$ reduction in the FD values for the Air Quality and Stocks datasets, respectively (Table \ref{tab:aux_main_table}). 


\textbf{Optimizing the guidance weights seems practically hard for a large number of constraints.} From Table \ref{tab:main_results}, observe the high constraint violation magnitudes for the guidance-based approaches such as Guided DiffTime and Diffusion-TS.
We attribute this to the interaction between gradients for each constraint violation, which, if not scaled appropriately, leads to poor guidance. \projectabbr\ avoids these issues by utilizing off-the-shelf solvers to handle the projection step, and the adverse effects of projection are nullified by the subsequent denoising steps. This results in $50 \times$ and $10\%$ reduction in FD and TSTR, respectively, over guidance-based methods for the Stocks dataset.

\textbf{\projectabbr\ effectively tracks the real test samples that adhere to the same set of constraints.} We denote tracking real test samples as the property to have better similarity scores (lower DTW) with the real sample as the number of constraints increases. In Figure \ref{fig:tracking_quantitative_results}, we observe that \projectabbr\ has the best reduction in the DTW scores as the number of constraints increases. Simultaneously, the sample quality is unaffected or even improves for \projectabbr\ with increasing constraints (lower FD scores). \projectabbr's performance is consistent across multiple real-world datasets, with up to $55\%$ reduction in the DTW scores for the Stocks dataset.

\begin{table}[t]
\begin{center}
\scalebox{0.65}{
\begin{small}
\begin{sc}
\begin{tabular}{lllllll}
\toprule
\textbf{Metric}                                                                                                               & \textbf{Approach}   & \textbf{Air Quality}  & \textbf{Traffic}      & \textbf{Stocks}  & \textbf{Air Quality Conditional}  & \textbf{Traffic Conditional}       \\ \midrule
\rowcolor[HTML]{FFFFFF} 
\cellcolor[HTML]{FFFFFF}                                                                                             & PDM        & 0.1503       & 0.2714       & 0.0368  & 0.8606 & 0.5510    \\
\rowcolor[HTML]{FFFFFF} 
\cellcolor[HTML]{FFFFFF}                                                                                             & PRODIGY    & 0.1646       & 0.2771       & 0.0361 & 0.6259 & 0.5811     \\
\rowcolor[HTML]{FFFFFF} 
\multirow{-3}{*}{\cellcolor[HTML]{FFFFFF}\begin{tabular}[c]{@{}l@{}}Frechet\\ Distance\end{tabular} ($\downarrow$)}                 & \cellcolor[HTML]{EFEFEF} CPS (Ours) & \cellcolor[HTML]{EFEFEF}\textbf{0.0234}       & \cellcolor[HTML]{EFEFEF}\textbf{0.2077}       & \cellcolor[HTML]{EFEFEF}\textbf{0.0023}  & \cellcolor[HTML]{EFEFEF}\textbf{0.6039} & \cellcolor[HTML]{EFEFEF}\textbf{0.2812}     \\ \midrule
\rowcolor[HTML]{FFFFFF} 
\cellcolor[HTML]{FFFFFF}                                                                                             & PDM        & 0.205$\pm$0.005 & \textbf{0.29$\pm$0.008}  & 0.044$\pm$0.001 & 0.20$\pm$0.006 & \textbf{0.28$\pm$0.01}\\
\rowcolor[HTML]{FFFFFF} 
\cellcolor[HTML]{FFFFFF}                                                                                             & PRODIGY    &   0.205$\pm$0.007           &  \textbf{0.29$\pm$0.005}            & 0.0443$\pm$0.002 & \textbf{0.19$\pm$0.004} & \textbf{0.28$\pm$0.01}            \\
\rowcolor[HTML]{FFFFFF} 
\multirow{-3}{*}{\cellcolor[HTML]{FFFFFF}TSTR ($\downarrow$)}                                                                       & \cellcolor[HTML]{EFEFEF}CPS (Ours) & \cellcolor[HTML]{EFEFEF}\textbf{0.19$\pm$0.003}  & \cellcolor[HTML]{EFEFEF}\textbf{0.29$\pm$0.001}  & \cellcolor[HTML]{EFEFEF}\textbf{0.041$\pm$0.001} & \cellcolor[HTML]{EFEFEF}\textbf{0.19$\pm$0.003} & \cellcolor[HTML]{EFEFEF}\textbf{0.28$\pm$0.01} \\ \midrule
\rowcolor[HTML]{FFFFFF} 
\cellcolor[HTML]{FFFFFF}                                                                                             & PDM        & 2.544$\pm$1.96  & 3.547$\pm$1.34  & 0.447$\pm$1.06  & 1.9$\pm$1.82 & 1.04$\pm$0.77 \\
\rowcolor[HTML]{FFFFFF} 
\cellcolor[HTML]{FFFFFF}                                                                                             & PRODIGY    & 2.537$\pm$1.87 & 3.596$\pm$1.37 & 0.516$\pm$1.19 & \textbf{1.8$\pm$1.17} & 1.07$\pm$0.78 \\
\rowcolor[HTML]{FFFFFF} 
\multirow{-3}{*}{\cellcolor[HTML]{FFFFFF}DTW ($\downarrow$)}                                                                        & \cellcolor[HTML]{EFEFEF}CPS (Ours) & \cellcolor[HTML]{EFEFEF}\textbf{2.35$\pm$1.48}   & \cellcolor[HTML]{EFEFEF}\textbf{3.41$\pm$1.47}   & \cellcolor[HTML]{EFEFEF}\textbf{0.2$\pm$0.71} & \cellcolor[HTML]{EFEFEF}1.83$\pm$1.16 & \cellcolor[HTML]{EFEFEF}\textbf{0.84$\pm$0.62}    \\ \bottomrule

\end{tabular}
\end{sc}
\end{small}
}
\end{center}
\caption{\textbf{\projectabbr\ outperforms constrained generation approaches from other domains on sample quality and similarity metrics.} \projectabbr\ outperforms PDM and PRODIGY, which project the noisy intermediate latents, on real-world datasets, highlighting the effectiveness of projecting the posterior mean estimate. Note that both PDM and PRODIGY provide perfect constraint satisfaction, similar to \projectabbr, for the constraints considered in Section \ref{sec:experiments}.} 
\label{tab:aux_main_table}
\vskip -0.2in
\end{table}

\textbf{\projectabbr\ can be effectively extended to a general class of constraints.} Specifically, we experimented with the Autocorrelation Function (ACF) at a specific lag as an equality constraint along with the OHLC constraint for the stocks dataset. We provide the results of this experiment in Table \ref{tab:general_constraints} in Appendix \ref{sec:app_general_constraints}. Note that out of all approaches, \projectabbr\ provides the least constraint violation magnitude. Additionally, even though the projection step (line 5, \textbf{Algorithm \ref{alg: constrained synthesis}}) does not lead to the optimal solution (as the autocorrelation function is non-convex in the sample domain), \projectabbr's sample quality is much better than that of the baselines. This is due to the iterated projection and denoising operations, which significantly reduce the adverse effects of the projection step.

\textbf{Additional Experiments.} In Appendix \ref{sec:app_extended_comparisons}, we show that \projectabbr\ performs as good as training-based approaches, such as Loss DiffTime, on sample quality metrics (Table \ref{tab:loss_dt_comp}) while ensuring constraint satisfaction. Appendix \ref{sec:app_gamma_choice} (Table \ref{tab:gamma}) provides an ablation study on the choice of the functional form of the penalty coefficients. We also performed a systematic evaluation of the effect of individual constraint categories (Table \ref{tab:constraint_wise} in Appendix \ref{sec:systematic_eval_constraints}). We note that \projectabbr\ outperforms the baseline approaches for the majority of these categories. Appendix \ref{sec:gamma_sweep} contains our detailed experimentation highlighting the effect of guidance weights on the constraint violation magnitude for the guidance-based approaches. In Appendix \ref{sec:app_imputation}, we show \projectabbr's ability to outperform other constrained generation approaches in performing time series imputation. Appendix \ref{sec:app_qualitative} contains additional qualitative comparisons (Figures \ref{fig:cps_diffts} and \ref{fig:cps_pdm}). 
\begin{figure*}[t]
\begin{center}
\includegraphics[width=0.99\textwidth]{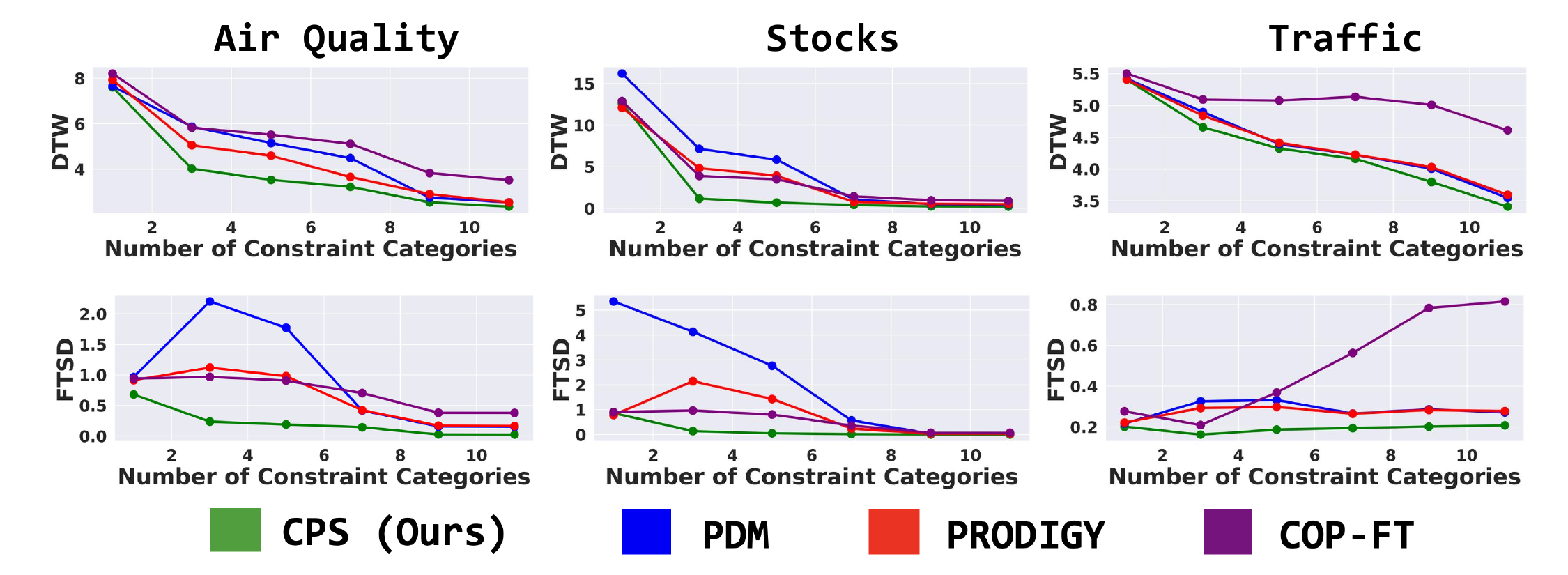}
\caption{\textbf{\projectabbr\ outperforms baselines on sample quality for an increasing number of constraints.} We gradually increase the number of constraints imposed on the generative model to evaluate \projectabbr\ and other baselines that guarantee constraint satisfaction for linear constraints. \projectabbr\ (green) achieves the lowest DTW score for any number of constraints while having the best sample quality, indicated by the lowest FTSD metric. This result is coherent with the qualitative example shown in Figure \ref{fig:selling}.}
\vspace{-1em}
\end{center}
\label{fig:tracking_quantitative_results}
\end{figure*}

\noindent \textbf{Limitations.} Although \projectabbr\ (\textbf{Algorithm \ref{alg: constrained synthesis}}) outperforms the compared baselines on standard evaluation metrics, the projection step (line 5) may increase sampling time in some applications, with the trade-off being superior performance. In time-critical scenarios, sampling time can be reduced by leveraging higher-order moments and alternative initialization schemes \citep{stsl}. Additionally, the projection step does not need to be applied after every denoising step and can be adapted based on the magnitude of constraint violation. Appendix \ref{sec:app_inference_time} details the inference time comparisons along with the time complexity analysis for \textbf{Algorithm \ref{alg: constrained synthesis}} under \textbf{Assumptions \ref{assumption:1}} and \textbf{\ref{assumption:2}}, as well as additional strategies for faster inference. Finally, we note that the convergence results obtained in \textbf{Theorem \ref{th:linear_case}} hold true under \textbf{Assumption \ref{assumption:2}}, which includes restrictions on the data distribution to be Gaussian and the presence of a unique solution. However, as explained in Section \ref{sec:justification}, these results provide valuable insights for the design of the penalty coefficients used in \textbf{Algorithm \ref{alg: constrained synthesis}}.

\section{Conclusion}
We proposed \projectname (CPS) -- a novel training-free approach for constrained time series generation. \projectabbr\ utilizes off-the-shelf optimization routines to perform a projection step towards the constraint set after every denoising update. 
In a linear model setting, we provide convergence guarantees for \projectabbr\ under mild assumptions. 
Empirically, we show that \projectabbr\ outperforms the current state-of-the-art in generating realistic samples with superior constraint satisfaction.

\noindent\textbf{Future work.} We aim to apply \projectabbr\ for constrained trajectory generation in the robotics domain
with dynamic constraints typically modeled by neural networks. Additionally, constrained time series generation readily applies to style transfer applications. Hence, we plan on extending the current work to perform style transfer from one time series to another by perturbing statistical features.

\newpage
\section{Acknowledgements}
Sai Shankar Narasimhan, Shubhankar Agarwal, and Sandeep Chinchali are part of the Swarm Lab at the University of Texas at Austin. \\ \\          
Litu Rout and Sanjay Shakkottai are supported by NSF Grants 2019844, 2112471, and the UT Austin Machine Learning Lab. Additional Computing support on the Vista GPU Cluster was provided through the Center for Generative AI (CGAI) and the Texas Advanced Computing Center (TACC) at UT Austin.
\printbibliography

\newpage
\section*{NeurIPS Paper Checklist}

\begin{enumerate}

\item {\bf Claims}
    \item[] Question: Do the main claims made in the abstract and introduction accurately reflect the paper's contributions and scope?
    \item[] Answer: \answerYes{} 
    \item[] Justification: Yes, the abstract and introduction includes the claims regarding better performance when compared to the previous state-of-the-art methods and theoretical guarantees of the proposed approach. In the Experiments section (Section \ref{sec:experiments}), we have empirically shown that \projectabbr\ outperforms previous state-of-the-art methods on multiple real-world datasets. Additionally, the proofs for the theoretical guarantees and the convergence analysis are discussed in Section \ref{sec:justification} and in Appendix \ref{sec:app_proof1} and in Appendix \ref{sec:app_remark1}. 
    \item[] Guidelines:
    \begin{itemize}
        \item The answer NA means that the abstract and introduction do not include the claims made in the paper.
        \item The abstract and/or introduction should clearly state the claims made, including the contributions made in the paper and important assumptions and limitations. A No or NA answer to this question will not be perceived well by the reviewers. 
        \item The claims made should match theoretical and experimental results, and reflect how much the results can be expected to generalize to other settings. 
        \item It is fine to include aspirational goals as motivation as long as it is clear that these goals are not attained by the paper. 
    \end{itemize}

\item {\bf Limitations}
    \item[] Question: Does the paper discuss the limitations of the work performed by the authors?
    \item[] Answer: \answerYes{} 
    \item[] Justification: We request the reader to check lines 347-354 in page 9 (Section \ref{sec:experiments}) for the limitations of the proposed approach.
    \item[] Guidelines:
    \begin{itemize}
        \item The answer NA means that the paper has no limitation while the answer No means that the paper has limitations, but those are not discussed in the paper. 
        \item The authors are encouraged to create a separate "Limitations" section in their paper.
        \item The paper should point out any strong assumptions and how robust the results are to violations of these assumptions (e.g., independence assumptions, noiseless settings, model well-specification, asymptotic approximations only holding locally). The authors should reflect on how these assumptions might be violated in practice and what the implications would be.
        \item The authors should reflect on the scope of the claims made, e.g., if the approach was only tested on a few datasets or with a few runs. In general, empirical results often depend on implicit assumptions, which should be articulated.
        \item The authors should reflect on the factors that influence the performance of the approach. For example, a facial recognition algorithm may perform poorly when image resolution is low or images are taken in low lighting. Or a speech-to-text system might not be used reliably to provide closed captions for online lectures because it fails to handle technical jargon.
        \item The authors should discuss the computational efficiency of the proposed algorithms and how they scale with dataset size.
        \item If applicable, the authors should discuss possible limitations of their approach to address problems of privacy and fairness.
        \item While the authors might fear that complete honesty about limitations might be used by reviewers as grounds for rejection, a worse outcome might be that reviewers discover limitations that aren't acknowledged in the paper. The authors should use their best judgment and recognize that individual actions in favor of transparency play an important role in developing norms that preserve the integrity of the community. Reviewers will be specifically instructed to not penalize honesty concerning limitations.
    \end{itemize}

\item {\bf Theory assumptions and proofs}
    \item[] Question: For each theoretical result, does the paper provide the full set of assumptions and a complete (and correct) proof?
    \item[] Answer: \answerYes{} 
    \item[] Justification: Yes, the assumptions for the theoretical results are provided in Section \ref{sec:justification}. The proofs are provided in Appendix \ref{sec:app_proof1} and Appendix \ref{sec:app_remark1}.
    \item[] Guidelines:
    \begin{itemize}
        \item The answer NA means that the paper does not include theoretical results. 
        \item All the theorems, formulas, and proofs in the paper should be numbered and cross-referenced.
        \item All assumptions should be clearly stated or referenced in the statement of any theorems.
        \item The proofs can either appear in the main paper or the supplemental material, but if they appear in the supplemental material, the authors are encouraged to provide a short proof sketch to provide intuition. 
        \item Inversely, any informal proof provided in the core of the paper should be complemented by formal proofs provided in appendix or supplemental material.
        \item Theorems and Lemmas that the proof relies upon should be properly referenced. 
    \end{itemize}

    \item {\bf Experimental result reproducibility}
    \item[] Question: Does the paper fully disclose all the information needed to reproduce the main experimental results of the paper to the extent that it affects the main claims and/or conclusions of the paper (regardless of whether the code and data are provided or not)?
    \item[] Answer: \answerYes{}
    \item[] Justification: Appendix \ref{sec:app_implementation} contains the implementation details for our approach (\projectabbr) as well as the baselines.
    \item[] Guidelines:
    \begin{itemize}
        \item The answer NA means that the paper does not include experiments.
        \item If the paper includes experiments, a No answer to this question will not be perceived well by the reviewers: Making the paper reproducible is important, regardless of whether the code and data are provided or not.
        \item If the contribution is a dataset and/or model, the authors should describe the steps taken to make their results reproducible or verifiable. 
        \item Depending on the contribution, reproducibility can be accomplished in various ways. For example, if the contribution is a novel architecture, describing the architecture fully might suffice, or if the contribution is a specific model and empirical evaluation, it may be necessary to either make it possible for others to replicate the model with the same dataset, or provide access to the model. In general. releasing code and data is often one good way to accomplish this, but reproducibility can also be provided via detailed instructions for how to replicate the results, access to a hosted model (e.g., in the case of a large language model), releasing of a model checkpoint, or other means that are appropriate to the research performed.
        \item While NeurIPS does not require releasing code, the conference does require all submissions to provide some reasonable avenue for reproducibility, which may depend on the nature of the contribution. For example
        \begin{enumerate}
            \item If the contribution is primarily a new algorithm, the paper should make it clear how to reproduce that algorithm.
            \item If the contribution is primarily a new model architecture, the paper should describe the architecture clearly and fully.
            \item If the contribution is a new model (e.g., a large language model), then there should either be a way to access this model for reproducing the results or a way to reproduce the model (e.g., with an open-source dataset or instructions for how to construct the dataset).
            \item We recognize that reproducibility may be tricky in some cases, in which case authors are welcome to describe the particular way they provide for reproducibility. In the case of closed-source models, it may be that access to the model is limited in some way (e.g., to registered users), but it should be possible for other researchers to have some path to reproducing or verifying the results.
        \end{enumerate}
    \end{itemize}

\item {\bf Open access to data and code}
    \item[] Question: Does the paper provide open access to the data and code, with sufficient instructions to faithfully reproduce the main experimental results, as described in supplemental material?
    \item[] Answer: \answerYes{} 
    \item[] Justification: The supplementary material contains the data and code used for experiments.
    \item[] Guidelines: 
    \begin{itemize}
        \item The answer NA means that paper does not include experiments requiring code.
        \item Please see the NeurIPS code and data submission guidelines (\url{https://nips.cc/public/guides/CodeSubmissionPolicy}) for more details.
        \item While we encourage the release of code and data, we understand that this might not be possible, so “No” is an acceptable answer. Papers cannot be rejected simply for not including code, unless this is central to the contribution (e.g., for a new open-source benchmark).
        \item The instructions should contain the exact command and environment needed to run to reproduce the results. See the NeurIPS code and data submission guidelines (\url{https://nips.cc/public/guides/CodeSubmissionPolicy}) for more details.
        \item The authors should provide instructions on data access and preparation, including how to access the raw data, preprocessed data, intermediate data, and generated data, etc.
        \item The authors should provide scripts to reproduce all experimental results for the new proposed method and baselines. If only a subset of experiments are reproducible, they should state which ones are omitted from the script and why.
        \item At submission time, to preserve anonymity, the authors should release anonymized versions (if applicable).
        \item Providing as much information as possible in supplemental material (appended to the paper) is recommended, but including URLs to data and code is permitted.
    \end{itemize}

\item {\bf Experimental setting/details}
    \item[] Question: Does the paper specify all the training and test details (e.g., data splits, hyperparameters, how they were chosen, type of optimizer, etc.) necessary to understand the results?
    \item[] Answer: \answerYes{} 
    \item[] Justification: Appendix \ref{sec:app_implementation} contains the implementation details regarding model training for our approach (\projectabbr) as well as the baselines. Additionally, Appendix \ref{sec:datasets} contains the exact training, validation, and testing splits used in our experiments.
    \item[] Guidelines:
    \begin{itemize}
        \item The answer NA means that the paper does not include experiments.
        \item The experimental setting should be presented in the core of the paper to a level of detail that is necessary to appreciate the results and make sense of them.
        \item The full details can be provided either with the code, in appendix, or as supplemental material.
    \end{itemize}

\item {\bf Experiment statistical significance}
    \item[] Question: Does the paper report error bars suitably and correctly defined or other appropriate information about the statistical significance of the experiments?
    \item[] Answer: \answerYes{} 
    \item[] Justification: The distributional metrics that require training (TSTR and DS) are reported with mean and standard deviation obtained from 3 and 5 seeds, respectively. Additionally, the sample-wise similarity metric, DTW, is also reported with mean and standard deviation obtained from the test sets of each dataset. This is described in the caption of Table \ref{tab:main_results}.
    \item[] Guidelines:
    \begin{itemize}
        \item The answer NA means that the paper does not include experiments.
        \item The authors should answer "Yes" if the results are accompanied by error bars, confidence intervals, or statistical significance tests, at least for the experiments that support the main claims of the paper.
        \item The factors of variability that the error bars are capturing should be clearly stated (for example, train/test split, initialization, random drawing of some parameter, or overall run with given experimental conditions).
        \item The method for calculating the error bars should be explained (closed form formula, call to a library function, bootstrap, etc.)
        \item The assumptions made should be given (e.g., Normally distributed errors).
        \item It should be clear whether the error bar is the standard deviation or the standard error of the mean.
        \item It is OK to report 1-sigma error bars, but one should state it. The authors should preferably report a 2-sigma error bar than state that they have a 96\% CI, if the hypothesis of Normality of errors is not verified.
        \item For asymmetric distributions, the authors should be careful not to show in tables or figures symmetric error bars that would yield results that are out of range (e.g. negative error rates).
        \item If error bars are reported in tables or plots, The authors should explain in the text how they were calculated and reference the corresponding figures or tables in the text.
    \end{itemize}

\item {\bf Experiments compute resources}
    \item[] Question: For each experiment, does the paper provide sufficient information on the computer resources (type of compute workers, memory, time of execution) needed to reproduce the experiments?
    \item[] Answer: \answerYes{} 
    \item[] Justification: The compute requirements for training and inference are specified in Appendix \ref{sec:app_implementation}.
    \item[] Guidelines:
    \begin{itemize}
        \item The answer NA means that the paper does not include experiments.
        \item The paper should indicate the type of compute workers CPU or GPU, internal cluster, or cloud provider, including relevant memory and storage.
        \item The paper should provide the amount of compute required for each of the individual experimental runs as well as estimate the total compute. 
        \item The paper should disclose whether the full research project required more compute than the experiments reported in the paper (e.g., preliminary or failed experiments that didn't make it into the paper). 
    \end{itemize}
    
\item {\bf Code of ethics}
    \item[] Question: Does the research conducted in the paper conform, in every respect, with the NeurIPS Code of Ethics \url{https://neurips.cc/public/EthicsGuidelines}?
    \item[] Answer: \answerYes{} 
    \item[] Justification: Yes, our research conforms with the NeurIPS Code of Ethics in every respect.
    \item[] Guidelines:
    \begin{itemize}
        \item The answer NA means that the authors have not reviewed the NeurIPS Code of Ethics.
        \item If the authors answer No, they should explain the special circumstances that require a deviation from the Code of Ethics.
        \item The authors should make sure to preserve anonymity (e.g., if there is a special consideration due to laws or regulations in their jurisdiction).
    \end{itemize}

\item {\bf Broader impacts}
    \item[] Question: Does the paper discuss both potential positive societal impacts and negative societal impacts of the work performed?
    \item[] Answer: \answerYes{} 
    \item[] Justification: The societal consequences of our work are limited to those arising from improved quality of generated synthetic data. We have provided a discussion regarding the same in Appendix \ref{sec:impact}.
    \item[] Guidelines:
    \begin{itemize}
        \item The answer NA means that there is no societal impact of the work performed.
        \item If the authors answer NA or No, they should explain why their work has no societal impact or why the paper does not address societal impact.
        \item Examples of negative societal impacts include potential malicious or unintended uses (e.g., disinformation, generating fake profiles, surveillance), fairness considerations (e.g., deployment of technologies that could make decisions that unfairly impact specific groups), privacy considerations, and security considerations.
        \item The conference expects that many papers will be foundational research and not tied to particular applications, let alone deployments. However, if there is a direct path to any negative applications, the authors should point it out. For example, it is legitimate to point out that an improvement in the quality of generative models could be used to generate deepfakes for disinformation. On the other hand, it is not needed to point out that a generic algorithm for optimizing neural networks could enable people to train models that generate Deepfakes faster.
        \item The authors should consider possible harms that could arise when the technology is being used as intended and functioning correctly, harms that could arise when the technology is being used as intended but gives incorrect results, and harms following from (intentional or unintentional) misuse of the technology.
        \item If there are negative societal impacts, the authors could also discuss possible mitigation strategies (e.g., gated release of models, providing defenses in addition to attacks, mechanisms for monitoring misuse, mechanisms to monitor how a system learns from feedback over time, improving the efficiency and accessibility of ML).
    \end{itemize}
    
\item {\bf Safeguards}
    \item[] Question: Does the paper describe safeguards that have been put in place for responsible release of data or models that have a high risk for misuse (e.g., pretrained language models, image generators, or scraped datasets)?
    \item[] Answer: \answerNA{} 
    \item[] Justification: Our paper poses no such risks. Our work focuses on posterior sampling for time series generation with constraints. The datasets used in our work are already available in the public domain and are being used extensively in other time series related research works. 
    \item[] Guidelines:
    \begin{itemize}
        \item The answer NA means that the paper poses no such risks.
        \item Released models that have a high risk for misuse or dual-use should be released with necessary safeguards to allow for controlled use of the model, for example by requiring that users adhere to usage guidelines or restrictions to access the model or implementing safety filters. 
        \item Datasets that have been scraped from the Internet could pose safety risks. The authors should describe how they avoided releasing unsafe images.
        \item We recognize that providing effective safeguards is challenging, and many papers do not require this, but we encourage authors to take this into account and make a best faith effort.
    \end{itemize}

\item {\bf Licenses for existing assets}
    \item[] Question: Are the creators or original owners of assets (e.g., code, data, models), used in the paper, properly credited and are the license and terms of use explicitly mentioned and properly respected?
    \item[] Answer: \answerYes{} 
    \item[] Justification: In the Experiments section (Section \ref{sec:experiments}), we have provided the necessary references and citations to the datasets and models used in our work.
    \item[] Guidelines:
    \begin{itemize}
        \item The answer NA means that the paper does not use existing assets.
        \item The authors should cite the original paper that produced the code package or dataset.
        \item The authors should state which version of the asset is used and, if possible, include a URL.
        \item The name of the license (e.g., CC-BY 4.0) should be included for each asset.
        \item For scraped data from a particular source (e.g., website), the copyright and terms of service of that source should be provided.
        \item If assets are released, the license, copyright information, and terms of use in the package should be provided. For popular datasets, \url{paperswithcode.com/datasets} has curated licenses for some datasets. Their licensing guide can help determine the license of a dataset.
        \item For existing datasets that are re-packaged, both the original license and the license of the derived asset (if it has changed) should be provided.
        \item If this information is not available online, the authors are encouraged to reach out to the asset's creators.
    \end{itemize}

\item {\bf New assets}
    \item[] Question: Are new assets introduced in the paper well documented and is the documentation provided alongside the assets?
    \item[] Answer: \answerNA{} 
    \item[] Justification: Our paper does not release new assets.
    \item[] Guidelines:
    \begin{itemize}
        \item The answer NA means that the paper does not release new assets.
        \item Researchers should communicate the details of the dataset/code/model as part of their submissions via structured templates. This includes details about training, license, limitations, etc. 
        \item The paper should discuss whether and how consent was obtained from people whose asset is used.
        \item At submission time, remember to anonymize your assets (if applicable). You can either create an anonymized URL or include an anonymized zip file.
    \end{itemize}

\item {\bf Crowdsourcing and research with human subjects}
    \item[] Question: For crowdsourcing experiments and research with human subjects, does the paper include the full text of instructions given to participants and screenshots, if applicable, as well as details about compensation (if any)? 
    \item[] Answer: \answerNA{} 
    \item[] Justification: The paper does not involve crowdsourcing nor research with human subjects.
    \item[] Guidelines:
    \begin{itemize}
        \item The answer NA means that the paper does not involve crowdsourcing nor research with human subjects.
        \item Including this information in the supplemental material is fine, but if the main contribution of the paper involves human subjects, then as much detail as possible should be included in the main paper. 
        \item According to the NeurIPS Code of Ethics, workers involved in data collection, curation, or other labor should be paid at least the minimum wage in the country of the data collector. 
    \end{itemize}

\item {\bf Institutional review board (IRB) approvals or equivalent for research with human subjects}
    \item[] Question: Does the paper describe potential risks incurred by study participants, whether such risks were disclosed to the subjects, and whether Institutional Review Board (IRB) approvals (or an equivalent approval/review based on the requirements of your country or institution) were obtained?
    \item[] Answer: \answerNA{} 
    \item[] Justification: Our work does not involve crowdsourcing nor research with human subjects.
    \item[] Guidelines:
    \begin{itemize}
        \item The answer NA means that the paper does not involve crowdsourcing nor research with human subjects.
        \item Depending on the country in which research is conducted, IRB approval (or equivalent) may be required for any human subjects research. If you obtained IRB approval, you should clearly state this in the paper. 
        \item We recognize that the procedures for this may vary significantly between institutions and locations, and we expect authors to adhere to the NeurIPS Code of Ethics and the guidelines for their institution. 
        \item For initial submissions, do not include any information that would break anonymity (if applicable), such as the institution conducting the review.
    \end{itemize}

\item {\bf Declaration of LLM usage}
    \item[] Question: Does the paper describe the usage of LLMs if it is an important, original, or non-standard component of the core methods in this research? Note that if the LLM is used only for writing, editing, or formatting purposes and does not impact the core methodology, scientific rigorousness, or originality of the research, declaration is not required.
    \item[] Answer: \answerNA{} 
    \item[] Justification: Our work focuses on posterior sampling with constraints, and the core method in our work does not use LLMs.
    \item[] Guidelines:
    \begin{itemize}
        \item The answer NA means that the core method development in this research does not involve LLMs as any important, original, or non-standard components.
        \item Please refer to our LLM policy (\url{https://neurips.cc/Conferences/2025/LLM}) for what should or should not be described.
    \end{itemize}

\end{enumerate}

\newpage
\appendix
\onecolumn
\section*{Appendix}
\section{Extended Related Works}
\label{sec:app_extended_related_works}
\subsection{Diffusion Models for Time Series Generation}
Time Series-specific tasks like forecasting \citep{rasul2021autoregressivedenoisingdiffusionmodels, yan2021scoregradmultivariateprobabilistictime, biloš2023modelingtemporaldatacontinuous,kollovieh2023predictrefinesynthesizeselfguiding, li2023generativetimeseriesforecasting, } and imputation \citep{tashiro2021csdi, alcaraz2022diffusion, yuan2024diffusionts, shen2023nonautoregressiveconditionaldiffusionmodels, meijer2024risediffusionmodelstimeseries} have been addressed using conditional DMs as well as guidance-based approaches \citep{li2023generativetimeseriesforecasting, yuan2024diffusionts}. Specifically, \cite{shen2023nonautoregressiveconditionaldiffusionmodels, li2023generativetimeseriesforecasting} propose novel sampling techniques for improved time series prediction for forecasting and imputation. \citep{alcaraz2023diffusion} and \citep{narasimhan2024time} have explored conditional time series generation for various domains, such as medical, energy, etc. These works aim to sample from a conditional distribution. Additionally, applications in analogous domains, such as audio \cite{kong2021diffwaveversatilediffusionmodel} and radio frequency waveform generation \cite{chi2024rfdiffusionradiosignalgeneration}, have also widely adopted diffusion generative modeling, with similar denoiser architectures as seen in the time series domain. However, there are limited prior works in the time series domain \citep{coletta2024constrained} that focus on generating constrained samples, which is the focus of this work.

\section{Additional Results}
In this section, we provide additional qualitative and quantitative results for the real-world datasets used in our experiments.

\subsection{Effects of increasing the number of constraints}
Here, we provide a qualitative example from the Stocks dataset showing \projectabbr's ability to track the real time series sample as the number of constraints is gradually increased.
\begin{figure*}[!ht]
\begin{center}
\includegraphics[width=\textwidth]{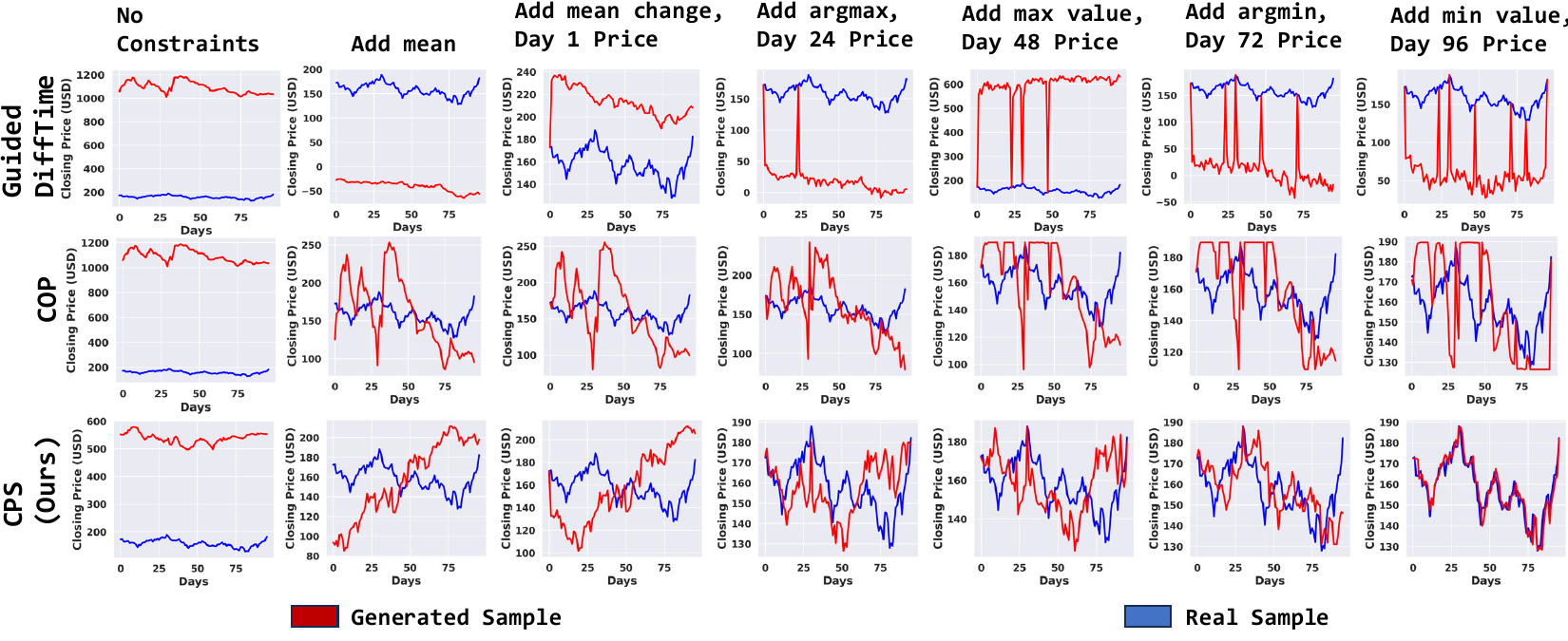}
\caption{\small{\textbf{\projectabbr\ tracks the real data samples as the number of constraints increases.} Increasing the number of constraints reduces the size of the constraint set, and an ideal approach should effectively generate samples that resemble the real time series samples that belong to the constraint set. Here, we show a qualitative example from the Stocks dataset. Observe that \projectabbr\ accurately tracks the real sample that concurs with the specified constraints, while other approaches suffer.}}
\vspace{-1em}
\label{fig:selling}
\end{center}
\end{figure*}

\subsection{Extended Baseline Comparisons}
\label{sec:app_extended_comparisons}
In this section, we provide comparisons against the Loss-DiffTime baseline from \citep{coletta2024constrained}. For a fair comparison, we use the same \textsc{Time Weaver-CSDI} backbone and train the denoiser with constraints as the condition input. The quantitative comparisons are provided in Table \ref{tab:loss_dt_comp}. As observed with prior approaches, in the absence of any principled projection step, the Loss-DiffTime approach fails to generate samples that adhere to hard constraints. However, due to the constraint-specific training, Loss-DiffTime performs as good as \projectabbr\ in terms of sample quality and similarity.

\begin{table}[!hbt]
\begin{center}
\scalebox{0.75}{
\begin{small}
\begin{sc}
\begin{tabular}{lllllllll}
\toprule
Dataset                                                                & Approach                                                & FTSD ($\downarrow$)           & TSTR   ($\downarrow$)                & DS   ($\downarrow$)                 & DTW    ($\downarrow$)                    & \begin{tabular}[c]{@{}l@{}}Constraint\\ Violation\\ Magnitude\end{tabular} ($\downarrow$) \\ \midrule
\multirow{2}{*}{\begin{tabular}[c]{@{}l@{}}Air\\ Quality\end{tabular}} & \begin{tabular}[c]{@{}l@{}}Loss DiffTime\end{tabular} & \textbf{0.0137} & \textbf{0.187$\pm$0.003} & \textbf{0.03$\pm$0.01}   & \textbf{2.18$\pm$1.48}                                                                   & 9.779                                                                      \\  
                                                                       & \begin{tabular}[c]{@{}l@{}}\cellcolor[HTML]{EFEFEF}CPS (Ours)\end{tabular}\cellcolor[HTML]{EFEFEF}    & \cellcolor[HTML]{EFEFEF}0.0234          & \cellcolor[HTML]{EFEFEF}0.19$\pm$0.003            & \cellcolor[HTML]{EFEFEF}0.06$\pm$0.01            & \cellcolor[HTML]{EFEFEF}2.35$\pm$1.48                                                              & \cellcolor[HTML]{EFEFEF}\textbf{0.0}                                                               \\ \midrule
\multirow{2}{*}{Stocks}                                                & \begin{tabular}[c]{@{}l@{}}Loss DiffTime\end{tabular} & 0.9897          & 0.045$\pm$0.002           & 0.379$\pm$0.015          & 7.75$\pm$6.05                                                                    & 237.492                                                                    \\ 
                                                                       & \begin{tabular}[c]{@{}l@{}}\cellcolor[HTML]{EFEFEF}CPS (Ours)\end{tabular}\cellcolor[HTML]{EFEFEF}    & \cellcolor[HTML]{EFEFEF}\textbf{0.0023} & \cellcolor[HTML]{EFEFEF}\textbf{0.041$\pm$0.001}  & \cellcolor[HTML]{EFEFEF}\textbf{0.006$\pm$0.004} & \cellcolor[HTML]{EFEFEF}\textbf{0.20$\pm$0.71}                                                       & \cellcolor[HTML]{EFEFEF}\textbf{0.0}                                                               \\ \midrule
\multirow{2}{*}{Traffic}                                               & \begin{tabular}[c]{@{}l@{}}Loss DiffTime\end{tabular} & 0.3653          & \textbf{0.29$\pm$0.01}    & 0.113$\pm$0.039          & \textbf{3.15$\pm$1.34}                                                                  & 2.993                                                                      \\
                                                                       & \begin{tabular}[c]{@{}l@{}}\cellcolor[HTML]{EFEFEF}CPS (Ours)\end{tabular}\cellcolor[HTML]{EFEFEF}    & \cellcolor[HTML]{EFEFEF}\textbf{0.2077} & \cellcolor[HTML]{EFEFEF}\textbf{0.29$\pm$0.001}   & \cellcolor[HTML]{EFEFEF}\textbf{0.02$\pm$0.01}   & \cellcolor[HTML]{EFEFEF}3.41$\pm$1.47                                                        & \cellcolor[HTML]{EFEFEF}\textbf{0.0}                                                               \\ \bottomrule
\end{tabular}
\end{sc}
\end{small}
}
\end{center}
\caption{\textbf{Despite constraint-specific training, Loss-DiffTime struggles to generate samples that adhere to the required constraint set.} Note that Loss-DiffTime performs better than \projectabbr\ on the sample quality and similarity metrics for the air quality dataset. However, due to the absence of projection steps, Loss-DiffTime fails to generate samples that adhere to hard constraints.}
\label{tab:loss_dt_comp}
\end{table}
\subsection{General Constraints Experiments}
\label{sec:app_general_constraints}
\rebuttal{We extended our experimental setup to generic constraints for the stocks dataset. Specifically, we imposed the Autocorrelation Function (ACF) at a specific lag as an equality constraint with an acceptable tolerance of 0.01. ACF at a specific lag $l$ for a univariate time series X of horizon $L$ is given by,}

\begin{equation}
    \rebuttal{ACF(X) = \frac{1}{(L-l)\sigma^2}\sum_{u=1}^{L-l}(X_u - \mu)(X_{u+l} - \mu),}
\end{equation}

\rebuttal{where $\mu = \mathbb{E}(X)$ and $\sigma^2 = \mathbb{E}[(X - \mu)^2]$, with $\mathbb{E}$ being the expectation operator. Here, $X_u$ and $X_{u+l}$ denote the time series values at the timestamps $u$ and $u+l$, respectively. Note that $\mu$ and $\sigma$ are not fixed. Along with the ACF equality constraint, we pose the OHLC constraint for the stocks dataset. We provide the results of this experiment in Table \ref{tab:general_constraints}. We chose ACF as it is one of the most popularly used techniques to extract the most relevant lag features for downstream tasks like forecasting.}
\begin{table}[!hbt]
\begin{center}
\scalebox{0.92}{
\begin{small}
\begin{sc}
\begin{tabular}{llllll}
\toprule
Approach        & FTSD ($\downarrow$)           & DTW ($\downarrow$)     & \begin{tabular}[c]{@{}l@{}}Constraint\\ Violation\\ Magnitude\end{tabular} ($\downarrow$)\\ \midrule
Guided-DiffTime & 1.4678          & 15.06$\pm$11.92                                                                            & 284.58                                                                     \\ 
COP             & 2.1949          & 72.11$\pm$35.97                                                                        & 0.9045                                                                     \\ 
\cellcolor[HTML]{EFEFEF}CPS (Ours)            & \cellcolor[HTML]{EFEFEF}\textbf{0.0014} & \cellcolor[HTML]{EFEFEF}\textbf{0.11$\pm$0.10}                                                      & \cellcolor[HTML]{EFEFEF}\textbf{0.01}                                                              \\ \bottomrule
\end{tabular}
\end{sc}
\end{small}}
\end{center}
\caption{\rebuttal{\textbf{\projectabbr\ outperforms baselines for OHLC and autocorrelation function value constraints.} Here, we use the stocks dataset and impose the Autocorrelation Function (ACF) value for a specified lag of 12 timestamps as a constraint along with the OHLC constraint. \projectabbr\ outperforms all the baselines in terms of sample quality, similarity, and constraint satisfaction metrics.}}
\label{tab:general_constraints}
\end{table}

Note that out of all approaches, \projectabbr\ provides the least constraint violation magnitude. Additionally, even though the projection step (line 5, \textbf{Algorithm \ref{alg: constrained synthesis}}) does not lead to the optimal solution (as the autocorrelation function is non-convex in the sample domain), \projectabbr's sample quality is much better than that of the baselines. We hypothesize that this effect is due to the iterated projection and denoising operations, which significantly reduce the adverse effects of the projection step.
\subsection{Choice of Penalty Coefficients}
\label{sec:app_gamma_choice}
Our choice of $\gamma(t)$ can take any functional form as long as $\gamma(t) \rightarrow \infty$ as $t \rightarrow 1$. This is to ensure constraint satisfaction for linear and convex constraint sets. In practice, we clip $\gamma(t)$ to a very large value, such as $10^5$, when performing the final denoising steps. Our current choice of $\gamma(t)$ decreases exponentially with $t$ (in other words, $\gamma(t)$ increases as we denoise). We also experimented with linearly and quadratically decreasing values of $\gamma(t)$, with a very high value ($10^5$) for $t=1$. We noted that the choice of $\gamma(t)$ has very little effect on the sample quality of the generated samples, with differences in the third decimal (check Table \ref{tab:gamma}).
\begin{table}[!hbt]
\centering
\begin{sc}
\rebuttal{
\begin{tabular}{llll}
\toprule
Choice of $\gamma(t)$ & Air Quality & Traffic & Stocks \\ \midrule
Linear             & \textbf{0.0222}      & 0.2053  & \textbf{0.0013} \\
Quadratic          & 0.0226      & \textbf{0.2027}  & 0.0016       \\
Exponential        & 0.0234      & 0.2077  & 0.0023 \\ \bottomrule
\end{tabular}
}
\end{sc}
\caption{\rebuttal{\textbf{Different choices of $\gamma(t)$ provide similar sample quality metrics.} Here, we report the FTSD score as the sample quality metric. Note that the effect of different choices of $\gamma(t)$ is only reflected in the third decimal and is insignificant.}}
\label{tab:gamma}
\end{table}
\subsection{Systematic Evaluation Of Different Constraint Categories}
\label{sec:systematic_eval_constraints}
We provide the FTSD (sample quality) comparison for individual constraint categories evaluated on the traffic dataset (check Table \ref{tab:constraint_wise}). Note that \projectabbr\ outperforms the baseline approaches for most of the individual constraint categories.

\begin{table}[!hbt]
\centering
\begin{sc}
\scalebox{0.75}{
\begin{tabular}{llllllll}
\toprule
Approach        & Argmax        & Argmin        & \begin{tabular}[c]{@{}l@{}}Value at \\ Argmax\end{tabular} & \begin{tabular}[c]{@{}l@{}}Value at\\ Argmin\end{tabular} & Mean          & \begin{tabular}[c]{@{}l@{}}Mean\\ Change\end{tabular} & \begin{tabular}[c]{@{}l@{}}Value at Timestamps\\ 1,24,48,72,96\end{tabular} \\ \midrule
Guided-DiffTime & 0.21          & 0.22          & 0.21                                                       & 0.30                                                      & 0.22          & 0.22                                                  & 0.24                                                                        \\ 
COP-FT          & 0.27          & 0.25          & 0.20                                                       & 0.26                                                      & 0.28          & \textbf{0.19}                                         & 0.23                                                                        \\ 
PDM             & 0.21          & \textbf{0.20} & \textbf{0.19}                                              & 0.23                                                      & 0.19          & 0.20                                                  & 0.20                                                                        \\ 
CPS (Ours)      & \textbf{0.20} & 0.21          & \textbf{0.19}                                              & \textbf{0.21}                                             & \textbf{0.18} & 0.20                                                  & 0.19                                                                        \\ \bottomrule
\end{tabular}
}
\end{sc}
\caption{\textbf{\projectabbr\ predominantly outperforms other baselines on sample quality metrics while being evaluated on individual constraints.} The table provides a constraint-specific evaluation of the sample quality (FTSD, lower is better). Here, we experimented with the traffic dataset. Note that for the majority of the constraints, \projectabbr\ outperforms existing baselines.}
\label{tab:constraint_wise}
\end{table}
\subsection{Effect Of The Guidance Weight On The Constraint Violation Magnitude}
\label{sec:gamma_sweep}
We analyze the effect of the guidance weight on the constraint violation magnitude for the Air Quality dataset. Specifically, we experimented with the following guidance weights: $0.00001, 0.00005, 0.0001, 0.0005, 0.001, 0.005, 0.01,$ and $0.05$. We refer the readers to Table \ref{tab:guidance_weight_sweep} for the results. Overall, we observe that the constraint violation magnitude either stays the same (DiffusionTS) or increases (Guided DiffTime) with increasing guidance weight. In our experiments, we choose the guidance weight corresponding to the smallest constraint violation magnitude.
\begin{table}[!hbt]
\centering
\begin{sc}
\scalebox{0.48}{
\begin{tabular}{lllllllll}
\toprule
Approach        & \begin{tabular}[c]{@{}l@{}}Guidance \\ Weight = 0.00001\end{tabular} & \begin{tabular}[c]{@{}l@{}}Guidance \\ Weight = 0.00005\end{tabular} & \begin{tabular}[c]{@{}l@{}}Guidance \\ Weight = 0.0001\end{tabular} & \begin{tabular}[c]{@{}l@{}}Guidance \\ Weight = 0.0005\end{tabular} & \begin{tabular}[c]{@{}l@{}}Guidance \\ Weight = 0.001\end{tabular} & \begin{tabular}[c]{@{}l@{}}Guidance \\ Weight = 0.005\end{tabular} & \begin{tabular}[c]{@{}l@{}}Guidance \\ Weight = 0.01\end{tabular} & \begin{tabular}[c]{@{}l@{}}Guidance \\ Weight = 0.05\end{tabular} \\ \midrule
Guided DiffTime & 23.21                                                                & 23.94                                                                & 25.37                                                               & 51.27                                                               & 113.31                                                             & 951.62                                                             & 2929                                                              & 116084                                                            \\ 
Diffusion-TS          & 5.61                                                                 & 5.67                                                                 & 5.68                                                                & 5.68                                                                & 5.7                                                                & 5.67                                                               & 5.64                                                              & 5.67                                                              \\ \bottomrule
\end{tabular}
}
\end{sc}
\caption{\textbf{Effect of the guidance weight on the constraint violation magnitude for the guidance-based approaches.} Notice that for both Guided DiffTime and Diffusion-TS, increasing the guidance weight does not result in the reduction of the constraint violation magnitude. The experiments were conducted for the Air Quality dataset.}
\label{tab:guidance_weight_sweep}
\end{table}
\subsection{Discussion On The Update Step Consistency}
\label{sec:consistency_analysis}

To validate our update step empirically, we implemented a variant of \projectabbr\, where we update the noise estimate based on $\zohatprojzt$ (check Section \ref{sec:approach}). This updated noise estimate is used in line 7 of \textbf{Algorithm \ref{alg: constrained synthesis}}. We refer to this as \textbf{\projectabbr\ with noise correction}. The sample quality results, in the presence and absence of noise correction, are provided in Table \ref{tab:consistency}. Note that updating for consistency significantly affects the sample quality. We hypothesize that updating both $\zohatzt$ and $\hat{\epsilon}$ can push $z_{t-1}$ off the noise manifold for the step $t-1$, resulting in poor denoising, and thereby affecting the sample quality.

\begin{table}[!hbt]
\centering
\begin{sc}
\scalebox{0.75}{
\begin{tabular}{llllll}
\toprule
Metrics & Approach                                                            & Air Quality   & Traffic       & Stocks        & Waveforms     \\ \midrule
FTSD    & CPS                                                                 & \textbf{0.0234}        & \textbf{0.2077}        & \textbf{0.0023 }       & \textbf{0.0029}        \\ 
        & \begin{tabular}[c]{@{}l@{}}CPS With Noise\\ Correction\end{tabular} & 0.2085        & 0.5147        & 0.0426        & 0.0694        \\ \midrule
DTW     & CPS                                                                 & \textbf{2.35 $\pm$ 1.48} & \textbf{1.83 $\pm$ 1.16} & \textbf{0.20 $\pm$ 0.71} & \textbf{0.23 $\pm$ 0.17} \\
        & \begin{tabular}[c]{@{}l@{}}CPS With Noise\\ Correction\end{tabular} & 3.07 $\pm$ 1.86 & 4.05 $\pm$ 1.26 & 0.54 $\pm$ 1.10 & 0.46 $\pm$ 0.33 \\ \bottomrule
\end{tabular}
}
\end{sc}
\caption{\textbf{\projectabbr\ provides better sample quality metrics in the absence of the noise correction step.} Updating both posterior mean and noise (\projectabbr\ with noise correction) results in overdependency on the projection step and pushes $z_{t-1}$ off the noise manifold for the denoising step $t-1$. This takes $z_{t-1}$ far away from the training domain of the denoiser, resulting in poor sample quality as shown by the numbers above.}
\label{tab:consistency}
\end{table}
\subsection{Extending Constrained Posterior Sampling To Time Series Imputation}
\label{sec:app_imputation}
Here, we extend \projectabbr\ to the time series imputation task by imposing the \textbf{value at} constraint to all the timestamps where the true values are available. We compare \projectabbr\ against other state-of-the-art constrained generation methods - PDM \cite{christopher2024constrained} and PRODIGY \cite{sharma2024diffuse}. The results are provided in Figure \ref{fig:imputation}. Note that \projectabbr\ outperforms the baselines on all real-world datasets. This result further cements our claim that projecting the posterior mean estimate, instead of the intermediate latents as in the case of PDM and PRODIGY, is better for constrained sample generation.

\begin{figure*}[!tbh]
    \begin{center}
    \includegraphics[width=0.9\textwidth]{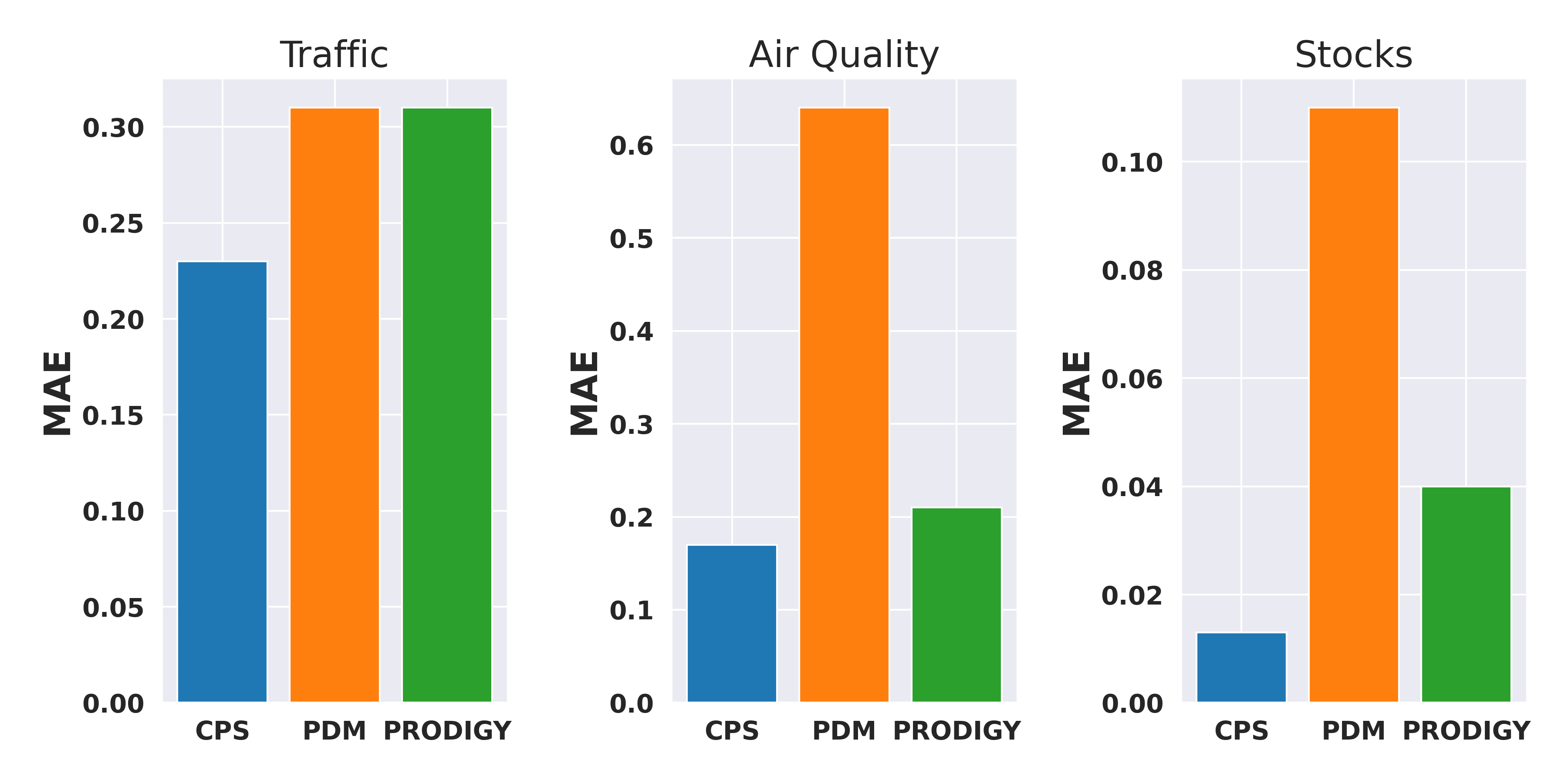}
    \vskip -0.15in
    \caption{\textbf{\projectabbr\ outperforms state-of-the-art approaches on imputation tasks.} Here, we showcase \projectabbr's ability to perform imputation. We set the masking rate at 50\%, \emph{i.e.,} 50\% of the values are masked. Observe that in all three real-world datasets, \projectabbr\ comfortably outperforms PDM \cite{christopher2024constrained} and PRODIGY \cite{sharma2024diffuse} (lowest Mean Absolute Error (MAE)). Note that the main difference between these approaches and \projectabbr\ is that \projectabbr\ projects the posterior mean estimate, whereas these approaches project the intermediate noisy latents.}
    
    \label{fig:imputation}
\end{center}
\vskip -0.2in
\end{figure*}
\section{Proofs}
In this section, we provide detailed proof for the theorems stated in the manuscript.
\subsection{Proof of Theorem 1}
\label{sec:app_proof1}

We first describe the assumption on the constraint set. The constraint set is defined as $\constraintset = \{ z \mid f_{\mathcal{C}}(z) = 0 \}$, where $\equalconstraint : \proofdomain \rightarrow \constraintfnoutputdomain$, and the penalty function $\Pi(z) = \|\equalconstraint(z)\|_2^2$ has $L$-Lipschitz continuous gradients, \emph{i.e.,} $\|\nabla \Pi(u)-\nabla \Pi(v)\|_2 \leq L\| u-v\|_2 \ \forall \ u,v \in \proofdomain$.

Line 7 of the \textbf{Algorithm \ref{alg: constrained synthesis}} modifies the traditional DDIM sampling by replacing $\zohatzt$ with $\zohatprojzt$. Without this modification, the DDIM sampling denotes the following reverse process when started with $x_T \sim \proofstandardnormal$, where $\mathbf{0}_n$ indicates the zero mean vector in $\proofdomain$ and $\proofdomaini$ is the identity matrix in $\mathbb{R}^{n \times n}$:
\begin{equation}
 p_{\theta,t}(z_{t-1} \mid z_t) = \left\{\begin{matrix}
\initialsamplingdistrabbrunperturbed & \mathrm{if} \ t = 1, \\
 q_{\sigma,t}(z_{t-1} \mid z_t, \zohatzt) & \mathrm{otherwise}, \\
\end{matrix}\right. 
\label{eq:generative_distribution_appendix}
\end{equation}

where $q_{\sigma,t}(z_{t-1} \mid z_t, \zohatzt)$ represents the PDF of the Gaussian distribution $\mathcal{N}\left(\sqrtalphabarprev \zohatzt + \sqrtoneminusalphabarprevcontrol\denoiser(z_t,t), \control^2 \proofdomaini\right)$ with $\control$ as the DDIM control parameter. Similarly, $\initialsamplingdistrabbrunperturbed$ is the PDF of the Gaussian distribution with mean $\zohatzone$ and covariance matrix $\sigma_1^2 \proofdomaini$ \citep{song2022denoisingdiffusionimplicitmodels}. 

Note that sampling from $q_{\sigma,t}(z_{t-1} \mid z_t, \zohatzt)$ provides the DDIM sampling step (check Equation \ref{eq:ddim_sampling}). 

We reiterate that the main modification with respect to the DDIM sampling approach is the projection step in line 5 of \textbf{Algorithm \ref{alg: constrained synthesis}}. Therefore, we first analyze the projection step,
\begin{equation}
 \zohatprojzt = \argmin_z \frac{1}{2}\left(\| z - \zohatzt \|_2^2 + \gamma(t) \penaltyproof\right).
 \label{eq:unconstrained_min}
\end{equation}

Here, $\zohatzt = \frac{z_t - \sqrt{1 - \bar{\alpha}_t}\denoiser(z_t, t)}{\sqrt{\bar{\alpha}_t}}$ (line 3, predicted $z_0$). We will denote the objective function $\frac{1}{2}\left(\| z - \zohatzt \|_2^2 + \gamma(t) \penaltyproof\right)$ as $g(z)$. Note that we replaced the constraint violation function $\Pi(z)$ by $\penaltyproof$ for this case. Given that $\|\equalconstraint\|_2^2$ has $L$-Lipschitz continuous gradients, Equation \ref{eq:unconstrained_min} can be written as a series of gradient updates with a suitable step size such that the value of the objective function decreases for each gradient update.

From the statement, we observe that $\gamma(t) > 0 \ \forall \ t \in [1,T]$. Under this condition and Assumption \ref{assumption:1}, note that the function $g(z)$ is convex and has $\left(\frac{2+\gamma(t)L}{2}\right)$-Lipschitz continuous gradients, as $\| z - \zohatzt \|_2^2$ has 2-Lipschitz continuous gradients, $\gamma(t) \penaltyproof$ has $\left(\gamma(t)L\right)$-Lipschitz continuous gradients, and the fraction $\frac{1}{2}$ makes $g(z)$ to have $\left(\frac{2+\gamma(t)L}{2}\right)$-Lipschitz continuous gradients. Let $\stepsize$ be the step size of the projection step. From \citep{nocedal1999numerical}, we know that $\stepsize \in (0, 2/(2+\gamma(t)L))$ ensures that the objective function in Equation \ref{eq:unconstrained_min} reduces after each gradient update. We denote the gradient update as:
\begin{align}
\left.\begin{matrix} \optstepn =\ \optstepnminusone - \stepsize \nabla_z(g(z))
\end{matrix}\right|_{\optstepnminusone},
\label{eq:projection_recursion}
\end{align}

where $\optstepinit = \zohatzt$ and $\zohatprojzt =\ \optstepend$. Here, $\nproj$ is the total number of gradient update steps. The iteration in Equation \ref{eq:projection_recursion} always leads to $\zohatprojzt$ deterministically. Therefore, the projection step can be considered sampling from a Dirac delta distribution centered at $\zohatprojzt$, \emph{i.e.,} $\delta(z - \zohatprojzt)$. Consequently, using the law of total probability, the reverse process corresponding to the denoising step $t \ \forall \ t \in [2,T]$ in \textbf{Algorithm \ref{alg: constrained synthesis}} is given by 
\begin{align*}
 p_{\theta,t}(z_{t-1} \mid z_t) = \int{p_{\theta,t}(z_{t-1}, \zohat \mid z_t)}d\zohat,
\end{align*}
where $\zohat \in \proofdomain$. This can be simplified using Bayes' rule as
\begin{align*}
 p_{\theta,t}(z_{t-1} \mid z_t) = \int{\delta(\zohat-\zohatprojzt)q_{\sigma,t}(z_{t-1} \mid z_t, \zohat)}d\zohat.
\end{align*}
The above equation stems from the fact that the distribution of $z_0$ conditioned on $z_t$ is a Dirac delta distribution centered at $\zohatprojzt$. Since $\delta(x-y) = \delta(y-x)$ and using the sifting property of a Dirac delta function $\left(\int f(z) \delta(a-z) dz = f(a)\right)$, we get
\begin{equation}
 p_{\theta,t}(z_{t-1} \mid z_t) = q_{\sigma,t}(z_{t-1} \mid z_t, \zohatprojzt) \ \forall \ t \ \in \ [2,T]. 
\label{eq:modified_generative_distribution_except1}
\end{equation}
Similarly, we repeat the steps for $t=1$,
\begin{align*}
     p_{\theta,1}(z_0 \mid z_1) &= \int{p_{\theta,1}(z_0, \zohat \mid z_t)}d\zohat, \\
     p_{\theta,1}(z_0 \mid z_1) &= \int{\delta(\zohat-\zohatprojzone)p_{\theta,\mathrm{init}}(z_0 \mid \zohat)}d\zohat, \\
     p_{\theta,1}(z_0 \mid z_1) &= \initialsamplingdistrabbr.
\end{align*}
Combining the two, we get
\begin{align}
 p_{\theta,t}(z_{t-1} \mid z_t) = \left\{\begin{matrix}
\initialsamplingdistrabbr & \mathrm{if} \ t = 1, \\
 q_{\sigma,t}(z_{t-1} \mid z_t, \zohatprojzt ) & \mathrm{otherwise},
\end{matrix}\right.
\label{eq:modified_generative_distribution_appendix}
\end{align}
where $q_{\sigma,t}(z_{t-1} \mid z_t, \zohatprojzt)$ represents the PDF of the Gaussian distribution $\mathcal{N}(\sqrtalphabarprev \zohatprojzt + \sqrtoneminusalphabarprevcontrol\denoiser(z_t,t), \control^2 \proofdomaini)$ with $\control$ as the DDIM control parameter. Similarly, $\initialsamplingdistrabbrunperturbed$ is the PDF of the Gaussian distribution with mean $\zohatprojzone$ and covariance matrix $\sigma_1^2 \proofdomaini$ \citep{song2022denoisingdiffusionimplicitmodels}. 

We note that the value of $\sigma_1$ is set to 0 in \textbf{Algorithm \ref{alg: constrained synthesis}}. However, similar to \citep{song2022denoisingdiffusionimplicitmodels}, for theoretical analysis, we consider a negligible value for $\sigma_1$ ($\sim 10^{-12}$) to ensure that the generative process is supported everywhere. In other words, $\sigma_1$ is chosen to be so low such that for $\sigma_1 \simeq 0$, $\initialsamplingdistrabbr \simeq \delta(z_0 - \zohatprojzone).$

Now, we show that the exact DDIM reverse process (check Equation \ref{eq:generative_distribution_appendix}) can be obtained from Equation \ref{eq:modified_generative_distribution_appendix} in the case where there are no constraints. Here, note that in the absence of any constraint, the projection step can be written as $\zohatprojzt = \argmin_z\frac{1}{2}\|z-\zohatzt\|_2^2$, in which case $\zohatprojzt = \zohatzt$.

For $t \in [2,T]$, using the law of total probability, we get
\begin{align}
    p_{\theta,t}(z_{t-1} \mid z_t) = \int{\delta(\zohat-\zohatzt)q_{\sigma,t}(z_{t-1} \mid z_t, \zohat)}d\zohat, 
\end{align}
which simplifies further to
\begin{align}
p_{\theta,t}(z_{t-1} \mid z_t) = q_{\sigma,t}(z_{t-1} \mid z_t, \zohatzt).    
\end{align}
The above equation stems from the same sifting property of Dirac delta functions. The same applies to $t = 1$, except that after the projection step since there is no necessity for constraint satisfaction, we sample from $\initialsamplingdistrabbrunperturbed$, which is a Gaussian distribution with mean $\zohatzone$ and covariance matrix $\sigma_1^2 \proofdomaini$. Combining both cases, we observe that without any constraints the exact DDIM reverse process can be recovered from \textbf{Algorithm \ref{alg: constrained synthesis}} for all $t \in [1,T]$.

\subsection{Proof of Theorem 2}
\label{sec:app_remark1}
We note that the intermediate samples in a $T$-step reverse sampling process are denoted as $z_T, \dots, z_0$, where $z_0 = \gen$ and $z_T \sim \proofstandardnormal$. Once again, we reiterate the assumptions. We consider the real data distribution to be Gaussian with mean $\mu \in \proofdomain$ and covariance matrix $\proofdomaini$, \emph{i.e.,} $\prooftruedistr$. The constraint set $\constraintset$ is defined as $\mathcal{C} = \{z \mid Az=b\}$ with $A \in \mathbb{R}^{m \times n}$ such that $rank(A) = n$, where $m \geq n$. Additionally, for the real data distribution $\prooftruedistr$ and the constraint set $\mathcal{C} = \{z \mid Az=y\}$, there exists a unique sample $\gtsoln \in \proofdomain$ that satisfies the desired constraints in $\mathcal{C}$. 
We build on the problem setting presented in prior works \citep{rout2023theoretical, rout2023solving, chen2023score} to understand the implications of our constrained sampling algorithm. While the prior works
focus on a DDPM-based sampler, 
our analysis focuses on constraint satisfaction with a DDIM-based sampler.

Given that $rank(A) = n$ for $A \in \mathbb{R}^{m \times n}$ with $m \geq n$, we note that $(A^TA)^{-1}$ exists. Consequently, $\lambdamin(A^TA) > 0$. From the theorem statement, we have $\gamma(t) = \specificgamma$, with $k > 1$. Immediately, we note that for all $t \in [1,T]$, $\gamma(t) > 0$. More specifically, $t \in [1,T]$, $\gamma(t) > \frac{2}{\lambdamin(A^TA)}$.

First, we denote the convergence property for \textbf{Algorithm \ref{alg: constrained synthesis}} as the variation of the upper bound on the terminal error $\terminalerror$, where $\gen$ is obtained from \textbf{Algorithm \ref{alg: constrained synthesis}}, as a function of $T$. Particularly, we are interested in how quickly the upper bound on the terminal error decays to reach below a fixed value that is determined by the choice of $k$. Later, we will show that for suitable choices of $k$, the upper bound can decay below the tolerance limit $\delta$.

The proof is divided into 2 parts. First, we obtain the expression for $z_{t-1}$ in terms of $z_t$. Then, we obtain an upper bound for $\|z_0-\gtsoln\|_2$ or $\terminalerror$, as from \textbf{Algorithm \ref{alg: constrained synthesis}} we note that $z_0 = \gen$. 

First, we note that for deterministic sampling, we have the DDIM control parameters $\sigma_1, \dots, \sigma_T = 0$. Therefore, the DDIM reverse sampling step from \textbf{Algorithm \ref{alg: constrained synthesis}} (line 7) can be written as 
\begin{align}
    z_{t-1} = \sqrtalphabarprev \zohatprojzt + \sqrtoneminusalphabarprev \denoiser(z_t,t).
    \label{eq:deterministic_ddim}
\end{align}
Since the true data distribution is Gaussian, the optimal denoiser $\optdenoiser(z_t,t)$ can be expressed analytically for any diffusion step $t$. Therefore, the deterministic sampling step can be written as 
\begin{align*}
z_{t-1} = \sqrtalphabarprev \zohatprojztopt + \sqrtoneminusalphabarprev \optdenoiser(z_t,t).
\end{align*}
We can obtain an analytical expression for the optimal denoiser from \textbf{Lemma \ref{lemma:optimal_denoiser}}. Using Equation \ref{eq:optimal_denoiser} from \textbf{Lemma \ref{lemma:optimal_denoiser}}, we note that the optimal denoiser at the diffusion step $t$ is
\begin{equation}
    \optdenoiser(z_t,t) = -\sqrtoneminusalphabar (\sqrtalphabar \mu - z_t).
    \label{eq:opt_denoiser_proof2}
\end{equation}
    
Now, we obtain the expression for $\zohatprojztopt$. Note that the constraint violation function is defined as $\Pi(z) = \| y - Az \|_2^2$. Consequently, we note that the objective function in line 5 of \textbf{Algorithm  \ref{alg: constrained synthesis}}, \emph{i.e.,} $\frac{1}{2}(\| z - \zohatzt \|_2^2 + \gamma(t) \| y - Az \|_2^2)$, is convex with respect to $z$ for $\gamma(t) > 0$. As such, we use \textbf{Lemmas \ref{lemma:optimal_denoiser}} and \textbf{\ref{lemma:projection_step}} to obtain the expression for $\zohatprojztopt$,
\begin{equation}
    \zohatprojztopt = [\proofdomaini + \gamma(t)A^TA]^{-1}[\mu - \alphabar \mu + \sqrtalphabar z_t + \gamma(t)A^Ty].
    \label{eq:zohatprojzt_proof2}
\end{equation}
We substitute the expressions for $\optdenoiser(z_t,t)$ from Equation \ref{eq:opt_denoiser_proof2} and $\zohatprojztopt$ from Equation \ref{eq:zohatprojzt_proof2}, respectively, in addition to replacing $y$ with $A\gtsoln$, to obtain $z_{t-1}$ in terms of $z_t$:
\begin{align*}
z_{t-1} = & 
 \sqrtalphabarprev \inversemat \left[\mu - \alphabar \mu + \sqrtalphabar z_t + \gamma(t)A^Ty\right] \\
 & + \sqrtoneminusalphabarprev(-\sqrtoneminusalphabar (\sqrtalphabar \mu - z_t)), \\
z_{t-1} = & 
 \sqrtalphabarprev \inversemat \left[\mu - \alphabar \mu + \sqrtalphabar z_t + \gamma(t)A^Ty\right] \\
 & - \sqrtoneminusalphabarprev \sqrtoneminusalphabar \sqrtalphabar \mu +  \sqrtoneminusalphabarprev \sqrtoneminusalphabar z_t. \\
\end{align*}
On further simplification, we get
\begin{align*}
 z_{t-1} = & 
 \sqrtalphabarprev \inversemat \left[\mu - \alphabar \mu + \sqrtalphabar z_t + \gamma(t)A^TA\gtsoln\right] \\
 & - \sqrtoneminusalphabarprev \sqrtoneminusalphabar \sqrtalphabar \mu +  \sqrtoneminusalphabarprev \sqrtoneminusalphabar z_t,\\
  z_{t-1} = & 
 \left[\sqrtalphabarprev \sqrtalphabar \inversemat  + \sqrtoneminusalphabarprev \sqrtoneminusalphabar \proofdomaini \right]z_t \\ &+ \left[\sqrtalphabarprev \inversemat - \alphabar \sqrtalphabarprev \inversemat\right] \mu \\ & - \left[\sqrtoneminusalphabarprev \sqrtoneminusalphabar \sqrtalphabar \proofdomaini \right ] \mu + \gamma(t)\sqrtalphabarprev \inversemat A^TA\gtsoln, \\
 z_{t-1} = & 
 \left[\sqrtalphabarprev \sqrtalphabar \inversemat  + \sqrtoneminusalphabarprev \sqrtoneminusalphabar \proofdomaini \right]z_t \\ &+ \left[\oneminusalphabar\sqrtalphabarprev \inversemat\right] \mu - \left[\sqrtoneminusalphabarprev \sqrtoneminusalphabar \sqrtalphabar \proofdomaini \right ] \mu \\&+ \gamma(t)\sqrtalphabarprev \inversemat A^TA\gtsoln.
\end{align*}
To this end, we obtain
\begin{align*}
  z_{t-1} = & 
K_t z_t + E_t \mu - F_t \mu + \gamma(t)\sqrtalphabarprev \inversemat A^TA\gtsoln,
\end{align*}
where we have the following matrix definitions,
\begin{align}
K_t = \Kt,
\label{eq:kt_defn}
\end{align}
\begin{align}
E_t = \left[\oneminusalphabar\sqrtalphabarprev \inversemat\right],
\label{eq:et_defn}
\end{align}
\begin{align}
F_t = \left[\sqrtoneminusalphabarprev \sqrtoneminusalphabar \sqrtalphabar \proofdomaini \right ].
\label{eq:ft_defn}
\end{align}
The goal is to obtain the upper bound for $\terminalerror$. Note that $\terminalerror = \|z_0 - \gtsoln\|_2$. So, first, we subtract $\gtsoln$ from both sides to obtain
\begin{align*}
  z_{t-1} - \gtsoln= & 
K_t z_t + E_t \mu - F_t \mu + \gamma(t)\sqrtalphabarprev \inversemat A^TA\gtsoln - \gtsoln.
\end{align*}
Further, we add and subtract $K_t\gtsoln$ to the right side to obtain
\begin{align*}
  z_{t-1} - \gtsoln= & 
K_t z_t - K_t\gtsoln + E_t \mu - F_t \mu + \gamma(t)\sqrtalphabarprev \inversemat A^TA\gtsoln - \gtsoln + K_t \gtsoln.
\end{align*}
We further simplify the above expression to obtain
\begin{align*}
  z_{t-1} - \gtsoln= & 
K_t \left(z_t - \gtsoln\right) + E_t \mu - F_t \mu + K_t \gtsoln + D_t \gtsoln,
\end{align*}
where the matrix definition of $D_t$ is
\begin{align}
    D_t = \gamma(t)\sqrtalphabarprev \inversemat A^TA - \proofdomaini.
\label{eq:dt_defn}
\end{align}

Now, we obtain the expression for $\| z_{t-1} - \gtsoln \|_2$ in terms of $\| z_t - \gtsoln \|_2$.
\begin{align*}
\| z_{t-1} - \gtsoln \|_2 = \|K_t (z_t - \gtsoln)+ E_t\mu - F_t\mu + K_t\gtsoln + D_t\gtsoln\|_2.
\end{align*}
Applying the triangle inequality repeatedly, we get
\begin{align}
\| z_{t-1} - \gtsoln \|_2 \leq \|K_t (z_t - \gtsoln)\|_2 + \|K_t\gtsoln\|_2 + \|D_t\gtsoln\|_2 + \|E_t\mu\|_2 + \|F_t\mu\|_2.
\label{eq:first_recursion}
\end{align}
Before obtaining the upper bound for $\| z_0 - \gtsoln \|$, for $\gamma(t) > 0$, we will first show that $\|K_t\|_2, \|D_t\|_2, \|E_t\|_2, \|F_t\|_2 < 1 \ \forall \ t \ \in [1, T].$ Here $\|K_t\|_2$ refers to the spectral norm of the matrix $K_t$. To show this, we establish a few relationships that will be the recurring theme used in proving that $\|K_t\|_2, \|D_t\|_2, \|E_t\|_2, \|F_t\|_2 < 1 \ \forall \ t \ \in [1, T].$ 

The spectral norm of the matrix M is defined as $\|M\|_2 = \max_{x \neq \mathrm{0}}\frac{\|Mx\|_2}{\|x\|_2}$. From this definition, we immediately note the following two inequalities:
\begin{itemize}
    \item $\|Mx\|_2 \leq \|M\|_2 \|x\|_2$ as $\|M\|_2 = \max_{x \neq \mathrm{0}}\frac{\|Mx\|_2}{\|x\|_2}$,
    \item $\|MN\|_2 = \max_{x \neq \mathrm{0}}\frac{\|MNx\|_2}{\|x\|_2} \leq \max_{x \neq \mathrm{0}}\frac{\|M\|_2\|Nx\|_2}{\|x\|_2} \leq \max_{x \neq \mathrm{0}}\frac{\|M\|_2\|N\|_2\|x\|_2}{\|x\|_2} = \|M\|_2\|N\|_2.$
\end{itemize}

Further, we note that the following are well-established properties for spectral norms and positive definite matrices. Consider a positive definite matrix $M$, \emph{i.e.,} $M \succ 0$. Then, we have:
\begin{itemize}
    \item $\|M\|_2$ is equal to the largest eigen value of $M$, \emph{i.e.,} $\lambdamax(M)$,
    \item$\|M^{-1}\|_2 = \frac{1}{\lambdamin(M)}$ as the eigenvalues of $M^{-1}$ are the reciprocal of the eigenvalues of $M$,
    \item $\|-M\|_2 = \|M\|_2$.

\end{itemize}

We refer the readers to \textbf{Lemmas \ref{lemma:mat1}, \ref{lemma:mat3}}, and \textbf{\ref{lemma:mat4}}, where we show that $\|K_t\|_2, \|E_t\|_2, \|F_t\|_2 < 1 \ \forall \ t \ \in [1, T]$, if $\gamma(t) > 0$.

Similarly, \textbf{Lemma \ref{lemma:mat2}} shows that $\|D_t\|_2 < 1 \ \forall \ t \ \in [1, T]$, if $\gamma(t) > \frac{2}{\lambdamin{A^TA}}$. 

We first apply the inequality $\|Mx\|_2 \leq \|M\|_2 \|x\|_2$ to simplify Equation \ref{eq:first_recursion} as follows:
\begin{align}
\| z_{t-1} - \gtsoln \|_2 \leq \|K_t\|_2 \|z_t - \gtsoln\|_2 + \|K_t\gtsoln\|_2 + \|D_t\gtsoln\|_2 + \|E_t\mu\|_2 + \|F_t\mu\|_2.
\label{eq:final_recursion}
\end{align}
Therefore, we can recursively obtain the upper bound for \rebuttal{$\| z_t- \gtsoln\|_2$} in terms of $\|z_T - \gtsoln\|_2$. This process, repeated $T$ times, provides the upper bound for \rebuttal{$\| z_0- \gtsoln\|_2$}.

\begin{align}
    \| z_0 - \gtsoln \|_2 &\leq  \| K_1 \|_2 \| K_2 \|_2 \dots \| K_T \|_2 \|(z_T - \gtsoln)\|_2 \nonumber\\ & \qquad + (\| K_1\|_2 + \|K_1\|_2\|K_2\|_2 + \dots + \| K_1 \|_2 \| K_2 \|_2 \dots \| K_{T-1} \|_2 \| K_T\|_2)\| \gtsoln \|_2 \nonumber\\ & \qquad +(\| D_1\|_2 + \|K_1\|_2\|D_2\|_2 + \dots + \| K_1 \|_2 \| K_2 \|_2 \dots \| K_{T-1} \|_2 \| D_T\|_2)\| \gtsoln \|_2 \nonumber\\ & \qquad + (\| E_1\|_2 + \|K_1\|_2\|E_2\|_2 + \dots + \| K_1 \|_2 \| K_2 \|_2 \dots \| K_{T-1} \|_2 \| E_T\|_2)\| \mu \|_2 \nonumber\\ & \qquad + (\| F_1\|_2 + \|K_1\|_2\|F_2\|_2 + \dots + \| K_1 \|_2 \| K_2 \|_2 \dots \| K_{T-1} \|_2 \| F_T\|_2)\| \mu \|_2. 
    \label{eq:full_expression}
\end{align}

Let $\spectralnormubnotation = \max_t\left( \| K_1 \|_2, \| K_2 \|_2, \dots, \| K_T \|_2\right)$. Since for $\gamma(t) > 0$, $\|K_1\|_2, \dots, \rebuttal{\|K_T\|_2} < 1$, we note that $\spectralnormubnotation < 1$.

Therefore, $\| K_1 \|_2 \| K_2 \|_2 \dots \| K_T \|_2$ can be upper bounded by $\spectralnormubnotation^T$.

Additionally, note that $\|K_1\|_2\|K_2\|_2 \leq \|K_1\|_2$ as $\|K_2\|_2 < 1$. Therefore, $(\|K_1\|_2 + \|K_1\|_2\|K_2\|_2 + \dots + \| K_1 \|_2 \| K_2 \|_2 \dots \| K_{T-1} \|_2 \| K_T\|_2)$ can be upper bounded by $T\|K_1\|_2$.

Similarly, $(\|K_1\|_2\|D_2\|_2 + \dots + \| K_1 \|_2 \| K_2 \|_2 \dots \| K_{T-1} \|_2 \| D_T\|_2)$ can be upperbounded by $(T-1)\|K_1\|_2$. 

The same applies to $(\|K_1\|_2\|E_2\|_2 + \dots + \| K_1 \|_2 \| K_2 \|_2 \dots \| K_{T-1} \|_2 \| E_T\|_2)$ and $(\|K_1\|_2\|F_2\|_2 + \dots + \| K_1 \|_2 \| K_2 \|_2 \dots \| K_{T-1} \|_2 \| F_T\|_2)$. 

Therefore, the upper bound in Equation \ref{eq:full_expression} can be simplified as
\begin{align}
    \| z_0 - \gtsoln \| &\leq  \spectralnormubnotation^T\|(z_T - \gtsoln)\|_2 + T\|K_1\|_2 \|\gtsoln \|_2 +(\| D_1\|_2 + (T-1)\|K_1\|_2)\| \gtsoln \|_2 \nonumber\\ & \qquad + (\| E_1\|_2 + (T-1)\|K_1\|_2)\| \mu \|_2 + (\| F_1\|_2 + (T-1)\|K_1\|_2)\| \mu \|_2. 
    \label{eq:full_expression_simplified}
\end{align}

Consequently, in \textbf{Lemmas \ref{lemma:k1}, \ref{lemma:D1}, \ref{lemma:E1}, \ref{lemma:mat4}}, we show 
\begin{align}
\|K_1\|_2 &\leq
  \frac{\sqrt{\bar{\alpha}_1}}{1 + \gamma(1)\lambdamin(A^TA)} < 1 \ \text{if} \ \gamma(1) > 0,\nonumber\\
\|D_1\|_2 &\leq \begin{matrix}\frac{1}{\gamma(1)\lambdamin(A^TA))-1} < 1 \ \text{if} \ \gamma(1) > \frac{2}{\lambdamin(A^TA)},
\end{matrix} \nonumber\\
\|E_1\|_2 &\leq \frac{1-\bar{\alpha}_1}{1+\gamma(1)\lambdamin(A^TA)} < 1 \ \text{if} \ \gamma(1) > 0,\nonumber\\
\|F_1\|_2 &= 0.
\label{eq:inequalities_abstract}
\end{align}

For our choice of $\gamma(1) = \specificgammaone$, we first note that $\gamma(1) > 0$ and $\gamma(1) > \frac{2}{\lambdamax(A^TA)}$ for $k > 1$. Therefore, we can rewrite the above inequalities as
\begin{align}
\|K_1\|_2 &\leq
  \frac{\sqrt{\bar{\alpha}_1}}{1 + 2kT},\nonumber\\
\|D_1\|_2 &\leq \begin{matrix}\frac{1}{2kT-1},
\end{matrix} \nonumber\\
\|E_1\|_2 &\leq \frac{1-\bar{\alpha}_1}{1+2kT},\nonumber\\
\|F_1\|_2 &= 0.
\label{eq:inequalities_specific}
\end{align}

Therefore, Equation \ref{eq:full_expression_simplified} can be upper bounded using Equation \ref{eq:inequalities_specific} as shown below:
\begin{equation}
\begin{aligned}
    \| z_0 - \gtsoln \|_2 &\leq  \spectralnormubnotation^T \|(z_T - \gtsoln)\|_2 + T\left(\frac{\sqrt{\bar{\alpha}_1}}{1 + 2kT}\right) \| \gtsoln\|_2  + \left(\frac{1}{2kT-1}\right) \| \gtsoln\|_2 + \\ & \qquad \left(\frac{1-\bar{\alpha}_1}{1+2kT} \right)\| \mu\|_2 + (T-1)\left(\frac{\sqrt{\bar{\alpha}_1}}{1 + 2kT}\right)\| \gtsoln\|_2   + 2(T-1)\left(\frac{\sqrt{\bar{\alpha}_1}}{1 + 2kT}\right)\| \mu\|_2.
    \label{eq:full_expression_further_simplified}
\end{aligned}
\end{equation}

This can be further simplified to 
\begin{equation*}
\begin{aligned}
    \| z_0 - \gtsoln \|_2 &\leq  \spectralnormubnotation^T \|(z_T - \gtsoln)\|_2 + \left(\frac{\sqrt{\bar{\alpha}_1}}{(1/T) + 2k}\right) \| \gtsoln\|_2  + \left(\frac{1}{2kT-1}\right) \| \gtsoln\|_2 + \\ & \qquad \left(\frac{1-\bar{\alpha}_1}{1+2kT} \right)\| \mu\|_2 + (1-1/T)\left(\frac{\sqrt{\bar{\alpha}_1}}{(1/T) + 2k}\right)\| \gtsoln\|_2   + \\ & \qquad 2(1-1/T)\left(\frac{\sqrt{\bar{\alpha}_1}}{(1/T) + 2k}\right)\| \mu\|_2.   
\end{aligned}
\end{equation*}

Upper bounding $\left(\frac{\sqrt{\bar{\alpha}_1}}{(1/T) + 2k}\right) \| \gtsoln\|_2$ by $\left(\frac{\sqrt{\bar{\alpha}_1}}{2k}\right) \| \gtsoln\|_2$, $(1-1/T)\left(\frac{\sqrt{\bar{\alpha}_1}}{(1/T) + 2k}\right)\| \gtsoln\|_2$ by $\left(\frac{\sqrt{\bar{\alpha}_1}}{2k}\right) \| \gtsoln\|_2$, and $2(1-1/T)\left(\frac{\sqrt{\bar{\alpha}_1}}{(1/T) + 2k}\right)\| \mu\|_2$ by $\left(\frac{\sqrt{\bar{\alpha}_1}}{2k}\right) \| \mu\|_2$, we get
\begin{equation}
\begin{aligned}
    \| z_0 - \gtsoln \|_2 &\leq  \spectralnormubnotation^T \|(z_T - \gtsoln)\|_2 + \left(\frac{1}{2kT-1}\right) \| \gtsoln\|_2 + \left(\frac{1-\bar{\alpha}_1}{1+2kT} \right)\| \mu\|_2 + \\ & \qquad \left(\frac{\sqrt{\bar{\alpha}_1}}{k}\right) \left(\| \mu\|_2 + \| \gtsoln \|_2 \right). 
    \label{eq:full_expression_further_simplified_rebuttal_final}
\end{aligned}
\end{equation}

$\left(\frac{\sqrt{\bar{\alpha}_1}}{k}\right) \left(\| \mu\|_2 + \| \gtsoln \|_2 \right)$ is a constant. 

$\spectralnormubnotation^T \|(z_T - \gtsoln)\|_2$ converges faster than $\left(\frac{1}{2kT-1}\right) \| \gtsoln\|_2 + \left(\frac{1-\bar{\alpha}_1}{1+2kT} \right)\| \mu\|_2$. Note that $\spectralnormubnotation < 1$. 

Therefore, \textbf{the convergence rate for the upper bound on the terminal error is $\mathcal{O}(1/T)$}.

As $T \rightarrow \infty$, we observe the following:
\begin{align*}
    \lim_{T \rightarrow \infty}\spectralnormubnotation^T \|(z_T - \gtsoln)\|_2 &= 0 \ \ \ \left( \spectralnormubnotation < 1\right),\\
    \lim_{T \rightarrow \infty}T\left(\frac{\sqrt{\bar{\alpha}_1}}{1 + 2kT}\right) \| \gtsoln\|_2 &= \left(\frac{\sqrt{\bar{\alpha}_1}}{2k}\right) \|\gtsoln\|_2 \ \ \ (\text{if} \ k > 0), \\
    \lim_{T \rightarrow \infty}\left(\frac{1}{2kT-1}\right) \| \gtsoln\|_2 &= 0, \\
    \lim_{T \rightarrow \infty}\left(\frac{1-\bar{\alpha}_1}{1+2kT}\right) \| \mu\|_2 &= 0, \\
    \lim_{T \rightarrow \infty}(T-1)\left(\frac{\sqrt{\bar{\alpha}_1}}{1 + 2kT}\right)\| \gtsoln\|_2 &= \left(\frac{\sqrt{\bar{\alpha}_1}}{2k}\right)\|\gtsoln\|_2 \ \ \ (\text{if} \ k > 0), \\
    \lim_{T \rightarrow \infty}2(T-1)\left(\frac{\sqrt{\bar{\alpha}_1}}{1 + 2kT}\right)\| \mu\|_2 &= \left(\frac{\sqrt{\bar{\alpha}_1}}{k}\right)\|\mu\|_2 \ \ \ (\text{if} \ k > 0).
\end{align*}

Therefore, in the limit $T \rightarrow \infty$, we have
\begin{align*}
    \| z_0 - \gtsoln \|_2 \leq \frac{\sqrt{\bar{\alpha}_1}}{k}\left(\|\gtsoln\|_2 + \|\mu\|_2\right) \ \text{or},\\
    \terminalerror \leq \frac{\sqrt{\bar{\alpha}_1}}{k}\left(\|\gtsoln\|_2 + \|\mu\|_2\right).
\end{align*}

Consequently, for the terminal error $\terminalerror$ to be upper bounded by $\delta$, which is the desired tolerance limit, the following inequality has to be true:

\begin{equation}
\begin{aligned}
    \terminalerror \leq \frac{\sqrt{\bar{\alpha}_1}}{k}\left(\|\gtsoln\|_2 + \|\mu\|_2\right) \leq \delta.
    \label{eq:final_inequality}
\end{aligned}
\end{equation}

This can be ensured for $k \geq \frac{\sqrt{\bar{\alpha}_1}}{\delta}\left(\|\gtsoln\|_2 + \|\mu\|_2\right)$.

However, from Equation \ref{eq:inequalities_specific}, we note that $k$ has to be greater than $1$.

Therefore, for $k > \mathrm{max}\left( 1, \frac{\sqrt{\bar{\alpha}_1}}{\delta}\left(\|\gtsoln\|_2 + \|\mu\|_2\right)\right)$,  we have $\terminalerror \leq \delta$.
\begin{lemma}
Suppose \textbf{Assumption \ref{assumption:2}} holds. Consider a $T$-step diffusion process with coefficients $\diffusioncoefficients$ such that $\bar{\alpha}_0 = 1$, $\bar{\alpha}_T = 0$, $\alphabar \in [0,1]$. The optimal denoiser $\epsilon^*(z_t,t)$ is given by 
\begin{align*}
    \epsilon^*(z_t,t) = -\sqrtoneminusalphabar(\sqrtalphabar \mu - z_t).
\end{align*}
\label{lemma:optimal_denoiser}
\end{lemma}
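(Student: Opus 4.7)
\section*{Proof plan for Lemma \ref{lemma:optimal_denoiser}}

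The plan is to exploit the fact that under Assumption \ref{assumption:2}, the pair $(z_0, z_t)$ is jointly Gaussian, so the MMSE estimator of the injected noise $\epsilon$ given $z_t$ has a closed form, and this estimator coincides with the optimal denoiser induced by the training objective in Eq. \ref{eq:dm_objective}.

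First, I would recall that the population minimizer of the denoising loss $\mathbb{E}_{z_0,\epsilon,t}\|\epsilon - \epsilon_\theta(z_t,t)\|_2^2$ at a fixed step $t$ is the conditional mean $\epsilon^*(z_t,t) = \mathbb{E}[\epsilon \mid z_t]$, so the task reduces to computing this conditional expectation for the Gaussian prior $z_0 \sim \mathcal{N}(\mu, \mathbf{I}_n)$ with $z_t = \sqrt{\bar{\alpha}_t}\,z_0 + \sqrt{1-\bar{\alpha}_t}\,\epsilon$ and $\epsilon \sim \mathcal{N}(\mathbf{0}_n,\mathbf{I}_n)$ independent of $z_0$. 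Since $(z_0,\epsilon,z_t)$ is jointly Gaussian, I would first read off the marginal $z_t \sim \mathcal{N}(\sqrt{\bar{\alpha}_t}\,\mu, \mathbf{I}_n)$ (using $\bar{\alpha}_t + (1-\bar{\alpha}_t) = 1$) and the cross-covariance $\mathrm{Cov}(\epsilon,z_t) = \sqrt{1-\bar{\alpha}_t}\,\mathbf{I}_n$.

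Then, by the standard Gaussian conditioning formula,
\begin{align*}
\mathbb{E}[\epsilon \mid z_t] &= \mathbb{E}[\epsilon] + \mathrm{Cov}(\epsilon,z_t)\,\mathrm{Var}(z_t)^{-1}\bigl(z_t - \mathbb{E}[z_t]\bigr) \\
&= \sqrt{1-\bar{\alpha}_t}\,\bigl(z_t - \sqrt{\bar{\alpha}_t}\,\mu\bigr) \\
&= -\sqrt{1-\bar{\alpha}_t}\,\bigl(\sqrt{\bar{\alpha}_t}\,\mu - z_t\bigr),
\end{align*}
which is exactly the claimed expression. An equivalent route, should I prefer to avoid quoting the conditioning formula, is to apply Tweedie's identity $\mathbb{E}[z_0 \mid z_t] = (z_t + (1-\bar{\alpha}_t)\nabla \log p_t(z_t))/\sqrt{\bar{\alpha}_t}$ with the closed-form score $\nabla \log p_t(z_t) = -(z_t - \sqrt{\bar{\alpha}_t}\mu)$ of the Gaussian marginal, and then substitute into $\epsilon^*(z_t,t) = (z_t - \sqrt{\bar{\alpha}_t}\,\mathbb{E}[z_0 \mid z_t])/\sqrt{1-\bar{\alpha}_t}$.

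There is no real obstacle: the argument is an exercise in Gaussian algebra once one observes that the forward process preserves Gaussianity. The only subtlety worth flagging is the boundary case $\bar{\alpha}_T = 0$, where the formula still makes sense (returning $\sqrt{1}\,z_t = z_t$, consistent with $z_T \sim \mathcal{N}(\mathbf{0}_n,\mathbf{I}_n)$ being pure noise), and $\bar{\alpha}_0 = 1$, where $z_0$ is noiseless and the prefactor $\sqrt{1-\bar{\alpha}_0} = 0$ correctly kills the estimator.
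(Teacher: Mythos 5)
Your proof is correct, but it reaches the formula by a different road than the paper. The paper's argument computes the marginal $q_t(z_t)=\mathcal{N}(\sqrt{\bar{\alpha}_t}\,\mu,\mathbf{I}_n)$, reads off its score $\nabla_{z_t}\log q_t(z_t)=\sqrt{\bar{\alpha}_t}\,\mu-z_t$, and then invokes the cited score-to-noise identity $\epsilon^*(z_t,t)=-\sqrt{1-\bar{\alpha}_t}\,\nabla_{z_t}\log q_t(z_t)$ — i.e., exactly the ``equivalent route'' you flag only as an alternative. Your primary route instead characterizes the optimal denoiser as the population minimizer of the loss in Eq.~\ref{eq:dm_objective}, namely $\mathbb{E}[\epsilon\mid z_t]$, and evaluates it by the Gaussian conditioning formula using $\mathrm{Cov}(\epsilon,z_t)=\sqrt{1-\bar{\alpha}_t}\,\mathbf{I}_n$ and $\mathrm{Var}(z_t)=\mathbf{I}_n$. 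Both derivations hinge on the same observation that $z_t\sim\mathcal{N}(\sqrt{\bar{\alpha}_t}\,\mu,\mathbf{I}_n)$, and they agree because $\mathbb{E}[\epsilon\mid z_t]=-\sqrt{1-\bar{\alpha}_t}\,\nabla_{z_t}\log q_t(z_t)$ in general. What your version buys is self-containment: it needs only the elementary fact that the $L^2$-optimal predictor is the conditional mean, rather than an externally cited identity, at the modest cost of having to make explicit that ``optimal denoiser'' means the minimizer of the training objective (the paper leaves this implicit by working directly with the score). Your boundary-case sanity checks at $\bar{\alpha}_T=0$ and $\bar{\alpha}_0=1$ are correct and harmless.
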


\begin{proof}
We first observe the distribution of $z_t$. For the diffusion forward process, we know that $z_t = \sqrtalphabar z_0 + \sqrtoneminusalphabar \epsilon$, where $\epsilon \sim \proofstandardnormal$. Note that $z_0$ is a sample from the Gaussian distribution $\prooftruedistr$. Consequently, we note that $z_t$ is a sample from the Gaussian distribution $\mathcal{N}(\sqrtalphabar \mu + \mathrm{0}_n, \alphabar \proofdomaini + \oneminusalphabar \proofdomaini)$. On simplification, we note that $z_t$ is a sample from $\mathcal{N}(\sqrtalphabar \mu, \proofdomaini)$. We denote the PDF of $z_t$'s marginal distribution as $q_t(z_t)$. Since we are using the optimal denoiser, the reverse process PDF at $t$, induced by the optimal denoiser, $p_{*,t}(z_t)$ is the same as the forward process PDF at $t$, which is $q_t(z_t)$. Here, note that in Section \ref{sec:background_dm}, we denote the reverse process PDF as $p_{\theta,t}$, where the reverse process is governed by the denoiser $\denoiser$. We replace this notation with $p_{*,t}(z_t)$ as we are using the optimal denoiser.

Therefore, the score function at $t$ is given by $\nabla_{z_t} \mathrm{log} \ p_{*,t}(z_t) = \nabla_{z_t} \mathrm{log} \ q_t(z_t)$. The score function for the Gaussian distribution $q_t(z_t)$ with mean $\sqrtalphabar \mu$ and covariance matrix $\proofdomaini$, \emph{i.e.,} $\nabla_{z_t}(\mathrm{log} \ q_t(z_t))$ is given by $\sqrtalphabar \mu - z_t$. Finally, \cite{luo2022understanding} shows that for the diffusion step $t$, the optimal denoiser can be obtained from the score function using the following expression:
\begin{equation}
    \epsilon^*(z_t,t) = -\sqrtoneminusalphabar \nabla_{z_t} \mathrm{log} \ q_t(z_t) \ \ \Rightarrow \ \ \epsilon^*(z_t,t) = -\sqrtoneminusalphabar(\sqrtalphabar \mu - z_t).
    \label{eq:optimal_denoiser}
\end{equation}
\end{proof}
\begin{lemma}
Suppose \textbf{Assumption \ref{assumption:2}} holds. Consider a $T$-step diffusion process with coefficients $\diffusioncoefficients$ such that $\bar{\alpha}_0 = 1$, $\bar{\alpha}_T = 0$, $\alphabar \in [0,1]$. The projected posterior mean estimate, $\zohatprojzt$, from the projection step in line 5 of \textbf{Algorithm \ref{alg: constrained synthesis}} is given by
\begin{equation*}
    \zohatprojzt = [I + \gamma(t)A^TA]^{-1}[\mu - \alphabar \mu + \sqrtalphabar z_t + \gamma(t)A^Ty],
\end{equation*}
where the penalty coefficients from \textbf{Algorithm \ref{alg: constrained synthesis}}, $\gamma(t)$, are non-negative, i.e., $\gamma(t) > 0 \ \forall \ t \in [1, \dots, T].$
\label{lemma:projection_step}
\end{lemma}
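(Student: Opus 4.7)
The plan is to observe that the projection step in line 5 of Algorithm \ref{alg: constrained synthesis}, specialized to the affine constraint set $\mathcal{C} = \{z \mid Az = y\}$, becomes an unconstrained strictly convex quadratic program. Specifically, with $\Pi(z) = \|y - Az\|_2^2$, the objective
\begin{equation*}
 g(z) \;=\; \tfrac{1}{2}\|z - \zohatzt\|_2^2 + \tfrac{\gamma(t)}{2}\|y - Az\|_2^2
\end{equation*}
has Hessian $I_n + \gamma(t) A^TA$, which is positive definite whenever $\gamma(t) > 0$ (since $A^TA \succeq 0$ and $I_n \succ 0$). So $g$ admits a unique minimizer found by setting $\nabla g(z) = 0$.

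Next, I would compute $\nabla g(z) = (z - \zohatzt) - \gamma(t) A^T(y - Az)$ and rearrange $\nabla g(z) = 0$ into the normal equation $(I_n + \gamma(t) A^TA)\,z = \zohatzt + \gamma(t) A^Ty$. Under Assumption \ref{assumption:2}, $\text{rank}(A) = n$ implies $A^TA \succ 0$, so $(I_n + \gamma(t)A^TA)$ is invertible and we obtain
\begin{equation*}
 \zohatprojzt \;=\; [I_n + \gamma(t) A^TA]^{-1}\bigl(\zohatzt + \gamma(t) A^Ty\bigr).
\end{equation*}

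The remaining task is to rewrite $\zohatzt$ in the form $\mu - \alphabar \mu + \sqrtalphabar z_t$. Using the optimal denoiser from Lemma \ref{lemma:optimal_denoiser}, namely $\epsilon^*(z_t,t) = -\sqrtoneminusalphabar(\sqrtalphabar \mu - z_t)$, and substituting into $\zohatzt = (z_t - \sqrtoneminusalphabar \,\epsilon^*(z_t,t))/\sqrtalphabar$, a short algebraic simplification gives
\begin{equation*}
 \zohatzt \;=\; \frac{z_t + (1-\alphabar)(\sqrtalphabar \mu - z_t)}{\sqrtalphabar} \;=\; \sqrtalphabar\, z_t + (1-\alphabar)\mu \;=\; \mu - \alphabar \mu + \sqrtalphabar z_t.
\end{equation*}
Plugging this into the expression for $\zohatprojzt$ yields the claimed identity. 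There is no real obstacle here: the only subtlety is making sure to invoke the optimal denoiser (Lemma \ref{lemma:optimal_denoiser}) rather than an arbitrary $\denoiser$, since the closed-form expression in the statement uses the Gaussian structure of the prior through the appearance of $\mu$. The invertibility of $I_n + \gamma(t) A^TA$ for $\gamma(t) > 0$ follows immediately from positive definiteness, so no extra conditions on $\gamma(t)$ beyond $\gamma(t) > 0$ are needed at this step.
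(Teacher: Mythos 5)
Your proposal is correct and follows essentially the same route as the paper's proof: set the gradient of the strictly convex quadratic objective to zero, solve the resulting normal equation $(I_n + \gamma(t)A^TA)z = \zohatzt + \gamma(t)A^Ty$, and then substitute the optimal denoiser from Lemma \ref{lemma:optimal_denoiser} to simplify $\zohatzt$ to $\mu - \alphabar\mu + \sqrtalphabar z_t$. Your observation that invertibility already follows from $A^TA \succeq 0$ together with $\gamma(t)>0$ is a slight (harmless) strengthening of the paper's appeal to $A^TA \succ 0$ via the rank condition.
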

\begin{proof}
We start with the unconstrained minimization in line 5 of \textbf{Algorithm \ref{alg: constrained synthesis}}, given by
\begin{equation*}
    \zohatprojzt = \argmin_z \frac{1}{2} \left( \| z - \zohatzt \|_2^2 + \gamma(t) \| y-Az\|_2^2 \right).
\end{equation*}
Note that we replaced the penalty function $\Pi(z)$ with $\| y-Az\|_2^2$, as we are required to generate a sample that satisfies the constraint $y=Az$.

Since the objective function is convex with respect to $z$, we obtain the global minimum by setting the gradient with respect to $z$ to 0, \emph{i.e.,} 
\begin{align*}
 \nabla_z \left(\frac{1}{2} \left( \| z - \zohatzt \|_2^2 + \gamma(t) \| y-Az\|_2^2 \right)\right) &= 0,  \\
  \nabla_z \left(\frac{1}{2} \left( z^Tz - 2z^T\zohatzt + \zohatzt^T\zohatzt\right)\right) + \gamma(t) \nabla_z \left(\frac{1}{2}\| y-Az\|_2^2 \right) &= 0, \\ 
  z - \zohatzt + \gamma(t) \nabla_z \left(\frac{1}{2}\| y-Az\|_2^2 \right) &= 0, \\
  z - \zohatzt + \gamma(t) \nabla_z \left(\frac{1}{2} \left(y^Ty + z^TA^TAz -2y^TAz \right)\right) &= 0, \\
    z - \zohatzt + \gamma(t)  \left(A^TAz -A^Ty \right) &= 0, \\
    \left[\proofdomaini + \gamma(t)A^TA\right]z - \left( \zohatzt + \gamma(t)A^Ty \right) &= 0.
\end{align*}
Solving this, we obtain the following expression for $\zohatprojzt$:
\begin{equation*}
    \zohatprojzt = [\proofdomaini + \gamma(t)A^TA]^{-1}(\zohatzt + \gamma(t)A^Ty).
\end{equation*}
Note that the inverse of $\mat$ exists as $A^TA \succ 0$ (from \textbf{Assumption \ref{assumption:2}}) and $\gamma(t) > 0$, which ensures $\mat \succ 0$.
Further, substituting the expression for $\zohatzt$, we obtain
\begin{equation*}
    \zohatprojzt = [\proofdomaini + \gamma(t)A^TA]^{-1}\left[\frac{z_t - \sqrtoneminusalphabar \denoiser(z_t,t)}{\sqrtalphabar} + \gamma(t)A^Ty\right].
\end{equation*}
Given that $\truedistr = \prooftruedistr$, for the $T$-step diffusion process with coefficients $\diffusioncoefficients$, we use the expression for the optimal denoiser $\epsilon^*(z_t,t)$ (check Equation \ref{eq:optimal_denoiser}) in place of $\denoiser(z_t,t)$ to obtain 
\begin{align*}
    \zohatprojztopt &= [\proofdomaini + \gamma(t)A^TA]^{-1}\left[\frac{z_t + \oneminusalphabar (\sqrtalphabar \mu - z_t)}{\sqrtalphabar} + \gamma(t)A^Ty\right], \\
    \zohatprojztopt &= [\proofdomaini + \gamma(t)A^TA]^{-1}\left[\frac{z_t + \sqrtalphabar \mu - z_t -\alphabar \sqrtalphabar \mu + \alphabar z_t}{\sqrtalphabar} + \gamma(t)A^Ty\right].
\end{align*}
This can be finally simplified to obtain the expression
\begin{equation*}
    \zohatprojztopt = [\proofdomaini + \gamma(t)A^TA]^{-1}[\mu - \alphabar \mu + \sqrtalphabar z_t + \gamma(t)A^Ty]. 
\end{equation*}
\end{proof}
\begin{lemma}
Suppose \textbf{Assumption \ref{assumption:2}} holds. Consider a $T$-step diffusion process with coefficients $\diffusioncoefficients$ such that $\bar{\alpha}_0 = 1$, $\bar{\alpha}_T = 0$, $\alphabar \in [0,1].$ If $ \ \alphabar < \alphabarprev$ and if the penalty coefficients from \textbf{Algorithm \ref{alg: constrained synthesis}} are given by $\gamma(t) > 0 \ \forall \ t \in [1,T]$, the spectral norm of the matrix $K_t$, $\|K_t\|_2$, with $K_t$ defined as in Equation \ref{eq:kt_defn}, is less than 1. 
\label{lemma:mat1}
\end{lemma}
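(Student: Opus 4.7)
The plan is to bound $\|K_t\|_2$ by the triangle inequality applied to its two summands, handle each separately, and then combine them via a Cauchy--Schwarz argument that exploits the strict monotonicity $\alphabar < \alphabarprev$.

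First, I would analyze the matrix $\inversemat$. By Assumption~\ref{assumption:2}, $\mathrm{rank}(A) = n$, so $A^TA \succ 0$ and $\lambdamin(A^TA) > 0$. Combined with $\gamma(t) > 0$, the matrix $\mat$ is symmetric positive definite with smallest eigenvalue $1 + \gamma(t)\lambdamin(A^TA)$. Hence its inverse is symmetric positive definite and its spectral norm (the largest eigenvalue) satisfies
\begin{equation*}
\left\|\inversemat\right\|_2 = \frac{1}{1 + \gamma(t)\lambdamin(A^TA)} < 1.
\end{equation*}

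Next, I would apply the triangle inequality for the spectral norm to $K_t$, using that $\sqrtalphabarprev, \sqrtalphabar, \sqrtoneminusalphabarprev, \sqrtoneminusalphabar \geq 0$:
\begin{equation*}
\|K_t\|_2 \;\le\; \sqrtalphabarprev\sqrtalphabar \left\|\inversemat\right\|_2 + \sqrtoneminusalphabarprev\sqrtoneminusalphabar \;\le\; \frac{\sqrtalphabarprev\sqrtalphabar}{1 + \gamma(t)\lambdamin(A^TA)} + \sqrtoneminusalphabarprev\sqrtoneminusalphabar.
\end{equation*}
Since $\gamma(t)\lambdamin(A^TA) > 0$, the first term is strictly less than $\sqrtalphabarprev\sqrtalphabar$, giving
\begin{equation*}
\|K_t\|_2 \;<\; \sqrtalphabarprev\sqrtalphabar + \sqrtoneminusalphabarprev\sqrtoneminusalphabar.
\end{equation*}

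The key remaining step, which I view as the main (and only genuinely nontrivial) obstacle, is to show $\sqrtalphabarprev\sqrtalphabar + \sqrtoneminusalphabarprev\sqrtoneminusalphabar \le 1$. I would obtain this by Cauchy--Schwarz applied to the vectors $(\sqrtalphabarprev, \sqrtoneminusalphabarprev)$ and $(\sqrtalphabar, \sqrtoneminusalphabar)$:
\begin{equation*}
\sqrtalphabarprev\sqrtalphabar + \sqrtoneminusalphabarprev\sqrtoneminusalphabar \;\le\; \sqrt{\alphabarprev + \oneminusalphabarprev}\,\sqrt{\alphabar + \oneminusalphabar} \;=\; 1,
\end{equation*}
with equality iff the two vectors are proportional, i.e.\ iff $\alphabar = \alphabarprev$. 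Under the hypothesis $\alphabar < \alphabarprev$ this is strict, so in fact $\sqrtalphabarprev\sqrtalphabar + \sqrtoneminusalphabarprev\sqrtoneminusalphabar < 1$. Chaining the bounds yields $\|K_t\|_2 < 1$, as required. The argument holds for every $t \in [1,T]$ since $\gamma(t) > 0$ and $\alphabar < \alphabarprev$ are assumed for all such $t$.
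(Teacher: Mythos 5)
Your proof is correct and follows essentially the same route as the paper: triangle inequality on the two summands of $K_t$, the bound $\left\|[\mathbf{I}_n+\gamma(t)A^TA]^{-1}\right\|_2 \le \left(1+\lambdamin(A^TA)\gamma(t)\right)^{-1}$, and the scalar inequality $\sqrt{\bar{\alpha}_{t-1}}\sqrt{\bar{\alpha}_t}+\sqrt{1-\bar{\alpha}_{t-1}}\sqrt{1-\bar{\alpha}_t}<1$. The only (harmless) differences are that you read off the smallest eigenvalue of $\mathbf{I}_n+\gamma(t)A^TA$ directly from the spectral shift rather than invoking Weyl's inequality, and you establish the scalar inequality via Cauchy--Schwarz with its equality condition instead of the paper's squaring argument in Lemma~\ref{lemma:diffusion_hypothesis}.
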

\begin{proof}
We want to show that
\begin{align*}
  \|K_t\|_2 = \left\|\Kt\right\|_2 < 1.   
\end{align*}
The spectral norm follows the triangle inequality. Therefore, after simplifying the expression with triangle inequality, we need to show
\begin{align*}
\left\|\sqrtalphabarprev \sqrtalphabar \inversemat\right\|_2 + \left\|\sqrtoneminusalphabarprev \sqrtoneminusalphabar \proofdomaini\right\|_2 < 1.   
\end{align*}

We note that for $\gamma(t) > 0$, $\mat \succ 0$, and therefore $\inversemat \succ 0$. Similarly, $\proofdomaini \succ 0$. 

Further, we use the identities that if $M \succ 0$, then $\|M\|_2 = \lambdamax(M)$, $\|M^{-1}\|_2 = \frac{1}{\lambdamin(M)}$, and $\|cM\|_2 = |c|\|M\|_2$. 

Therefore, $\|\proofdomaini\|_2 = 1$, $\|\inversemat\|_2 = \frac{1}{\lambdamin(\mat)}$. Further, note that $\sqrtalphabarprev \sqrtalphabar \geq 0$ and $\sqrtoneminusalphabarprev \sqrtoneminusalphabar \geq 0$. Substituting these, the inequality simplifies to
\begin{align*}
\frac{\sqrtalphabarprev \sqrtalphabar}{\lambdamin(\mat)} + \sqrtoneminusalphabarprev \sqrtoneminusalphabar < 1.   
\end{align*}

Therefore, it is sufficient to show that
\begin{align*}
\frac{\sqrtalphabarprev \sqrtalphabar}{\lambdamin(\mat)} < 1 - \sqrtoneminusalphabarprev \sqrtoneminusalphabar.   
\end{align*}


For any diffusion process with noise coefficients $\diffusioncoefficients$, where $\alphabar > \alphabarprev \ \forall \ t \in [1,T]$, \textbf{Lemma \ref{lemma:diffusion_hypothesis}} shows that $\sqrtalphabarprev \sqrtalphabar \leq 1 - \sqrtoneminusalphabarprev \sqrtoneminusalphabar$. Therefore, it is sufficient to show that $\lambdamin(\mat) > 1$. 


To proceed further, we use the Weyl's inequality \cite{horn2012matrix}, which states that for any two real symmetric matrices $P \in \mathbb{R}^{n \times n}$ and $Q \in \mathbb{R}^{n \times n}$, if the eigenvalues are represented as $\lambdamax(P) = \lambda_1(P) >= \lambda_2(P) \dots >= \lambda_n(P) = \lambdamin(P)$, and $\lambdamax(Q) = \lambda_1(Q) >= \lambda_2(Q) \dots >= \lambda_n(Q) = \lambdamin(Q)$, then we have the following inequality:
\begin{equation}
    \lambda_i(P) + \lambda_j(Q) \leq \lambda_{i+j-n}(P+Q).
    \label{eq:weyl}
\end{equation}
For $i=j=n$, we have $\lambdamin(P) + \lambdamin(Q) \leq \lambdamin(P+Q)$. 

For $P = \proofdomaini$ and $Q = \gamma(t)A^TA$ with $\gamma(t) > 0$, this inequality can be exploited as both these matrices are real and symmetric. 
Therefore, we have 
\begin{align}
    \lambdamin(\mat) &\geq \lambdamin(\proofdomaini) + \lambdamin(\gamma(t)A^TA), \\
    \lambdamin(\mat) &\geq 1 + \gamma(t)\lambdamin(A^TA).
    \label{eq:weyl_applied}
\end{align}

Note that now it is sufficient to show $1 + \gamma(t)\lambdamin(A^TA) > 1$. For $\gamma(t) > 0$, this inequality holds true as $\lambdamin(A^TA) > 0$ ($A^TA \succ 0$). Therefore, 
\begin{align*}
    \left\| \Kt \right\|_2 < 1. \ 
\end{align*}

\end{proof}
\begin{lemma}
Suppose \textbf{Assumption \ref{assumption:2}} holds. Consider a $T$-step diffusion process with coefficients $\diffusioncoefficients$ such that $\bar{\alpha}_0 = 1$, $\bar{\alpha}_T = 0$, $\alphabar \in [0,1]$. If $\alphabar < \alphabarprev$ and the penalty coefficients from \textbf{Algorithm \ref{alg: constrained synthesis}} given by $\gamma(t) > 0 \ \forall \ t \in [1,T]$, $\|K_1\|_2$ with $K_t$ defined as in Equation \ref{eq:kt_defn} is given by
\begin{align}
  \|K_1\|_2 \leq 
  \frac{\sqrt{\bar{\alpha}_1}}{1 + \gamma(1)\lambdamin(A^TA)}.   
\end{align}
\label{lemma:k1}
\end{lemma}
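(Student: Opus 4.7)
The plan is to exploit the special structure that appears when $t=1$. Since $\bar{\alpha}_0 = 1$ by the hypothesis on the diffusion coefficients, we get $\sqrt{\bar{\alpha}_{t-1}} = 1$ and, crucially, $\sqrt{1-\bar{\alpha}_{t-1}} = 0$ at $t=1$. Substituting this into the definition $K_t = \sqrtalphabarprev \sqrtalphabar \inversemat + \sqrtoneminusalphabarprev \sqrtoneminusalphabar \proofdomaini$ collapses the second summand entirely, leaving
\begin{equation*}
 K_1 = \sqrt{\bar{\alpha}_1}\,\matone^{-1}.
\end{equation*}
So the bulk of the work done in Lemma \ref{lemma:mat1} (triangle inequality across the two summands, and the comparison $\sqrtalphabarprev\sqrtalphabar \leq 1 - \sqrtoneminusalphabarprev\sqrtoneminusalphabar$) is unnecessary here — the bound reduces to a single spectral norm computation.

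Next, I would take spectral norms and use the scalar homogeneity $\|cM\|_2 = |c|\,\|M\|_2$ together with the fact that $\matone \succ 0$ (which follows from $\proofdomaini \succ 0$, $A^TA \succeq 0$, and $\gamma(1) > 0$, so that $\matone^{-1}$ exists and is positive definite). This gives $\|K_1\|_2 = \sqrt{\bar{\alpha}_1}/\lambdamin(\matone)$, exactly as in the corresponding step of Lemma \ref{lemma:mat1}.

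Finally, I would invoke Weyl's inequality for sums of symmetric matrices with $P = \proofdomaini$ and $Q = \gamma(1) A^TA$ (both symmetric, with $Q \succeq 0$ since $\gamma(1) > 0$ and $A^TA \succeq 0$) to obtain
\begin{equation*}
 \lambdamin(\matone) \geq 1 + \gamma(1)\lambdamin(A^TA),
\end{equation*}
reusing the argument that was already written out in Eq.~\ref{eq:weyl_applied}. Dividing $\sqrt{\bar{\alpha}_1}$ by this lower bound yields the claimed inequality.

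There is essentially no obstacle: the lemma is a direct specialization of Lemma \ref{lemma:mat1} at the endpoint $t=1$, where the boundary condition $\bar{\alpha}_0 = 1$ removes the additive identity term. The only subtlety worth stating explicitly is why $\matone^{-1}$ exists and is positive definite under the running hypotheses — this is an immediate consequence of Assumption \ref{assumption:2} (rank of $A$ ensures $A^TA \succ 0$, hence in particular $A^TA \succeq 0$) combined with $\gamma(1) > 0$.
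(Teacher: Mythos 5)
Your proof is correct and follows essentially the same route as the paper: specialize to $t=1$ where $\sqrt{1-\bar{\alpha}_0}=0$ kills the identity term, compute $\|K_1\|_2 = \sqrt{\bar{\alpha}_1}/\lambdamin(\matone)$, and lower-bound the denominator via Weyl's inequality exactly as in Eq.~\ref{eq:weyl_applied}. The only cosmetic difference is that you drop the vanishing term before taking norms (so the triangle-inequality step of Lemma~\ref{lemma:mat1} becomes an equality), whereas the paper applies the triangle inequality first and then substitutes $\bar{\alpha}_0=1$; the resulting bound is identical.
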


\begin{proof}
We want to find an upper bound for
\begin{align*}
  \|K_t\|_2 = \left\|\Kt\right\|_2.  
\end{align*}
Applying the triangle inequality for spectral norm, we get
\begin{align*}
\left\|K_t\right\|_2 \leq \left\|\sqrtalphabarprev \sqrtalphabar \inversemat\right\|_2 + \left\|\sqrtoneminusalphabarprev \sqrtoneminusalphabar \proofdomaini\right\|_2.   
\end{align*}
We use the same simplifications shown in \textbf{Lemma \ref{lemma:mat1}} to obtain
\begin{align*}
\|K_t\|_2 \leq \frac{\sqrtalphabarprev \sqrtalphabar}{\lambdamin(\mat)} + \sqrtoneminusalphabarprev \sqrtoneminusalphabar. 
\end{align*}
For $t=1$, we know that $\bar{\alpha}_{t-1} = \bar{\alpha}_0 = 1$. Therefore, we obtain
\begin{align*}
\|K_1\|_2 \leq \frac{\sqrt{\bar{\alpha}_1}}{\lambdamin(\matone)}. 
\end{align*}
Further, the denominator can be lower bounded using Weyl's inequality, as shown in Equation \ref{eq:weyl_applied}. Therefore, we obtain 
\begin{align*}
\|K_1\|_2 \leq \frac{\sqrt{\bar{\alpha}_1}}{\lambdamin(\matone)} \leq \frac{\sqrt{\bar{\alpha}_1}}{1 + \gamma(1)\lambdamin(A^TA)}. 
\end{align*}
Hence, we have shown that
\begin{align*}
\|K_1\|_2 \leq \frac{\sqrt{\bar{\alpha}_1}}{1 + \gamma(1)\lambdamin(A^TA)}. 
\end{align*}
\end{proof}
\begin{lemma}
For any $T$-step diffusion process with coefficients $\diffusioncoefficients$ such that $\bar{\alpha}_0 = 1$, $\bar{\alpha}_T = 0$, $\alphabar \in [0,1] \ \forall \; t \in [1,T]$, if $\alphabar < \alphabarprev$, then 
\begin{align*}
\sqrtalphabarprev \sqrtalphabar < 1 - \sqrtoneminusalphabarprev \sqrtoneminusalphabar.
\end{align*}

\label{lemma:diffusion_hypothesis}
\end{lemma}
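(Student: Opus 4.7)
The plan is to recognize the inequality as a trigonometric identity in disguise. Since $\bar{\alpha}_{t-1}, \bar{\alpha}_t \in [0,1]$, I can set $\sqrt{\bar{\alpha}_{t-1}} = \cos\theta_1$ and $\sqrt{\bar{\alpha}_t} = \cos\theta_2$ for unique angles $\theta_1, \theta_2 \in [0, \pi/2]$, which automatically gives $\sqrt{1-\bar{\alpha}_{t-1}} = \sin\theta_1$ and $\sqrt{1-\bar{\alpha}_t} = \sin\theta_2$. The left-hand side of the target inequality then becomes $\cos\theta_1\cos\theta_2 + \sin\theta_1\sin\theta_2 = \cos(\theta_1 - \theta_2)$, which is at most $1$, with equality only when $\theta_1 = \theta_2$.

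Next, I would use the hypothesis $\bar{\alpha}_t < \bar{\alpha}_{t-1}$ to rule out equality. Since $\cos$ is strictly decreasing on $[0, \pi/2]$, the condition $\bar{\alpha}_t < \bar{\alpha}_{t-1}$ forces $\theta_2 > \theta_1$, hence $\theta_1 - \theta_2 \neq 0$ modulo $2\pi$, giving $\cos(\theta_1 - \theta_2) < 1$ strictly. Rearranging yields exactly
\begin{equation*}
\sqrt{\bar{\alpha}_{t-1}}\sqrt{\bar{\alpha}_t} < 1 - \sqrt{1-\bar{\alpha}_{t-1}}\sqrt{1-\bar{\alpha}_t},
\end{equation*}
as required.

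An equivalent route, which avoids trigonometry, is to apply the Cauchy--Schwarz inequality directly to the vectors $u = (\sqrt{\bar{\alpha}_{t-1}}, \sqrt{1-\bar{\alpha}_{t-1}})$ and $v = (\sqrt{\bar{\alpha}_t}, \sqrt{1-\bar{\alpha}_t})$. Both have unit Euclidean norm, so $\langle u, v\rangle \leq \|u\|_2 \|v\|_2 = 1$, and equality holds iff $u$ and $v$ are parallel, i.e., iff $\bar{\alpha}_{t-1} = \bar{\alpha}_t$. The strict inequality $\bar{\alpha}_t < \bar{\alpha}_{t-1}$ then rules out equality.

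There is no real obstacle here; the only thing to be careful about is ensuring the domain of the square roots and verifying the equality case of Cauchy--Schwarz (or the monotonicity of $\cos$ on $[0,\pi/2]$) to convert the non-strict inequality into the strict one needed by Lemma~\ref{lemma:mat1}. The proof is therefore short and can be written in a few lines using either approach.
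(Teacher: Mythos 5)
Your proof is correct, but it takes a different route from the paper. The paper proves the inequality by direct algebra: it squares both sides, cancels the common term $\bar{\alpha}_{t-1}\bar{\alpha}_t$, and reduces the claim to $0 < \left(\sqrtoneminusalphabarprev - \sqrtoneminusalphabar\right)^2$, which is strict because $\alphabar \neq \alphabarprev$. You instead observe that $u = (\sqrtalphabarprev, \sqrtoneminusalphabarprev)$ and $v = (\sqrtalphabar, \sqrtoneminusalphabar)$ are unit vectors and invoke Cauchy--Schwarz (equivalently, the cosine angle-difference identity), with the equality case ruled out by $\alphabar < \alphabarprev$. The two arguments are essentially dual: your equality-case analysis of Cauchy--Schwarz is doing the same work as the paper's perfect square. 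Your version has the small advantage of sidestepping the justification that squaring preserves the inequality direction --- the paper's first step implicitly requires the right-hand side $1 - \sqrtoneminusalphabarprev\sqrtoneminusalphabar$ to be nonnegative (which it is, but is not remarked upon) --- while the paper's version is slightly more self-contained in that it uses nothing beyond expanding a square. Either writeup is acceptable and equally short.
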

\begin{proof}
Squaring on both sides, we get 
\begin{align*}
    \alphabarprev \alphabar < 1 + \oneminusalphabarprev \oneminusalphabar - 2\sqrtoneminusalphabarprev \sqrtoneminusalphabar.
\end{align*}
After further simplification, we have to show 
\begin{align*}
    \alphabarprev \alphabar < \oneminusalphabar + \oneminusalphabarprev + \alphabarprev \alphabar - 2\sqrtoneminusalphabarprev \sqrtoneminusalphabar, \\
    0 < \oneminusalphabar + \oneminusalphabarprev -  
    2\sqrtoneminusalphabarprev \sqrtoneminusalphabar, \\
    0 < (\sqrtoneminusalphabarprev - \sqrtoneminusalphabar)^2.
\end{align*}
Since $\alphabar \neq \alphabarprev$, we know that $\sqrtoneminusalphabarprev \neq \sqrtoneminusalphabar$. Therefore $(\sqrtoneminusalphabarprev - \sqrtoneminusalphabar)^2 > 0$. Therefore, we conclude that 
\begin{align*}
    \sqrtalphabarprev \sqrtalphabar < 1 - \sqrtoneminusalphabarprev \sqrtoneminusalphabar.
\end{align*}

Note that this clearly holds for the edge case $t=1$, where we have $\sqrt{\bar{\alpha}_1} < 1$, and for $t=T$, where we have $0 < 1- \sqrt{1-\bar{\alpha}_{T-1}}$. For the choices of $\diffusioncoefficients$, these clearly hold true.
\end{proof}
\begin{lemma}
Suppose \textbf{Assumption \ref{assumption:2}} holds. Consider a $T$-step diffusion process with coefficients $\diffusioncoefficients$ such that $\bar{\alpha}_0 = 1$, $\bar{\alpha}_T = 0$, $\alphabar \in [0,1] \ \forall \ t \in [1,T]$. For the penalty coefficients from\textbf{ Algorithm \ref{alg: constrained synthesis}} given by $\gamma(t) > \frac{2}{\lambdamin(A^TA)}$, $\|D_t\|_2$, with $D_t$ defined as in Equation \ref{eq:dt_defn}, is less than 1. 
\label{lemma:mat2}
\end{lemma}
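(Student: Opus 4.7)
The plan is to exploit the fact that $\inversemat$, $A^TA$, and $\proofdomaini$ are simultaneously diagonalizable in the eigenbasis of $A^TA$, which reduces $\|D_t\|_2$ to a one-dimensional computation over eigenvalues. Under Assumption~\ref{assumption:2}, $A^TA \succ 0$, so I would write $A^TA = U\Lambda U^T$ with $\Lambda = \mathrm{diag}(\lambda_1, \dots, \lambda_n)$ and $\lambda_i \geq \lambdamin(A^TA) > 0$. Then $\mat$, $\inversemat$, and $\inversemat A^TA$ are all diagonalized by $U$, and $D_t$ inherits the same eigenbasis with eigenvalues
\begin{align*}
d_i \;=\; \frac{\gamma(t)\sqrtalphabarprev\,\lambda_i}{1 + \gamma(t)\lambda_i} - 1.
\end{align*}

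Since $D_t$ is symmetric, $\|D_t\|_2 = \max_i |d_i|$. For $\gamma(t) > 0$ and $\lambda_i > 0$ the factor $\gamma(t)\lambda_i/(1+\gamma(t)\lambda_i)$ lies in $(0,1)$, and multiplying by $\sqrtalphabarprev \in [0,1]$ keeps it in $[0,1)$, so every $d_i \in [-1, 0)$ and $|d_i| = 1 - \gamma(t)\sqrtalphabarprev\lambda_i/(1+\gamma(t)\lambda_i)$. Since $x \mapsto x/(1+\gamma(t)x)$ is increasing for $\gamma(t) > 0$, the maximum of $|d_i|$ is attained at $\lambda_i = \lambdamin(A^TA)$, giving
\begin{align*}
\|D_t\|_2 \;=\; 1 \;-\; \frac{\gamma(t)\sqrtalphabarprev\,\lambdamin(A^TA)}{1 + \gamma(t)\lambdamin(A^TA)} \;<\; 1,
\end{align*}
with strict inequality whenever $\gamma(t) > 0$ and $\alphabarprev > 0$; the stated hypothesis $\gamma(t) > 2/\lambdamin(A^TA)$ is more than sufficient.

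An alternative triangle-inequality route, more in line with the style used earlier for $\|D_1\|_2$ in the proof of Theorem~2, would first rewrite $D_t = (\sqrtalphabarprev - 1)\proofdomaini - \sqrtalphabarprev\inversemat$ using the identity $\gamma(t)\inversemat A^TA = \proofdomaini - \inversemat$ (which follows from $\mat\,\inversemat = \proofdomaini$), and then bound $\|\inversemat\|_2 \leq 1/(\gamma(t)\lambdamin(A^TA) - 1)$; this second factor is strictly less than $1$ precisely when $\gamma(t) > 2/\lambdamin(A^TA)$, which is exactly where the threshold in the hypothesis originates. The main subtlety in either approach is ensuring $\alphabarprev > 0$ in order to get strict inequality uniformly over $t \in [1, T]$, which follows from the standard diffusion schedule where only $\bar{\alpha}_T = 0$; I expect the direct eigenvalue calculation to be the cleanest route, with the triangle-inequality bound serving as a sanity check that explains the particular threshold appearing in the lemma.
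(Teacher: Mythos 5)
Your proof is correct, and it takes a genuinely different route from the paper. The paper rewrites $D_t = \sqrtalphabarprev\left[\proofdomaini + \frac{(A^TA)^{-1}}{\gamma(t)}\right]^{-1} - \proofdomaini$, expands the inverse via a Neumann series (which is where the condition $\gamma(t)\lambdamin(A^TA) > 1$ first enters, to guarantee convergence), and then applies the triangle inequality and a geometric-series summation to reach $\|D_t\|_2 \leq \frac{\sqrtalphabarprev}{\gamma(t)\lambdamin(A^TA)-1} + (1-\sqrtalphabarprev)$, which is $<1$ exactly when $\gamma(t) > 2/\lambdamin(A^TA)$. Your eigendecomposition argument exploits that $\proofdomaini$, $A^TA$, and $\inversemat$ commute, so $D_t$ is symmetric with explicitly computable eigenvalues; this yields the \emph{exact} spectral norm $\|D_t\|_2 = 1 - \frac{\gamma(t)\sqrtalphabarprev\lambdamin(A^TA)}{1+\gamma(t)\lambdamin(A^TA)}$ rather than an upper bound, and it reveals that the conclusion actually holds for every $\gamma(t)>0$, so the lemma's threshold $2/\lambdamin(A^TA)$ is an artifact of the paper's Neumann-series bound rather than a genuine requirement (your second, triangle-inequality sketch correctly identifies this as the origin of the threshold). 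At $t=1$ your formula also gives $\|D_1\|_2 = \frac{1}{1+\gamma(1)\lambdamin(A^TA)}$, tighter than the paper's Lemma bound $\frac{1}{\gamma(1)\lambdamin(A^TA)-1}$. The one caveat you flag, that strictness fails when $\alphabarprev=0$, is not a gap relative to the paper: its final step divides by $\sqrtalphabarprev$ and so needs $\alphabarprev>0$ just as yours does, and this is supplied by the strict monotonicity of the $\bar{\alpha}_t$ assumed in Theorem 2.
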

\begin{proof}
Note that the matrix $D_t$ is given by,
\begin{align*}
D_t = \gamma(t) \sqrtalphabarprev \inversemat A^TA - \proofdomaini.
\end{align*}

Using the matrix inversion identity, $(AB)^{-1} = B^{-1}A^{-1}$, we rewrite $D_t$ as follows.
\begin{align*}
D_t = \gamma(t)\sqrtalphabarprev \left[ \left(A^TA\right)^{-1}\mat\right]^{-1} -\proofdomaini. \\
D_t = \sqrtalphabarprev\left[ \frac{\left(A^TA\right)^{-1}}{\gamma(t)}\mat\right]^{-1} -\proofdomaini. \\
D_t = \sqrtalphabarprev\left[\frac{\left(A^TA\right)^{-1}}{\gamma(t)} + \proofdomaini\right]^{-1} -\proofdomaini.
\end{align*}

We observe that the choice of $\gamma(t)$ is greater than 0. More precisely, $\gamma(t) > \frac{2}{\lambdamin(A^TA)}$.
Now, if $\|-\frac{(A^TA)^{-1}}{\gamma(t)}\|_2 < 1$, then we can apply the Neumann's series for matrix inversion, which states that if $\|M\|_2 < 1$, then 
\begin{align}
    [\proofdomaini-M]^{-1} = \sum_{i=0}^{\infty}M^i.
    \label{eq:neumann}
\end{align}

First, note that $\|-A\|_2 = \|A\|_2$. Therefore, $\|-\frac{(A^TA)^{-1}}{\gamma(t)}\|_2 = \|\frac{(A^TA)^{-1}}{\gamma(t)}\|_2$ for $\gamma(t) > 0$. From the theorem statement, $\gamma(t) > 0.$

Additionally, we know that $\|\frac{(A^TA)^{-1}}{\gamma(t)}\|_2 = \lambdamax\left( \frac{(A^TA)^{-1}}{\gamma(t)} \right) = \frac{1}{\gamma(t)\lambdamin\left(A^TA\right)}$.

Therefore, it is enough to show that $\frac{1}{\gamma(t)\lambdamin\left(A^TA\right)} < 1$ to apply the Neumann's series.

However, we know that $\gamma(t) > \frac{2}{\lambdamin((A^TA)^{-1})}$. Therefore, we observe that $\frac{1}{\gamma(t)\lambdamin\left(A^TA\right)} < \frac{1}{2} < 1$.

Thus, we have shown that $\|\frac{(A^TA)^{-1}}{\gamma(t)}\|_2 < 1$. Therefore, using Equation \ref{eq:neumann}, we get
\begin{align}
    \left[\proofdomaini-\left(-\frac{(A^TA)^{-1}}{\gamma(t)}\right)\right]^{-1} = \sum_{i=0}^{\infty}\left(\frac{(-A^TA)^{-1}}{\gamma(t)}\right)^i = \proofdomaini + \sum_{i=1}^{\infty}\left(\frac{(-1)^i(A^TA)^{-i}}{\gamma(t)^i}\right).
    \label{eq:neumann_exp}
\end{align}
The last equality stems from the fact that for any matrix $M \in \mathbb{R}^{n \times n}$, $M^0 = \proofdomaini$. Substituting this expression for the second term in $D_t$, we get
\begin{align*}
D_t = \sqrtalphabarprev\left( \sum_{i=1}^{\infty}\left(\frac{(-1)^i(A^TA)^{-i}}{\gamma(t)^i}\right)\right) + \sqrtalphabarprev \proofdomaini - \proofdomaini.
\end{align*}
On further simplification, we have
\begin{align*}
D_t = \sqrtalphabarprev \left(\sum_{i=1}^{\infty}\left(\frac{(-1)^i(A^TA)^{-i}}{\gamma(t)^i}\right)\right) - \left( 1 - \sqrtalphabarprev \right) \proofdomaini.
\end{align*}
Computing the spectral norm and using the triangle inequality, we get
\begin{align*}
\|D_t\|_2 &= \left\|\sqrtalphabarprev\left(\sum_{i=1}^{\infty}\left(\frac{(-1)^i(A^TA)^{-i}}{\gamma(t)^i}\right)\right)- \left( 1 - \sqrtalphabarprev \right) \proofdomaini\right\|_2\rebuttal{,} \\ &\leq \sqrtalphabarprev\left(\sum_{i=1}^{\infty}\left\|\frac{(-1)^i(A^TA)^{-i}}{\gamma(t)^i}\right\|_2\right) + \left\| (1 - \sqrtalphabarprev) \proofdomaini \right\|_2.
\end{align*}
The inequality arises from the triangle inequality for spectral norms. Note that each of the matrices within the summation is either positive definite or negative definite, and the spectral norms of all these matrices can be represented as $\left\|\frac{(A^TA)^{-i}}{\gamma(t)^i}\right\|_2$.
Therefore, we get
\begin{align*}
\|D_t\|_2 \leq \sqrtalphabarprev\left(\sum_{i=1}^{\infty}\left\|\frac{(A^TA)^{-i}}{\gamma(t)^i}\right\|_2\right) + \left( 1 - \sqrtalphabarprev \right)\rebuttal{.}
\end{align*}
Using the inequality $\|MN\|_2 \leq \|M\|_2\|N\|_2$ multiple times, we get the following:
\begin{align*}
    \left\|\frac{(A^TA)^{-i}}{\gamma(t)^i}\right\|_2 \leq \frac{1}{\gamma(t)^i}\left(\left\|(A^TA)^{-1}\right\|_2\right)^i.
\end{align*}
Additionally, for the above equation, we used $\|cM\|_2 = |c|\|M\|_2$. Here, $c$ is $\gamma(t)$, which is greater than 0.
Since $A^TA \succ 0$, we have $\left\|(A^TA)^{-1}\right\|_2 = \frac{1}{\lambdamin(A^TA)}$. Therefore, we have the following inequality:
\begin{align*}
    \left\|\frac{(A^TA)^{-i}}{\gamma(t)^i}\right\|_2 \leq \frac{1}{\gamma(t)^i}\left(\frac{1}{\lambdamin(A^TA)}\right)^i = \frac{1}{\left(\gamma(t)\lambdamin(A^TA)\right)^i}.
\end{align*}
Using this to upper bound $\|D_t\|_2$, we get
\begin{align*}
    \|D_t\|_2 \leq \sqrtalphabarprev\left(\sum_{i=1}^{\infty}\left(\frac{1}{\left(\gamma(t)\lambdamin(A^TA)\right)^i} \right)\right) + \left( 1 - \sqrtalphabarprev \right)\rebuttal{.}
\end{align*}

Finally, the summation of an infinite geometric series of the form $a + a^2 + \dots$, where $a < 1$ is $\frac{a}{1-a}$. Here, note that we have $\gamma(t) > \frac{1}{\lambdamin(A^TA)}$. Therefore, $\frac{1}{\gamma(t)\lambdamin(A^TA)} < 1$. Therefore, we have,
\begin{align*}
\sum_{i=1}^{\infty}\left(\frac{1}{\gamma(t)^i(\lambdamin(A^TA))^i}\right) = \frac{\frac{1}{\gamma(t)\lambdamin(A^TA)}}{1-\frac{1}{\gamma(t)\lambdamin(A^TA)}} = \frac{1}{\gamma(t)\lambdamin(A^TA)-1}.
\end{align*}

So, we obtain
\begin{align}
\|D_t\|_2 \leq \frac{\sqrtalphabarprev}{\gamma(t)\lambdamin(A^TA)-1} + \left( 1 - \sqrtalphabarprev \right).
\label{eq:dt_ub}
\end{align}

Now, for $\|D_t\|_2 < 1$,  we need to show
\begin{align*}
 \frac{\sqrtalphabarprev}{\gamma(t)\lambdamin(A^TA)-1} + \left( 1 - \sqrtalphabarprev \right) < 1, \text{or} \\
 \frac{\sqrtalphabarprev}{\gamma(t)\lambdamin(A^TA)-1} < \sqrtalphabarprev.
\end{align*}
This simplifies to showing $\gamma(t)\lambdamin(A^TA)-1 > 1$, which is true if $\gamma(t) > \frac{2}{\lambdamin(A^TA)}$. And, from the statement of the lemma, we know that $\gamma(t) > \frac{2}{\lambdamin(A^TA)}$.

Therefore, we have shown that $\|D_t\|_2 < 1$ for $\gamma(t) > \frac{2}{\lambdamin(A^TA)}$.
\end{proof}
\begin{lemma}
Suppose \textbf{Assumption \ref{assumption:2}} holds. Consider a $T$-step diffusion process with coefficients $\diffusioncoefficients$ such that $\bar{\alpha}_0 = 1$, $\bar{\alpha}_T = 0$, $\alphabar \in [0,1] \ \forall \ t \in [0,T]$. For the penalty coefficients from \textbf{Algorithm \ref{alg: constrained synthesis}} given by $\gamma(1) > \frac{2}{\lambdamin(A^TA)}$, $\|D_1\|_2$, with $D_t$ defined as in Equation \ref{eq:dt_defn}, is upper bounded by
\begin{align*}
\|D_1\|_2 &\leq \begin{matrix}\frac{1}{\gamma(1)\lambdamin(A^TA)-1}.
\end{matrix}
\end{align*}
\label{lemma:D1}
\end{lemma}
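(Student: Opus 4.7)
The plan is to specialize the bound already established in the preceding lemma (the one showing $\|D_t\|_2 < 1$) to the case $t=1$. Recall that in the proof of that lemma, the key intermediate estimate is the inequality
\begin{align*}
\|D_t\|_2 \leq \frac{\sqrtalphabarprev}{\gamma(t)\lambdamin(A^TA)-1} + \left(1 - \sqrtalphabarprev\right),
\end{align*}
obtained via a Neumann series expansion of $[\proofdomaini + (A^TA)^{-1}/\gamma(t)]^{-1}$, the triangle inequality for the spectral norm, and summing the resulting geometric series. So my first step is simply to invoke this estimate at $t=1$.

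The second step is to use the diffusion schedule hypothesis $\bar{\alpha}_0 = 1$, which gives $\sqrt{\bar{\alpha}_0} = 1$ and hence the additive term $(1 - \sqrt{\bar{\alpha}_0})$ vanishes. Substituting into the displayed inequality yields directly
\begin{align*}
\|D_1\|_2 \leq \frac{1}{\gamma(1)\lambdamin(A^TA)-1} + 0 = \frac{1}{\gamma(1)\lambdamin(A^TA)-1},
\end{align*}
which is the claimed bound.

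The only point requiring a brief check is that the Neumann expansion used in the previous lemma is valid at $t=1$, i.e., that $\|(A^TA)^{-1}/\gamma(1)\|_2 < 1$. Under Assumption 2 we have $A^TA \succ 0$, so $\|(A^TA)^{-1}\|_2 = 1/\lambdamin(A^TA)$, and the hypothesis $\gamma(1) > 2/\lambdamin(A^TA)$ gives $1/(\gamma(1)\lambdamin(A^TA)) < 1/2 < 1$, as required. This also ensures $\gamma(1)\lambdamin(A^TA) - 1 > 1 > 0$, so the right-hand side is well-defined and positive. There is no real obstacle here — the statement is a direct corollary of the $t=1$ boundary condition $\bar{\alpha}_0 = 1$ applied to the general-$t$ bound. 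The only thing to be careful about is to explicitly re-derive or cite Eq. (dt\_ub) from the preceding proof rather than re-invoking the full machinery of the Neumann expansion, to keep the argument concise.
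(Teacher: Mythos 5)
Your proposal is correct and follows exactly the paper's own argument: invoke the intermediate bound (Eq.~\ref{eq:dt_ub}) from Lemma~\ref{lemma:mat2}, set $t=1$, and use $\bar{\alpha}_0 = 1$ to kill the $(1-\sqrt{\bar{\alpha}_{t-1}})$ term. Your additional check that the Neumann expansion remains valid at $t=1$ under $\gamma(1) > 2/\lambda_{\mathrm{min}}(A^TA)$ is a sound (if already implicit) verification.
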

\begin{proof}
Note that the matrix $D_t$ is given by,
\begin{align*}
D_t = \gamma(t) \sqrtalphabarprev \inversemat A^TA - \proofdomaini.
\end{align*}

From Equation \ref{eq:dt_ub} in \textbf{Lemma \ref{lemma:mat2}}, we know that if $\gamma(t) > \frac{1}{\lambdamin(A^TA)}$,
\begin{align*}
\|D_t\|_2 \leq \frac{\sqrtalphabarprev}{\gamma(t)\lambdamin(A^TA)-1} + \left( 1 - \sqrtalphabarprev \right).
\end{align*}
From the lemma, we know that $\gamma(t) > \frac{2}{\lambdamin(A^TA)}$. Therefore, we use Equation \ref{eq:dt_ub} and substitute for $t=1$ and $\bar{\alpha}_0 = 1$, we get
\begin{align*}
\|D_1\|_2 \leq \frac{1}{\gamma(1)\lambdamin(A^TA)-1}.
\end{align*}
\end{proof}
\begin{lemma}
Suppose \textbf{Assumption \ref{assumption:2}} holds. Consider a $T$-step diffusion process with coefficients $\diffusioncoefficients$ such that $\bar{\alpha}_0 = 1$, $\bar{\alpha}_T = 0$, $\alphabar \in [0,1]$. If $\alphabar < \alphabarprev \forall \ t \in [1,T]$ with the penalty coefficients from \textbf{Algorithm \ref{alg: constrained synthesis}} given by $\gamma(t) > 0$, $\|E_t\|_2 < 1$ where $E_t$ is defined as in Equation \ref{eq:et_defn}. 
\label{lemma:mat3}
\end{lemma}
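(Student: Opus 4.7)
The plan is to bound $\|E_t\|_2$ by separating the positive scalar factor from the spectral norm of the matrix inverse, and then control the inverse via Weyl's inequality exactly as in Lemma~\ref{lemma:mat1}. Recall from Eq.~\ref{eq:et_defn} that
\[
E_t = (1-\bar{\alpha}_t)\sqrt{\bar{\alpha}_{t-1}} \,\bigl[I_n + \gamma(t) A^T A\bigr]^{-1}.
\]
Since $\bar{\alpha}_t \in [0,1]$ gives $(1-\bar{\alpha}_t) \in [0,1]$ and $\sqrt{\bar{\alpha}_{t-1}} \in [0,1]$, the scalar prefactor lies in $[0,1]$. So the first step is to pull this nonnegative scalar out using $\|cM\|_2 = |c|\,\|M\|_2$, reducing the problem to bounding $\|[I_n + \gamma(t) A^T A]^{-1}\|_2$.

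Next I would exploit positive-definiteness. Under Assumption~\ref{assumption:2}, $A^T A \succ 0$, and for $\gamma(t) > 0$ the matrix $I_n + \gamma(t) A^T A$ is positive definite, so $\|[I_n + \gamma(t) A^T A]^{-1}\|_2 = 1/\lambda_{\min}(I_n + \gamma(t) A^T A)$. Applying Weyl's inequality to the real symmetric matrices $I_n$ and $\gamma(t) A^T A$, as done in Eq.~\ref{eq:weyl_applied}, yields
\[
\lambda_{\min}\bigl(I_n + \gamma(t) A^T A\bigr) \geq 1 + \gamma(t)\,\lambda_{\min}(A^T A) > 1,
\]
where the strict inequality uses $\gamma(t) > 0$ and $\lambda_{\min}(A^T A) > 0$.

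Combining the two bounds gives
\[
\|E_t\|_2 \leq (1-\bar{\alpha}_t)\sqrt{\bar{\alpha}_{t-1}} \cdot \frac{1}{1 + \gamma(t)\,\lambda_{\min}(A^T A)} < (1-\bar{\alpha}_t)\sqrt{\bar{\alpha}_{t-1}} \leq 1,
\]
which is the desired conclusion. There is no real obstacle here; the argument is a direct specialization of the techniques already used in Lemmas~\ref{lemma:mat1} and~\ref{lemma:k1}. The only subtlety is to note that the scalar $(1-\bar{\alpha}_t)\sqrt{\bar{\alpha}_{t-1}}$ is always at most $1$ (and strictly less whenever $\bar{\alpha}_t > 0$ or $\bar{\alpha}_{t-1} < 1$); even in the corner cases the strict inequality from Weyl's bound on the inverse's spectral norm is enough to give $\|E_t\|_2 < 1$.
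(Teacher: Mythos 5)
Your proof is correct and follows essentially the same route as the paper's: pull out the nonnegative scalar $(1-\bar{\alpha}_t)\sqrt{\bar{\alpha}_{t-1}}$ via $\|cM\|_2 = |c|\|M\|_2$, use positive definiteness to write the inverse's spectral norm as $1/\lambda_{\min}$, and apply Weyl's inequality to force the denominator strictly above $1$. Your closing remark that the strict inequality comes from the Weyl bound (rather than from the scalar prefactor) is exactly the right place to locate the strictness, and matches the paper's reasoning.
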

\begin{proof}
We know that the matrix $E_t$ is defined as
\begin{align*}
    E_t = \left[\oneminusalphabar\sqrtalphabarprev \inversemat\right].
\end{align*}

First, we use the identity $\|cM\|_2 = |c| \|M\|_2$, where $c$ is any real number, we need to show 
\begin{align*}
\oneminusalphabar\sqrtalphabarprev\left\|\left[\inversemat\right]\right\|_2 < 1.
\end{align*}

Note that $\oneminusalphabar\sqrtalphabarprev \geq 0$. Further, for $\gamma(t) > 0$, $\mat \succ 0$, and therefore $\inversemat \succ 0$. 

We use the identity that for $M \succ 0, \|M^{-1}\|_2 = \frac{1}{\lambdamin(M)}$. 

Therefore, $\|\inversemat\|_2 = \frac{1}{\lambdamin(\mat)}$. We use this expression to simplify the inequality as
\begin{align*}
\frac{\oneminusalphabar\sqrtalphabarprev}{\lambdamin(\mat)} < 1.   
\end{align*}

We use Weyl's inequality (check Equation \ref{eq:weyl_applied}) to lower bound the denominator and thereby upper bound the left side. Therefore, it is sufficient to show
\begin{align*}
\frac{\oneminusalphabar\sqrtalphabarprev}{1 + \gamma(t)\lambdamin(A^TA)} < 1.
\end{align*}
We observe that the numerator $\oneminusalphabar\sqrtalphabarprev$ is always less than 1. However, we know that the denominator $1 + \gamma(t)\lambdamin(A^TA)$ is strictly greater than 1 for $\gamma(t) > 0$ as $(A^TA)^{-1}$ exists and $\lambdamin(A^TA) > 0$. Therefore, the left side is always less than 1. This leads to
\begin{align*}
\left\|\oneminusalphabar\sqrtalphabarprev \inversemat\right\|_2 < 1.
\end{align*}
\end{proof}
\begin{lemma}
Suppose\textbf{ Assumption \ref{assumption:2}} holds. Consider a $T$-step diffusion process with coefficients $\diffusioncoefficients$ such that $\bar{\alpha}_0 = 1$, $\bar{\alpha}_T = 0$, $\alphabar \in [0,1],$ If $\alphabar < \alphabarprev \forall \ t \in [1,T]$ with the penalty coefficients from \textbf{Algorithm \ref{alg: constrained synthesis}} given by $\gamma(t) > 0$, $\|E_1\|_2$, with $E_t$ defined as in Equation \ref{eq:et_defn}, is upper bounded by
\begin{align*}
    \spectralnorm(E_1) \leq \frac{1-\bar{\alpha}_1}{1+\gamma(1)\lambdamin(A^TA)}.
\end{align*}
\label{lemma:E1}
\end{lemma}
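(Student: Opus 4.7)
The plan is to specialize the analysis performed in Lemma~\ref{lemma:mat3} to the specific case $t=1$, extracting the explicit numerical bound rather than merely arguing that the spectral norm is strictly less than one. The overall strategy is therefore a short chain: simplify $E_1$ using $\bar{\alpha}_0 = 1$, pull the nonnegative scalar out of the spectral norm, invoke the positive-definite structure to express the remaining spectral norm as a reciprocal eigenvalue, and finally lower-bound that eigenvalue by Weyl's inequality.

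First I would start from the definition
\begin{equation*}
E_t = (1-\bar{\alpha}_t)\sqrt{\bar{\alpha}_{t-1}}\,[\mathbf{I}_n + \gamma(t)A^TA]^{-1}
\end{equation*}
given in Eq.~\ref{eq:et_defn}. Substituting $t = 1$ and using $\bar{\alpha}_0 = 1$, the prefactor collapses to $(1-\bar{\alpha}_1)$, so $E_1 = (1-\bar{\alpha}_1)\,[\mathbf{I}_n + \gamma(1)A^TA]^{-1}$. Since $\bar{\alpha}_1 \in [0,1]$, the scalar $1-\bar{\alpha}_1$ is nonnegative, so the identity $\|cM\|_2 = |c|\|M\|_2$ gives
\begin{equation*}
\|E_1\|_2 = (1-\bar{\alpha}_1)\,\bigl\|[\mathbf{I}_n + \gamma(1)A^TA]^{-1}\bigr\|_2.
\end{equation*}

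Next, I would use the positive-definiteness that was already exploited in Lemmas~\ref{lemma:mat1}--\ref{lemma:mat3}: under Assumption~\ref{assumption:2}, $A^TA \succ 0$, and for $\gamma(1) > 0$ the matrix $\mathbf{I}_n + \gamma(1)A^TA$ is positive definite as well. For a positive-definite matrix $M$, one has $\|M^{-1}\|_2 = 1/\lambda_{\min}(M)$, hence
\begin{equation*}
\|E_1\|_2 = \frac{1-\bar{\alpha}_1}{\lambda_{\min}(\mathbf{I}_n + \gamma(1)A^TA)}.
\end{equation*}
Finally, applying Weyl's inequality (Eq.~\ref{eq:weyl_applied}) to lower bound the denominator by $1 + \gamma(1)\lambda_{\min}(A^TA)$ yields the claimed upper bound $\|E_1\|_2 \leq (1-\bar{\alpha}_1)/(1+\gamma(1)\lambda_{\min}(A^TA))$.

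There is no genuine obstacle in this argument; it is essentially bookkeeping, and every ingredient (the spectral-norm identities, positive-definiteness of $\mathbf{I}_n + \gamma(1)A^TA$, and Weyl's inequality) has already been used or stated in the preceding lemmas. The only point worth double-checking is that the hypothesis $\gamma(1) > 0$ inherited from Lemma~\ref{lemma:mat3} is indeed sufficient—unlike Lemmas~\ref{lemma:mat2} and~\ref{lemma:D1}, no strengthened condition of the form $\gamma(1) > 2/\lambda_{\min}(A^TA)$ is required here, because the bound on $E_1$ does not rely on a Neumann series expansion.
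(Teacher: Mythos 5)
Your proposal is correct and follows essentially the same route as the paper's proof: substitute $t=1$ with $\bar{\alpha}_0=1$, factor out the nonnegative scalar $(1-\bar{\alpha}_1)$, use positive-definiteness to write $\|[\mathbf{I}_n+\gamma(1)A^TA]^{-1}\|_2 = 1/\lambda_{\mathrm{min}}(\mathbf{I}_n+\gamma(1)A^TA)$, and apply Weyl's inequality to bound the denominator. Your closing observation that $\gamma(1)>0$ suffices here (no Neumann-series condition needed) is also consistent with the paper.
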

\begin{proof}
We know that $E_t$ is given by
\begin{align*}
    E_t = \sqrtalphabarprev \oneminusalphabar \inversemat.
\end{align*}

We first substitute for $t=1$ and $\sqrt{\bar{\alpha}}_0 = 1$ 
\begin{align*}
    E_1 = (1-\bar{\alpha}_1) \inversematone.
\end{align*}

We use the identity $\|cM\|_2 = |c| \|M\|_2$, where $c$ is any real number, to get
\begin{align*}
    \left\|E_1\right\|_2 = (1-\bar{\alpha}_1) \left\|\inversematone\right\|_2.
\end{align*}
Here, note that $(1-\bar{\alpha}_1) \geq 0$. Similar to \textbf{Lemma \ref{lemma:mat3}}, we can rewrite the spectral norm as
\begin{align*}
    \left\|E_1\right\|_2 = \frac{1-\bar{\alpha}_1}{ \lambdamin(\matone)}.
\end{align*}
Again, using Weyl's inequality and performing similar modifications as in \textbf{Lemma \ref{lemma:mat3}}, we obtain the following upper bound for the spectral norm
\begin{align*}
    \left\|E_1\right\|_2 \leq \frac{1-\bar{\alpha}_1}{ 1+\gamma(1)\lambdamin(A^TA)}.
\end{align*}
\end{proof}
\begin{lemma}
Suppose \textbf{Assumption \ref{assumption:2}} holds. Consider a $T$-step diffusion process with coefficients $\diffusioncoefficients$ such that $\bar{\alpha}_0 = 1$, $\bar{\alpha}_T = 0$, $\alphabar \in [0,1]$. If $\alphabar < \alphabarprev \ \forall \; t \in [0,T]$, $\|F_t\|_2$, with $F_t$ defined as in Equation \ref{eq:ft_defn}, is less than 1. Additionally, $\|F_1\|_2$ is 0.
\label{lemma:mat4}
\end{lemma}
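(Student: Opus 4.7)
The proof plan is short because $F_t$ is a scalar multiple of the identity. The approach is to compute $\|F_t\|_2$ directly, bound each scalar factor, and handle the $t=1$ case by plugging in $\bar{\alpha}_0 = 1$.

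First I would observe that $F_t = \sqrtoneminusalphabarprev \sqrtoneminusalphabar \sqrtalphabar \proofdomaini$ is a nonnegative scalar times the identity matrix. Using the identity $\|cM\|_2 = |c|\,\|M\|_2$ (valid for any real scalar $c$) together with $\|\proofdomaini\|_2 = 1$, this yields
\begin{align*}
\|F_t\|_2 = \sqrtoneminusalphabarprev \sqrtoneminusalphabar \sqrtalphabar,
\end{align*}
where the absolute value is dropped since each of the three square roots is nonnegative under the hypothesis $\bar{\alpha}_t \in [0,1]$.

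Next I would bound the three factors. Since $\bar{\alpha}_t \in [0,1]$, the AM--GM inequality applied to the pair $(\bar{\alpha}_t, 1-\bar{\alpha}_t)$, whose arithmetic mean is $1/2$, gives $\sqrt{\bar{\alpha}_t\,(1-\bar{\alpha}_t)} \leq 1/2$, i.e., $\sqrtoneminusalphabar \sqrtalphabar \leq 1/2$. Combined with the trivial bound $\sqrtoneminusalphabarprev \leq 1$ (which follows from $\bar{\alpha}_{t-1} \in [0,1]$), this produces
\begin{align*}
\|F_t\|_2 \leq \tfrac{1}{2} < 1,
\end{align*}
which is the first claim.

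For the second claim, I would simply substitute $t=1$. By hypothesis $\bar{\alpha}_0 = 1$, so $\sqrt{1-\bar{\alpha}_0} = 0$, and therefore $\|F_1\|_2 = 0 \cdot \sqrt{1-\bar{\alpha}_1} \cdot \sqrt{\bar{\alpha}_1} = 0$. There is no substantive obstacle in this lemma; the only subtlety is noticing that $F_t$ is proportional to the identity so that its spectral norm collapses to the absolute value of the scalar coefficient, after which both parts follow immediately from elementary bounds on the diffusion coefficients.
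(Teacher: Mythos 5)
Your proof is correct and follows essentially the same route as the paper: recognize that $F_t$ is a nonnegative scalar multiple of the identity, so $\|F_t\|_2$ equals that scalar, bound it below $1$, and set $t=1$ with $\bar{\alpha}_0=1$ to get $\|F_1\|_2=0$. Your AM--GM step even gives a slightly sharper and more airtight bound ($\|F_t\|_2 \leq 1/2$) than the paper's informal observation that at least one factor is less than $1$.
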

\begin{proof}
Note that $F_t$ is given by the expression,
\begin{align*}
F_t = \sqrtoneminusalphabarprev \sqrtoneminusalphabar \sqrtalphabar \proofdomaini.
\end{align*}
First, we use the identity $\|cM\|_2 = |c| \|M\|_2$, where $c$ is any real number. Therefore, we need to show
\begin{align*}
\|F_t\|_2 = \sqrtoneminusalphabarprev \sqrtoneminusalphabar \sqrtalphabar \left\|\proofdomaini\right\|_2 < 1.
\end{align*}
For the given conditions on $\diffusioncoefficients$, we observe that at least one of the terms in $\sqrtoneminusalphabarprev \sqrtoneminusalphabar \sqrtalphabar$ is always less than 1. Therefore $\|F_t\|_2 < 1$. And, since $\bar{\alpha}_0 = 1$, for $F_1$, we have $\sqrt{1-\bar{\alpha}_0} = 0$. Therefore, $F_1$ is a null matrix and $\|F_1\|_2 = 0$.
\end{proof}
\section{Inference Time Results}
\label{sec:app_inference_time}
We evaluated our approach for time series samples up to 576 dimensions (e.g., the Air Quality and the Stocks dataset). We have provided the inference time taken to generate samples with up to 66 and 450 constraints for the Air Quality and the Stocks datasets in Table \ref{tab:inference_time}. First, we note that the inference latency for \projectabbr\ is very similar to PDM \citep{christopher2024constrained} and PRODIGY \citep{sharma2024diffuse}, as these approaches involve projection steps after each denoising step. We observe that for univariate datasets, like the Traffic dataset, the inference latency for \projectabbr\ is less than that of Guided-DiffTime. Note that Guided-DiffTime requires backpropagation through the denoiser network. For multivariate datasets like the Air Quality and the Stocks dataset, the inference time for \projectabbr\ is roughly 2$\times$ more than the inference time for Guided-DiffTime. However, Guided-DiffTime has poor sample quality and very high constraint violation magnitudes. For all the datasets, COP has the least inference time. However, COP also suffers heavily from poor sample quality. Table \ref{tab:inference_time} reports inference latency on a single NVIDIA RTX 6000 GPU. 

Additionally, for each real-world dataset, we analyzed the effect of the number of constraint categories. As mentioned in Section \ref{sec:experiments}, our experiments involve imposing multiple constraint types, such as \emph{mean}, \emph{argmin}, \emph{argmax}, and \emph{value at location} constraints. \textbf{Interestingly, we observed that the inference latency linearly increases with the number of constraint categories for all the real-world datasets.} 

\begin{figure*}[!ht]
\begin{center}
\includegraphics[width=0.8\textwidth]{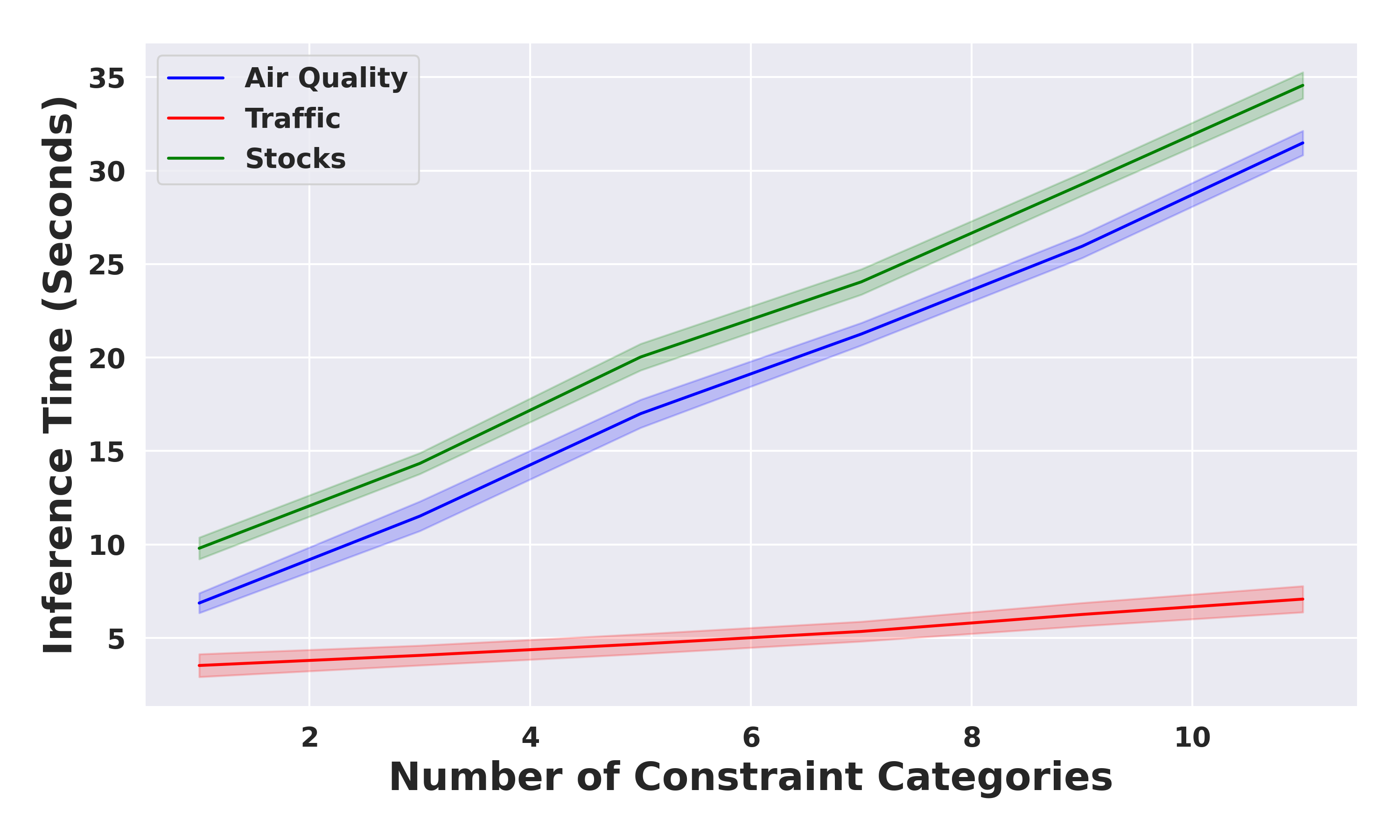}
\caption{\textbf{The inference latency increases linearly with the number of constraint categories.} For all the real-world datasets used in our experiments, we observe that the relationship between the number of constraint categories and the inference time is linear. The experiments are run on a single NVIDIA RTX 6000 GPU. The Traffic dataset has the least inference latency as it has the smallest data dimensionality. The Air Quality and Stocks datasets have the same dimensionality, but the Stocks dataset has an additional OHLC constraint.}
\vspace{-1em}
\label{fig:inference_latency}
\end{center}
\end{figure*}

\begin{table}[!hbt]
\centering
\begin{sc}
\rebuttal{
\begin{tabular}{llll}
\toprule
Approach        & Air Quality & Traffic     & Stocks      \\ \midrule
Guided-DiffTime & 14.76$\pm$0.36 s & 11.61$\pm$0.39 s & 15.24$\pm$0.43 s \\
COP-FT          & \textbf{8.5$\pm$3.72} s   & \textbf{1.27$\pm$0.45} s  & \textbf{11$\pm$4.47} s    \\
\cellcolor[HTML]{EFEFEF}CPS (Ours)      & \cellcolor[HTML]{EFEFEF}31.49$\pm$0.64 s & \cellcolor[HTML]{EFEFEF}6.99$\pm$0.52 s  & \cellcolor[HTML]{EFEFEF}35.22$\pm$2.01 s\\ \bottomrule
\end{tabular}}
\end{sc}
\caption{\rebuttal{\textbf{The projection step in \projectabbr\ increases the sampling time.} Here, we present the average inference time taken to generate a single sample for all the real-world datasets used in our experiments. The results are shown in seconds, and the inference time is averaged over 10 runs. Though the inference time for COP-FT is very low, the generated samples have poor sample quality.}}
\label{tab:inference_time}
\end{table}

\rebuttal{Furthermore, we note that there are multiple ways to reduce the inference time for \projectabbr, such as:
\begin{itemize}
    \item Capping the number of update steps in each projection operation (line 5 of \textbf{Algorithm \ref{alg: constrained synthesis}}) during the initial denoising steps when the signal-to-noise ratio is very low. 
    \item The projection operation (line 5 of \textbf{Algorithm \ref{alg: constrained synthesis}}) need not be performed for every denoising step. Consequently, we can develop principled methods to identify the denoising steps where projection is required based on constraint violation. 
\end{itemize}}

Now, we provide a brief convergence analysis for \textbf{Algorithm \ref{alg: constrained synthesis}} with regard to \textbf{Assumptions \ref{assumption:1}} and \ref{assumption:2}. 

Under \textbf{Assumption \ref{assumption:1}}, the objective in Line 5 has $\beta$-Lipschitz continuous gradients, yielding a convergence rate of $\mathcal{O}(1/N_{\mathrm{pr}})$, where $N_{\mathrm{pr}}$ is the number of projection steps and is in the order of $\mathcal{O}(1/\delta)$ with $\delta$ being the tolerance limit for constraint violation. This repeats for all $T$ diffusion steps.

Under \textbf{Assumption \ref{assumption:2}}, line 5 of \textbf{Algorithm \ref{alg: constrained synthesis}} has a closed-form solution (\textbf{Lemma \ref{lemma:projection_step}}), and the convergence rate of \projectabbr\ with respect to $T$ is $\mathcal{O}(1/T)$. 

Additionally, from Figure \ref{fig:inference_latency}, observe that the inference time for the Air Quality and the Stocks datasets is much higher than the inference time for the Traffic dataset. This can be primarily attributed to the dimensionality of the samples in the respective datasets. The sample dimension in the Air Quality and the Stocks datasets is 576 (6 channels with 96 timestamps in each channel), whereas the sample dimension in the Traffic dataset is 96 (1 channel with 96 timestamps).
\section{Additional Qualitative Results}
\label{sec:app_qualitative}
In this section, we provide additional qualitative comparisons between \projectabbr\ and other baselines. 
\begin{figure*}[!tbh]
    \begin{center}
    \includegraphics[width=0.9\textwidth]{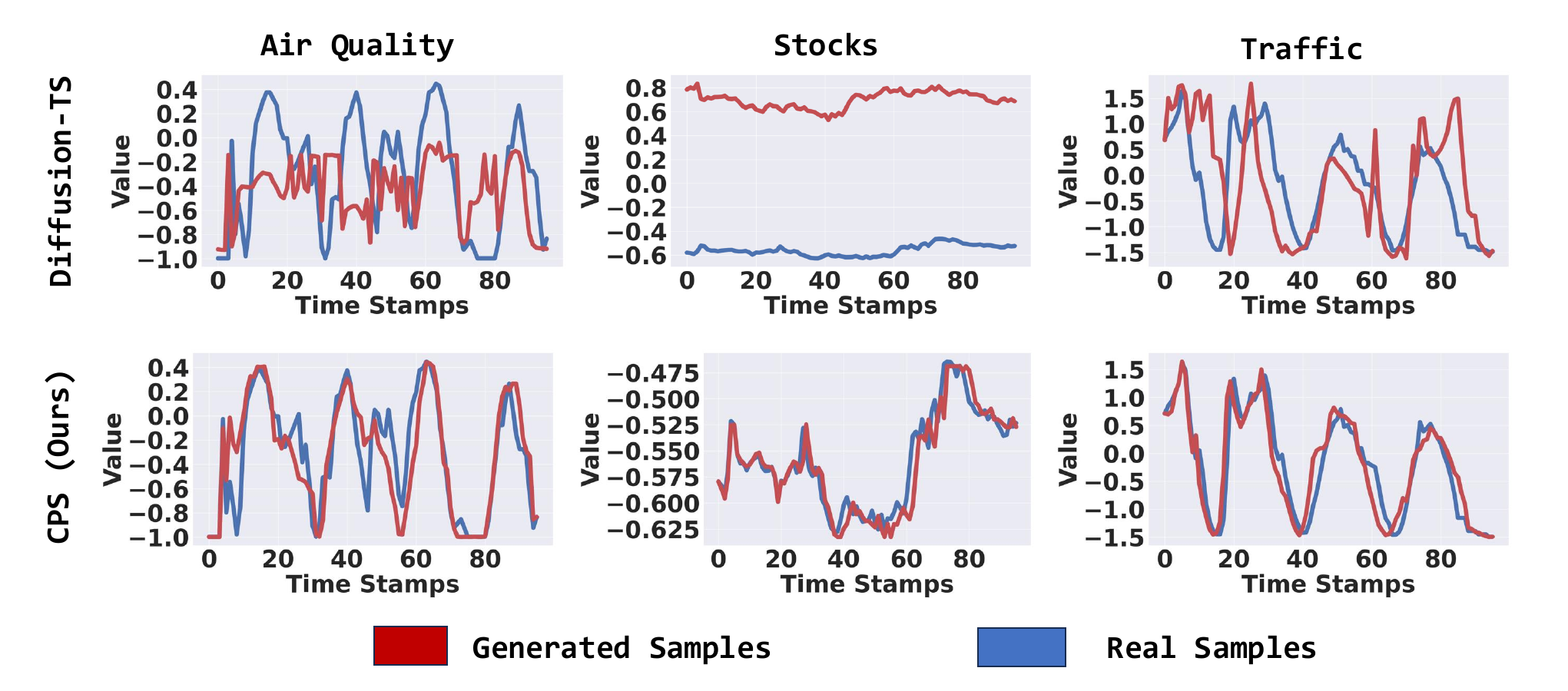}
    \vskip -0.15in
    \caption{\textbf{Qualitative comparison between Diffusion-TS \citep{yuan2024diffusionts} and \projectabbr\ on the real-world datasets used in our experiments.}}
    
    \label{fig:cps_diffts}
\end{center}
\vskip -0.2in
\end{figure*}
\begin{figure*}[!t]
    \begin{center}
    \includegraphics[width=0.9\textwidth]{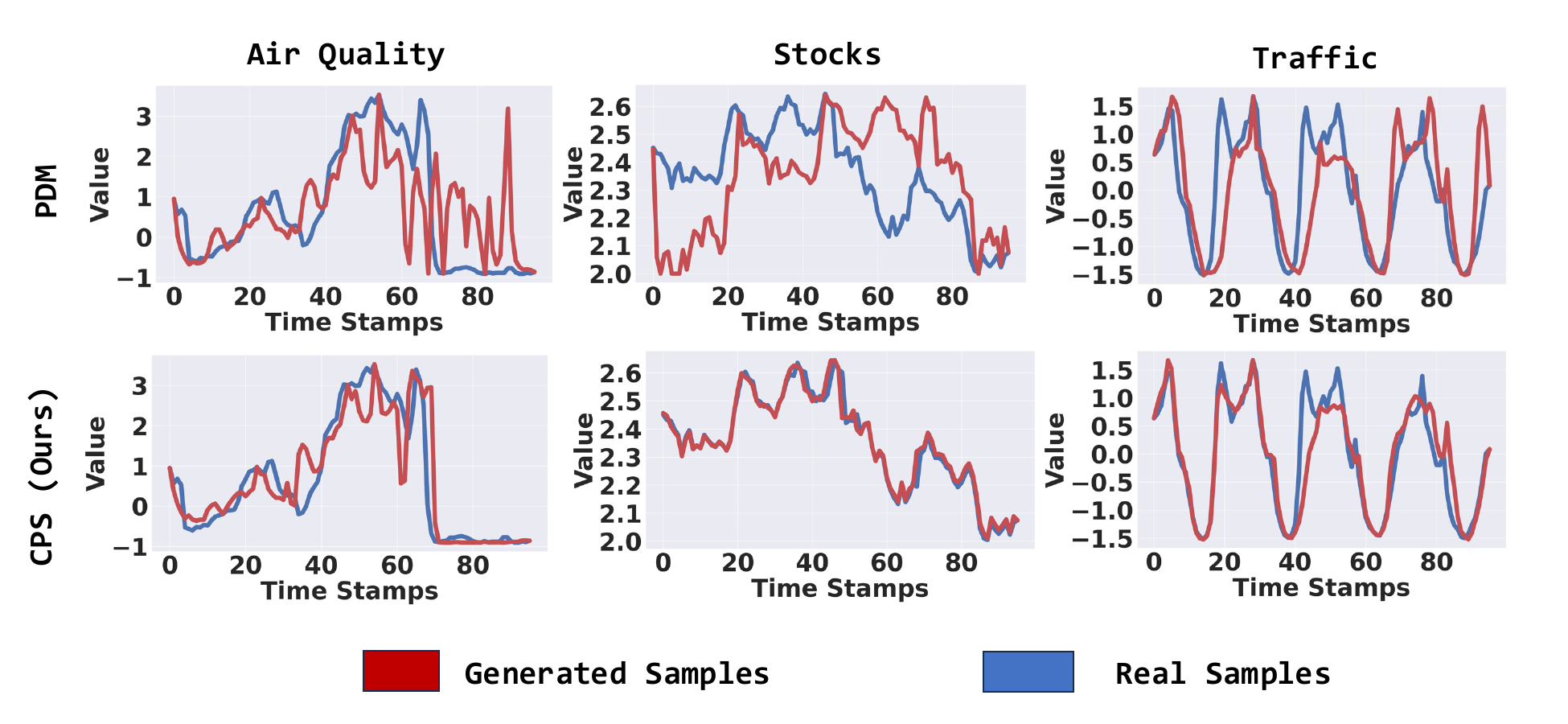}
    \vskip -0.15in
    \caption{\textbf{Qualitative comparison between Projected Diffusion Model (PDM) \citep{christopher2024constrained} and \projectabbr\ on the real-world datasets used in our experiments.}}
    
    \label{fig:cps_pdm}
\end{center}
\vskip -0.2in
\end{figure*}
\begin{figure*}[!tbh]
    \begin{center}
    \includegraphics[width=0.9\textwidth]{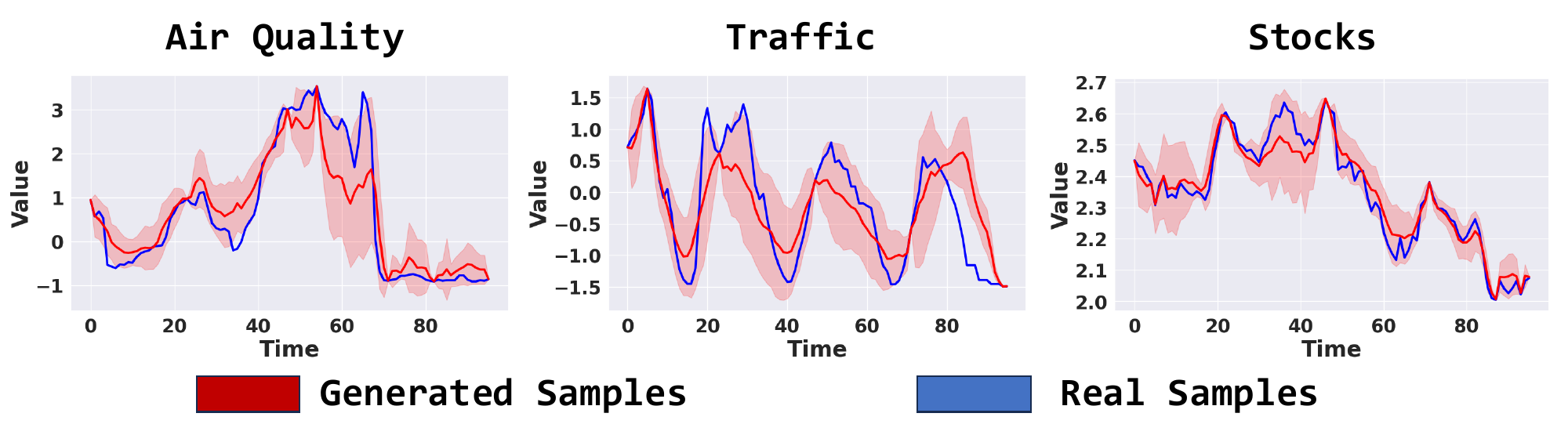}
    \vskip -0.15in
    \caption{\textbf{\projectabbr\ can generate multiple samples that adhere to the required constraint set.} Here, we showcase \projectabbr's ability to generate multiple samples for the same set of constraints. The real sample from which the constraints are extracted is shown in blue, and we show the mean and $\pm$ standard deviation for the generated samples. Note that the trend of the generated samples matches that of the real sample, and this qualitative result is consistent with other qualitative results in Figures \ref{fig:qualitative}, \ref{fig:cps_pdm}, and \ref{fig:cps_diffts}. From the figure, note that the standard deviation at 0, 24, 48, 72, and 96 timestamps goes to zero, as we have imposed the ``value at'' constraint at these locations.}
    
    \label{fig:probabilistic_synthesis}
\end{center}
\vskip -0.2in
\end{figure*}
\section{Metrics}
\label{sec:metrics}

For the FTSD and J-FTSD metrics, we train the time series and condition encoders using the procedure given in \cite{narasimhan2024time}. For FTSD, we only train the time series encoder using supervised contrastive loss to maximize the similarity of time series chunks that belong to the same sample. For J-FTSD, we perform contrastive learning training in a CLIP-like manner to maximize the similarity between time series and corresponding paired metadata, as explained in \cite{narasimhan2024time}. We use Informer models as the encoders. Additionally, just as in the case of \citep{paul2022psaganprogressiveselfattention, narasimhan2024time}, we observe that the approaches corresponding to the lowest values of FD metrics have the lowest TSTR and DTW scores. This further validates the correctness of the FTSD and J-FTSD metrics used for evaluation. To train these models, we used the same set of hyperparameters as mentioned in \cite{narasimhan2024time}. For all the real-world datasets, we trained these models up to a maximum of 5000 epochs.

We sourced the implementation for the DTW metric from the public domain. For the constraint violation magnitude, we computed the violation for each constraint, excluding the allowable constraint violation budget. For TSTR, we trained the standard TimesNet \citep{wu2023timesnettemporal2dvariationmodeling} model to perform imputation. The mean and standard deviation for the TSTR values are obtained
from the results for 3 seeds. We also provide a sample quality comparison based on the Discriminative Score (DS) metric. For this metric, we train a post-hoc time series classification model to distinguish between real and generated time series samples. We use a simple 2-layer LSTM network for the classification task. DS was introduced in \citep{yoon2019time} as a sample quality metric. Similar to the TSTR metric, we train the classifier on synthesized and real training data. We then report the classification error on the synthesized and real test data for 5 seeds.

\section{Datasets}
\label{sec:datasets}
We compared \projectabbr\ against the existing baselines for six settings -  Air Quality, Air Quality Conditional, Traffic, Traffic Conditional, Stocks, and Waveforms. The training and testing splits for the Air Quality and Traffic datasets are taken from \cite{narasimhan2024time}. We additionally evaluate the constrained generation approaches on the Stocks and the Waveforms datasets. We used the preprocessing scripts provided by \cite{yoon2019time} for the Stocks dataset. The waveforms dataset was synthetically generated. We generated $~64,000$ sinusoidal waveforms of varying amplitudes, phases, and frequencies. The amplitude varies from $0.1$ to $1.0$. The phase varies from $0$ to $2\pi$. The frequency limits were chosen based on the Nyquist criterion. The generators and the GAN models were trained on this dataset. However, for the TSTR metrics, we created a subset of this dataset with $~16,000$ samples. All the datasets except the waveforms dataset were standard normalized.

The Air Quality dataset is a multivariate dataset with six channels. The total numbers of train, val, and test samples are 12166, 1537, and 1525, respectively. The Traffic dataset is univariate. The total train, val, and test samples are 1604, 200, and 201, respectively. The Stocks dataset is a multivariate dataset with six channels. The total train, val, and test samples are 2871, 358, and 360, respectively. The truncated form of the waveforms dataset used for evaluation consists of 13320, 1665, and 1665 train, val, and test samples, respectively.

For the conditional variants, we used the same contextual metadata as provided in \cite{narasimhan2024time}. For the Air Quality dataset, we used categorical features such as station ID, timestamps, and wind directions, and continuous features such as temperature, pressure, rain levels, and wind speed. For the traffic dataset, we used broad and fine weather descriptions, holidays, and timestamps as categorical features. Similarly, we used the temperature, rain, and snowfall levels, and cloud conditions as continuous features.

\section{Implementation}
\label{sec:app_implementation}
In this section, we will describe the implementation details for our approach, each baseline, trained models, metrics, etc.

\subsection{Diffusion Model Architecture}
We utilize the \timeweavercsdi\ denoiser for all the diffusion models used in this work. The training hyperparameters and the model parameters are precisely the same as indicated in \citep{narasimhan2024time}. The total number of residual layers is 10 for all the experiments. Further, we used 200 denoising steps with a linear noise schedule for the diffusion process. All the baselines and \projectabbr\ use the same base diffusion model with the \timeweavercsdi\ denoiser backbone. 

We use 256 channels in each residual layer, with 16-dimensional vectors representing each channel. The diffusion time step input embedding is a 256-dimensional vector. Further, the metadata encoder has an embedding size of 256 for the conditional case. The metadata encoder has two attention layers with eight attention heads. All our experiments use a learning rate of $10^{-4}$. Our training procedure and the hyperparameters are precisely the same as those in \cite{narasimhan2024time}. For each dataset, we trained the diffusion model on a single NVIDIA RTX 6000 GPU. The checkpoints were obtained using the best validation denoising loss, and we trained the \timeweavercsdi\ denoiser up to a maximum of 5000 epochs for all datasets.

\subsection{Constrained Posterior Sampling Implementation}
For the \projectabbr\ implementation, we use CVXPY \cite{10.5555/2946645.3007036}. We first implement the constraint violation function with the violation threshold set to $0.005$ for all the constraints except the bounds, like argmax, argmin, OHLC, and the trend constraint. For example, consider the mean constraint. The constraint violation function for this constraint is implemented as $\max\left(\left|\frac{1}{L}\left(\sum_{u=1}^{L}c_u\right)-\mu_c\right| - 0.005, 0\right)$, where $L$ is the time series horizon. We do not provide the constraint violation threshold for the bounds. Though the allowable constraint violation threshold is $0.01$, we performed the projection step with a constraint violation threshold of $0.005$ to ensure that the sample strictly lies within the constraint set. We use the same choice of $\gamma(t) \ \forall t \in [1,T]$ as described in Section \ref{sec:approach}. However, we clip the value of $\gamma(t)$ to $100,000$ after certain denoising steps, as the CVXPY solvers cannot handle extremely high values of $\gamma(t)$. We note that this clipping usually occurs after 150 denoising steps. 

To generate samples on a large scale for the training, validation, and test datasets, we performed batched denoising. To parallelize the projection step with CVXPY after the denoising step, we used multiprocessing with 4 processes.

\subsection{Baseline Implementation}
\label{sec:baseline}
This section will explain all the details about the baseline implementations. Specifically, we use two baselines - Constrained Optimization Problem (COP) and Guided DiffTime. We note that both approaches were proposed in \citep{coletta2024constrained}. However, the implementation of these approaches is not publicly available. Based on the details provided in \citep{coletta2024constrained}, we have implemented the baselines for comparison against \projectabbr.

\subsubsection{Constrained Optimization Problem Implementation}
The Constrained Optimization Problem, COP, has two variants. These are referred to as COP and COP-FineTuning, respectively. In COP, we perturb a randomly selected sample from the training and validation datasets. In COP-FineTuning, we perturb the sample generated from the \timeweavercsdi\ diffusion model. 

Note that \citep{coletta2024constrained} suggests extracting statistical features to be imposed as distributional constraints. For example, \citep{coletta2024constrained} suggests extracting autocorrelation features for the stocks dataset. However, since it is practically impossible to list all the statistical features for each dataset to obtain the distributional constraints, \citep{coletta2024constrained} suggests the use of the critic function from a Wasserstein GAN \citep{arjovsky2017wassersteingan}. The details of the GAN training are summarized below.

COP has two objectives - maximize the $l_2$ distance from a randomly selected sample from the training and maximize the critic value from a Wasserstein GAN. 

Similarly, COP FineTuning has two objectives - minimize the $l_2$ distance from a \rebuttal{generated sample} and maximize the critic value from a Wasserstein GAN.

We optimize for these objectives while ensuring constraint satisfaction.

As suggested in \citep{coletta2024constrained}, we use the SLSQP solver from SciPy \cite{2020SciPy-NMeth}. Unlike \citep{coletta2024constrained}, which performs piecewise optimization, we note that all the constraints used in our work are global. Therefore, piecewise optimization is very suboptimal. For example, it is suboptimal to break a time series into chunks and perform optimization for each piece when the goal is to generate a sample with a specific mean value. This is also pointed out in \citep{coletta2024constrained}. Therefore, we perform COP for the whole time series at once. We consider two budgets - 0.005 and 0.01. This is similar to \cite{coletta2024constrained}. However, unlike their approach, we stop with 0.01 as the allowable constraint violation in our case is 0.01 for all methods.

We used a weight of 0.1 for the critic's objective. We noticed that for different values (1.0,0.1,0.01) of this weight, there was very little change in the DTW metric. 

\subsubsection{Critic Function Implementation}
\citep{coletta2024constrained} suggests using the critic function in a Wasserstein GAN \cite{arjovsky2017wassersteingan} to enforce realism in the COP approach. Therefore, we used the WaveGAN \cite{donahue2018adversarial} implementation from \cite{alcaraz2023diffusion}. The implementation from \cite{alcaraz2023diffusion} has the gradient penalty loss, an improved training procedure to enforce the required Lipschitz continuity for the critic function. Additionally, the WaveGAN training with gradient penalty has been implemented \cite{alcaraz2023diffusion} for generating time series samples for the ECG domain. Therefore, we use their implementation to obtain the critic function for the COP baseline. The number of parameters is adjusted such that the diffusion model and the GAN model have a comparable number of parameters.

Similar to the diffusion model, we used the same architecture and training hyperparameters for all the datasets and experimental settings. Specifically, we trained the WaveGAN model with a learning rate of $10^{-4}$ for all the datasets. The input to the generator is a 48-dimensional random vector. Additionally, we ensured that the total number of parameters was equally distributed between the generator and the discriminator to prevent either of the models from overpowering the other. The WaveGAN model was trained for 5000 epochs for all the datasets. For the conditional variants, we restricted the training to 1000 epochs as the training was highly unstable.

\subsubsection{Guided DiffTime Implementation}
We use the same \timeweavercsdi\ denoiser as in the case of \projectabbr. For the guidance weight, we experimented with the following weights - $\left(0.00001, 0.0001, 0.001, 0.01, 0.1, 1.0\right)$. We chose the best guidance weight based on the constraint violation magnitude. Note that we used the same guidance weight for all individual constraints. Using PyTorch, we implemented all the constraints mentioned in Section \ref{sec:experiments}. Additionally, we augmented the Guided DiffTime approach with the DiffTime algorithm for fixed values. In other words, after each step of denoising followed by guidance update, we enforced the fixed value constraints, as specified in \citep{coletta2024constrained}. This applies to the values at argmax, argmin, 1, 24, 48, 72, and 96 timestamps.

\subsubsection{Diffusion-TS, PDM and PRODIGY Implementations}
For the PRODIGY baseline \citep{sharma2024diffuse}, we used the diffusion step-based coefficient, as the distance-based coefficient does not guarantee constraint satisfaction, even for convex constraints. For the diffusion step-based coefficient, we experimented with $\gamma_0 = 0$ and $p = [0, 1, 5]$. Note that for $p = 0$, we obtain the PDM baseline \citep{christopher2024constrained}. For both PRODIGY and PDM baselines, we used the same CVXPY solvers for projection, similar to \projectabbr. For the Diffusion-TS \citep{yuan2024diffusionts}, we obtain guidance from the constraint violation. Similar to the Guided-Difftime baseline, we experimented with scaling the guidance by $\left(0.00001, 0.0001, 0.001, 0.01, 0.1, 1.0\right)$.

\section{Broader Societal Impact}
\label{sec:impact}
This paper presents Constrained Posterior Sampling (\projectabbr), which is a novel algorithm that focuses on constrained time series sample generation. More broadly, \projectabbr\ falls under the category of targeted sample generation. \projectabbr\ has a lot of positive impacts in the time series domain, as it can help in generating targeted samples for stress-testing machine learning systems, as well as replacing private user or enterprise data with more accurate synthetic variants. As such, the societal consequences of our work are limited to those arising from improved quality of generated synthetic data. 

\flushbottom

\end{document}